\documentclass{article}






\usepackage[utf8]{inputenc} 
\usepackage[T1]{fontenc}    
\usepackage{hyperref}       
\usepackage{url}            
\usepackage{booktabs}       
\usepackage{amsfonts}       
\usepackage{nicefrac}       
\usepackage{microtype}      
\usepackage{xcolor}         
\usepackage{color,todonotes,xspace}
\usepackage{tikz-cd}
\usepackage{mathtools,amsthm}
\usepackage{thm-restate}
\usepackage{enumitem}
\setlist{leftmargin=5.5mm,partopsep=0pt,topsep=0pt,itemsep=2pt} 

\usepackage{subcaption}   
\usepackage{adjustbox}    
\usepackage{nicematrix}   
\usepackage{booktabs}     
\usepackage{caption}      
\usepackage{enumitem}
\usepackage{wrapfig}
\usepackage{amssymb}
\usepackage{placeins}
\usepackage[final]{neurips_2025}

\usepackage{algorithm}
\usepackage[noend]{algpseudocode}

\newtheorem{theorem}{Theorem}[section]
\newtheorem{lemma}[theorem]{Lemma}
\newtheorem{corollary}[theorem]{Corollary}
 
\newtheorem{definition}[theorem]{Definition}
\newtheorem{example}[theorem]{Example}
\newtheorem{proposition}[theorem]{Proposition}

\algnewcommand{\ShortFor}[1]{\State\algorithmicfor\ #1\ \algorithmicdo}

\newcommand{\cls}{\textup{\sf cls}\xspace}
\newcommand{\depth}{\textup{depth}\xspace}
\newcommand{\lab}{\textup{lab}\xspace}
\newcommand{\emb}{\textup{emb}\xspace}
\newcommand{\col}{\textup{col}\xspace}
\newcommand{\AGG}{\textsc{agg}\xspace}
\newcommand{\COM}{\textsc{com}\xspace}
\newcommand{\run}{\textup{run}\xspace}
\newcommand{\relu}{\textup{ReLU}\xspace}
\newcommand{\HASH}{\textsc{hash}\xspace}
\newcommand{\multiset}[1]{\{\!\!\{#1\}\!\!\}}

\newcommand{\down}{\mathop{\downarrow}}
\newcommand{\calA}{\mathcal{A}}
\newcommand{\calB}{\mathcal{B}}
\newcommand{\calC}{\mathcal{C}}
\newcommand{\calF}{\mathcal{F}}

\newcommand{\IR}{\textup{IR}\xspace}
\newcommand{\GNN}{\textup{GNN}\xspace}
\newcommand{\GNNs}{\textup{GNNs}\xspace}
\newcommand{\HEGNN}{\textup{HE-GNN}\xspace}
\newcommand{\HEGNNs}{\textup{HE-GNNs}\xspace}
\newcommand{\HESGNNs}{\textup{HES-GNNs}\xspace}
\newcommand{\GML}{\textup{GML}\xspace}
\newcommand{\WL}{\textup{WL}\xspace}

\newcommand{\WLIR}{\textup{WL-IR}\xspace}
\newcommand{\WLIRd}[1]{\textup{WL-IR-\ensuremath{#1}}\xspace}
\newcommand{\HESGNN}{\textup{HES-GNN}\xspace}
\newcommand{\HESGNNr}[1]{\textup{HES-GNN-\ensuremath{#1}}\xspace}
\newcommand{\HEGNNd}[1]{\textup{HE-GNN-\ensuremath{#1}}\xspace}
\newcommand{\HESGNNdr}[2]{\textup{HES-GNN-(\ensuremath{#1},\ensuremath{#2})}\xspace}
\newcommand{\GMLone}[1]{\textup{GML(\ensuremath{#1})}\xspace}

\newcommand{\tdash}{\textup{-}}

\title{Logical Expressiveness of Graph Neural Networks with Hierarchical Node Individualization}

%

\author{%
  Arie Soeteman$^1$
  \quad
  Balder ten Cate$^1$\\
$^1$Institute for Logic, Language and Computation\\
  University of Amsterdam
}

\begin{document}
\maketitle
\begin{abstract}
We propose and study \emph{Hierarchical Ego Graph Neural Networks} (\HEGNNs), an expressive extension of graph neural networks (\GNNs) with hierarchical node individualization, inspired by the Individualization-Refinement paradigm for isomorphism testing. \HEGNNs generalize subgraph\tdash\GNNs and form a hierarchy of increasingly expressive models that, in the limit, distinguish graphs up to isomorphism. We show that, over graphs of bounded degree, the separating power of \HEGNN node classifiers equals that of graded hybrid logic. This characterization enables us to relate the separating power of {\HEGNN}s to that of higher-order \GNNs, 
GNNs enriched with local homomorphism count features, and color refinement algorithms based on Individualization-Refinement. Our experimental results confirm the practical feasibility of \HEGNNs and show benefits in comparison with traditional \GNN architectures, both with and without local homomorphism count features.

\end{abstract}

\section{Introduction}
Graph neural networks (\GNNs), and specifically message-passing neural networks~\cite{gilmer2017neural,hamilton2017representation}, are a dominant approach for representation learning on graph-structured data~\cite{velivckovic2023everything, zhou2020graph}.
Since the expressive power of \GNNs within the message-passing framework is limited by the one-dimensional Weisfeiler-Leman test (\WL)~\cite{morris2019weisfeiler, xu2019powerful}, numerous architectures have been proposed with increased separating power—the ability to distinguish pairs of nodes or graphs. The separating power of deterministic isomorphism invariant \GNN architectures can be studied logically. Morris et al.~\cite{morris2019weisfeiler}
and Barcelo et al.~\cite{barcelo2020logical},
building on earlier results by Cai et al.~\cite{cai1992optimal},
gave logical characterizations of the separating power of standard message-passing \GNNs, which matches Graded Modal Logic. 
Higher-order $k$\tdash\GNNs~\cite{morris2019weisfeiler} have the separating power of $(k-1)$-\WL~\cite{morris2019weisfeiler}, which equals that of First-Order Logic with $k$ variables and counting quantifiers. This yields a hierarchy of increasingly expressive models that separate nodes in graphs of size $n$ up to isomorphism when $k \geq n$.
The logical study of \GNNs has addressed fundamental questions about their expressive power~\cite{abboud2020surprising, grohe2023descriptive,hauke2025aggregate, schonherr2025logical}, convergence behavior~\cite{adam2024almost},  decidability~\cite{Benedikt2024:decidability}, and explainability~\cite{tena2022explainable}.

In this work, we introduce \emph{Hierarchical Ego \GNNs} (\HEGNNs), a fully isomorphism-invariant graph learning model inspired by the individualization-refinement (\IR) paradigm used by graph isomorphism solvers, where node individualization is alternated with simple message-passing. We study the separating power of \HEGNN in detail through a logical lens, and make explicit connections with \IR and existing expressive \GNNs. Our main contributions are:
\begin{itemize}
    \item
We characterize the separating power of \HEGNNs with and without subgraph restrictions. Specifically, we provide logical characterizations in graded hybrid logic, situate the separating power of \HEGNNs within the \WL hierarchy and show that \HEGNNs constitute a strict hierarchy in terms of their nesting depth $d$. \HEGNNs are able to separate nodes in graphs of size $n$ up to isomorphism when $d \geq n$, like higher-order \GNNs, but at lower cost in terms of space complexity.
\item
We prove that the graph separating power of \HEGNNs with depth $d$ is lower bounded by \IR with $d$ rounds of invidualization, and show that common subgraph\tdash\GNNs are special cases of depth-$1$ \HEGNNs, so that our results characterize the separating power of subgraph models. We further identify a class of graphs with ``small ego-rank'' from which low-depth \HEGNNs can compute homomorphism counts. These results  generalize and shed new light on known relations between subgraph\tdash\GNN, higher-order \GNNs and homomorphism count vectors from tree-like graphs.

\item
We confirm empirically that \HEGNNs up to depth $2$ improve performance on ZINC-12k and can distinguish
strongly regular graphs beyond the reach of $3$\tdash\GNNs, common subgraph \GNNs and probabilistic \IR methods.
\end{itemize}
Our results add foundational insights for understanding \GNNs from a logical perspective, 
guiding the development of models with improved expressiveness.


\paragraph{Related work} 
Numerous deterministic message-passing architectures have been developed that implement isomorphism-invariant graph learning with increased separating power. Notable examples are higher-order networks~\cite{maron2019provably, morris2020weisfeiler, morris2019weisfeiler}, subgraph\tdash\GNNs~\cite{cotta2021reconstruction, you2021identity, zhang2021nested} and \GNNs augmented with homomorphism counts~\cite{barcelo2021graph, jin2024homomorphism}. These models allow for extensive analyses in terms of separating power and comparison with isomorphism invariant classifiers such as logics and the \WL hierarchy (see~\cite{morris2023weisfeiler}), and we compare them in detail with \HEGNNs in section $5$. 

In another line of work, node embeddings representing structural information are added before message-passing, often with probabilistic methods. Examples are positional encodings from Laplacian eigenvectors~\cite{dwivedi2020generalization, dwivedi2021graph, huang2024on, kanatsoulis2025learning}, random sampling~\cite{abboud2020surprising, sato2021random} and random walks ~\cite{rampavsek2022recipe}. \GNNs with randomly sampled features have complete separating power but are only isomorphism invariant in expectation: isomorphic graphs do not yield the same output but only the same distribution. Relevant to the current work are node embeddings inspired by \IR, such as random traversals of \IR trees~\cite{franks2023systematic} and Tinhofer orderings~\cite{guo2021improvingexpressivepowergraph,pellizzoni2024expressivity}. These methods use non-determinism sparingly, yielding isomorphism-invariance for \WL distinguishable graphs and compact graphs respectively. Another graph learning model in the \IR paradigm was developed by Dupty and Lee~\cite{dupty2022graph}, which includes several strong approximations that improve efficiency at the expense of invariance.

\section{Background and notation}

For a set $X$, we denote by $\mathcal{M}(X)$ the collection of all finite multisets of elements of $X$. We write $\oplus$ for vector concatenation,
and $\delta_{xy}$ for Kronecker's delta (i.e., $\delta_{xy}$ is 1 if $x=y$ and 0 otherwise).

\paragraph{Graphs} Fix a set $P$ of binary node features. By a \emph{graph} we will 
mean a triple $G=(V,E,\lab)$ where $V$ is
set of \emph{nodes}, $E\subseteq V\times V$ is a symmetric and irreflexive \emph{edge} relation, and 
$\lab:V\to 2^P$. The \emph{degree} of a node $v$ in a graph $G=(V,E,\lab)$ is $|\{u\mid (v,u)\in E\}|$ and the degree of a graph is the maximum degree of its nodes.
A \emph{pointed graph}
is a pair $(G,v)$ where $G$ is a graph 
and $v$ is a node of $G$.
An \emph{isomorphism} between graphs $G=(V,E,\lab)$ and $G'=(V',E',\lab')$
is a bijection 
$f:V\to V'$ such that
$E'=\{(f(v),f(u))\mid (v,u)\in E\}$
and such that, for all
$v\in V$,
$\lab'(f(v))=\lab(v)$. We write $G\cong G'$ if such a bijection exists.
An isomorphism 
between pointed graphs $(G,v)$ and $(G',v')$ 
is defined similarly,
with the additional requirement that $f(v)=v'$.
By a \emph{node classifier} we will mean a function $\cls$ from
pointed graphs to $\{0,1\}$ that is isomorphism invariant (i.e., such that $\cls(G,v)=\cls(G',v')$ whenever $(G,v)$ and $(G',v')$ are isomorphic).

\paragraph{Graph neural networks}
Let $D,D'\in \mathbb{N}$.
A \emph{graph neural network with input dimension $D$ and output dimension $D'$} (henceforth: $(D,D')$\tdash\GNN) is a tuple $((\COM_i)_{i=1, \dots L}, (\AGG_i)_{i=1, \dots L})$ with $L \geq  1$, where, for $1<i\leq L$,  $\COM_i: \mathbb{R}^{2D_{i}} \to \mathbb{R}^{D_{i+1}}$, $\AGG_i: \mathcal{M}(\mathbb{R}^{D_i}) \to \mathbb{R}^{D_i}$ with $D_1 = D$, $D_i \geq 1$, and $D_{L+1}=D'$. Each such \GNN induces a mapping from embeddings to embeddings. More precisely,
by a $D$-dimensional \emph{embedding}
for a graph $G=(V,E,\lab)$ we will mean a
function $\emb:V\to \mathbb{R}^D$. A $(D,D')$\tdash\GNN $\calA$ defines a function $\run_\calA$ from a graph with a $D$-dimensional embedding to a $D'$-dimensional embedding as follows:
\begin{algorithm}[H]
\begin{algorithmic}[1]
\Function{$\run_{\calA}$}{$G,\emb$}
    \State $\emb^0 := \emb$ 
    \For{$i = 1, \dots, L$}
            \State $\emb^{i} := \{v: \COM_i (\emb^{i-1}(v) \oplus \AGG_i (\multiset{\emb^{i-1}(u) \mid  (v,u)\in E}))\mid v\in V\}$
        \EndFor
    \State \Return $\emb^L$
\EndFunction
\end{algorithmic}
\end{algorithm}
\vspace{-0.15cm}
Every $(D,D')$\tdash\GNN $\calA$ with $D=|P|$ and $D'=1$ naturally 
gives rise to a node classifier $\cls_\calA$. To define it,
for a graph $G=(V,E,\lab)$, 
let its \emph{multi-hot label encoding}
be the $|P|$-dimensional embedding $\emb_G$
given by $\emb_G(v)=\langle r_1, \ldots, r_n\rangle$ with $r_i=1$ if $p_i\in \lab(v)$
and $r_i=0$ otherwise. 
Then $\cls_{\calA}(G,v)=1$
if $\run_{\calA}(G,\emb_G)(v)>0$,
and $\cls_{\calA}(G,v)=0$ otherwise.
We denote the set of all such node classifiers $\cls_{\mathcal{A}}$ also by
 \GNN. 

The above definition of \GNNs does not specify how the 
functions $\COM_i$ and $\AGG_i$ are specified. In practice, there are a few common choices for $\AGG_i$,
namely \emph{Sum}, \emph{Min}, \emph{Max} and \emph{Mean}. 
These have the expected, point-wise, definition. For instance,
\emph{Sum} maps multisets of $\mathbb{R}^D$-tuples to $\mathbb{R}^D$-tuples by summing point-wise.
As for the functions $\COM_i$, these are commonly implemented by a fully-connected feed-forward neural network (FFNN) using an activation function such as \emph{ReLU}.
Some of our results apply to \GNNs with specific aggregation and combination functions, in which case this will be explicitly indicated. Otherwise, the results apply
to all \GNNs. 

\paragraph{Graded modal logic}
The formulas of graded modal logic (\GML) are given by
the recursive grammar $\phi ::= p \mid \top \mid \phi\land\psi\mid \neg\phi\ \mid \Diamond^{\geq k}\phi$, where $p\in P$ and $k\in\mathbb{N}$. \emph{Satisfaction} of such a formula at a node $v$ in a graph $G$ (denoted: 
$G,v\models\phi$) is defined inductively, as usual, where $G,v\models p$ iff $p\in \lab(v)$, the Boolean operators have the standard interpretation, and $G,v\models\Diamond^{\geq k}\phi$
iff $|\{u\mid (v,u)\in E \text{ and } G,u\models\phi\}|\geq k$.
We use $\Diamond\phi$ as
shorthand for $\Diamond^{\geq 1}\phi$ and we use $\Box\phi$ as a shorthand for $\neg\Diamond\neg\phi$.
Every \GML-formula $\phi$ gives rise to a
node classifier $\cls_\phi$ where $\cls_{\phi}(G,v)=1$ if $G,v\models\phi$ and $\cls_{\phi}(G,v)=0$ otherwise.

\begin{example}
    Consider the \GML-formula
    $\phi = \Diamond^{\geq 2} \top \land\Box p$. Then
    $\cls_\phi(G,v)=1$ precisely
    if the node $v$ has at least two successors and all its successors are labeled $p$.
\end{example}

\textbf{Weisfeiler Leman}
Fix a  countably infinite set of colors $\calC$. A \emph{node coloring}
for a graph $G=(V,E,\lab)$ is a
map $\col:V\to\calC$. A coloring is \emph{discrete} if for all $v$ in $V$ $\col^{-1}(\col(v)) = {v}$.
 By a \emph{colored graph} we mean a
graph together with a coloring.
The Weisfeiler Leman (\WL) algorithm takes as input a colored graph $(G,\col)$ and an integer $d\geq 0$. It produces a 
new coloring for the same graph 
as follows, where \HASH is a perfect hash function onto the space of colors:
\begin{algorithm}[H]
\begin{algorithmic}[1]
\Function{\text{WL}}{$G,\col, d$}
    \State $\col^0 = \col$
    \For{$i = 1, \dots, d$}
             \State $\col^{i} := \{ v: \text{\HASH} (\col^{i-1}(v), \multiset{\col^{i-1}(u) | (v,u) \in E})\mid v\in V\}$
    \EndFor
    \State \Return $\col^d$
\EndFunction
\end{algorithmic}
\end{algorithm}
\vspace{-0.15cm}
For a graph $G=(V,E,\lab)$, by the 
\emph{initial coloring} of $G$ we
will mean the coloring $\col_G$ given by
$\col_G(v)=\HASH(\lab(v))$. We 
write $\WL(G,d)$ as a shorthand for
$\WL(G,\col_G,d)$. In other words,
$\WL(G,d)$ denotes the coloring obtained
by starting with the initial coloring and 
applying $d$ iterations of the algorithm.
Two pointed graphs $(G,v)$, $(G',v')$ are said to be \emph{WL-indistinguishable} (denoted also $(G,v)\equiv_{\WL}(G',v')$) if $v$ and $v'$ receive the same color after $d$ iterations for $d=\max\{|G|,|G'|\}$
—that is, 
if $\WL(G,d)(v) = \WL(G',d)(v')$.

The \WL algorithm gives rise to a node classifier for each $d \geq 0$ and subset $S\subseteq \calC$, where $\cls^{\WL}_{d,S}(G,v)=1$ if $\WL(G,d)(v) \in S$ and $\cls^{\WL}_{d,S}(G,v)=0$ otherwise. 
Note that, by definition, such classifiers cannot 
distinguish WL-indistinguishable pointed graphs. 

\paragraph{Three-way equivalence}
Given a collection $C$ of node classifiers
(e.g., all \GNN-based node classifiers),
we denote by $\rho(C)$ the equivalence  where $((G,v),(G',v')) \in \rho(C)$ if and only if, for all $\cls\in C$, 
$\cls(G,v)=\cls(G',v')$. In other words,
$\rho(C)$ captures the expressive
power of $C$ as measured by the ability to distinguish different inputs. 

\begin{theorem}
\label{thm:GNN=WL}
    $\rho(\GNN) = \rho(\GML) = \rho(\WL)$
\end{theorem}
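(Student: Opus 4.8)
The plan is to prove the two equalities $\rho(\WL)=\rho(\GML)$ and $\rho(\WL)=\rho(\GNN)$ separately and then combine them by transitivity. Throughout I rely on one elementary reformulation: since all three families consist of isomorphism-invariant Boolean node classifiers, an inclusion $\rho(C_1)\subseteq\rho(C_2)$ says exactly that \emph{every classifier in $C_2$ is constant on the $\rho(C_1)$-classes}, i.e. that the partition of pointed graphs induced by $C_1$ refines the one induced by $C_2$. I therefore establish each equality by two refinement arguments in opposite directions. Because all graphs are finite and \WL is run to stabilization at $d=\max\{|G|,|G'|\}$, I may freely use that the coloring stabilizes and is monotonically refining, so that equal stable colors force equal colors at \emph{every} round $i$; I may also use that at each node only finitely many colors and finitely many bounded-depth types are realized.

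For $\rho(\WL)=\rho(\GML)$ I would run two inductions. For $\rho(\WL)\subseteq\rho(\GML)$, I show by induction on the structure of a \GML-formula $\phi$ that $\WL(G,i)(v)$ determines the truth value $G,v\models\phi$ whenever $\phi$ has modal depth at most $i$: atoms are read off $\col_G$, Booleans are immediate, and for $\Diamond^{\geq k}\psi$ the number of neighbours satisfying $\psi$ is recoverable from $\multiset{\WL(G,i-1)(u)\mid (v,u)\in E}$, which is precisely what the round-$i$ hash encodes. By stabilization and monotone refinement, equal stable colors give agreement on formulas of every depth. For the converse $\rho(\GML)\subseteq\rho(\WL)$, I prove the dual refinement by induction on $i$: if $v$ and $v'$ satisfy the same \GML-formulas of modal depth $\leq i$ then $\WL(G,i)(v)=\WL(G',i)(v')$. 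The crucial step is that over finite graphs only finitely many depth-$i$ types are realized, each round-$i$ color is characterized by a depth-$i$ formula (by the forward induction), and the graded diamonds $\Diamond^{\geq k}$ count, for each such type, exactly how many neighbours realize it; hence equal depth-$(i{+}1)$ type pins down the whole neighbour-color multiset, which is what round $i{+}1$ hashes.

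For $\rho(\WL)=\rho(\GNN)$ I argue similarly. For $\rho(\WL)\subseteq\rho(\GNN)$, I show by induction on the layer index that $\WL(G,i)$ refines the embedding $\emb^i$ produced by $\run_\calA$: at $i=0$ both derive from the label encoding $\col_G$/$\emb_G$, and each layer computes $\emb^i(v)$ as a function of $\emb^{i-1}(v)$ and $\multiset{\emb^{i-1}(u)\mid (v,u)\in E}$, so equal WL colors yield equal own embedding and equal neighbour-embedding multiset, hence equal $\emb^i(v)$; thresholding the final layer preserves this. This direction holds for \emph{all} \GNNs. For the reverse $\rho(\GNN)\subseteq\rho(\WL)$, it suffices to exhibit a single witnessing \GNN whose embeddings are injective on WL colors: using \emph{Sum} aggregation together with a $\COM_i$ that injectively hashes the pair consisting of the own embedding and the summed one-hot codes of neighbours — realizable by a \relu-FFNN because only finitely many multisets arise on inputs of bounded size and degree — one obtains after $d$ layers embeddings separating any two distinct WL colors, and a final linear read-out with a threshold turns a given color-distinction into a $\{0,1\}$ classifier. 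Contrapositively, if no \GNN classifier separates $(G,v)$ and $(G',v')$, their WL colors coincide.

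I expect the two simulation/realizability directions to be the main obstacle, the refinement directions being routine structural inductions. The delicate point for $\rho(\GNN)\subseteq\rho(\WL)$ is that a fixed choice of $\AGG$ and $\COM$ can be made injective on exactly the multisets that occur; this leans on finiteness (bounded node count, degree, and color count) to encode multisets as integer vectors and to realize an injective hash by a \relu network, and on the fact that the common bound $d=\max\{|G|,|G'|\}$ lets a single network process both $G$ and $G'$. Symmetrically, for $\rho(\GML)\subseteq\rho(\WL)$ the care lies in the counting bookkeeping — matching the arbitrary thresholds $k$ of $\Diamond^{\geq k}$ to multiset multiplicities and verifying that finitely many realized types suffice. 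Combining $\rho(\WL)=\rho(\GML)$ with $\rho(\WL)=\rho(\GNN)$ by transitivity yields $\rho(\GNN)=\rho(\GML)=\rho(\WL)$.
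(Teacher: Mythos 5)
Your proof is correct: all four refinement inclusions you sketch are the standard ones, and the per-pair constructions (an injective, GIN-style \GNN for $\rho(\GNN)\subseteq\rho(\WL)$; characteristic depth-$i$ formulas plus graded diamonds for $\rho(\GML)\subseteq\rho(\WL)$) are exactly where the work lies, as you correctly anticipate. The route, however, differs from the paper's. The paper treats Theorem~\ref{thm:GNN=WL} as a citation of known results (Xu et al./Morris et al.\ for $\GNN$ vs.\ \WL, Barcel\'o et al.\ for \GML) and only supplies proofs for the two translations it actually needs later: Proposition~\ref{prop:GMLtoGNN}, a \emph{uniform} compilation of any \GML-formula into a Sum/\relu-FFNN \GNN computing the same classifier on all graphs, and Proposition~\ref{prop:GNNtoGML}, a converse translation of any \GNN into a \GML-formula valid on all graphs of bounded degree. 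These give $\rho(\GNN)=\rho(\GML)$ \emph{directly}, without passing through \WL, and they are strictly stronger than the $\rho$-equality (exact function realization, uniform over graphs in one direction), which is what lets the paper lift them to the \HEGNN/\GMLone{\down} correspondence of Theorem~\ref{thm:HEGNNvsGMLdown}. Your decomposition instead uses \WL as the common pivot for both equalities; this is the classical argument and is more economical if one only wants the separating-power statement, but it yields only per-pair (non-uniform) witnesses and would not by itself supply the uniform \GML-to-\GNN compiler the paper builds on. One small point worth being careful about if you write this out in full: in the $\rho(\GML)\subseteq\rho(\WL)$ step, the finitely many realized round-$i$ color classes must be taken over \emph{both} graphs simultaneously so that the characteristic formulas are meaningful across the pair; your appeal to finiteness handles this, but it should be said explicitly.
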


The equivalence in separating power between \GNNs and WL was proven independently by Xu et al.~\cite{xu2019powerful} and~Morris et al.~\cite{morris2019weisfeiler}. Their equivalence with GML was shown by Barcelo et al.~\cite{barcelo2020logical}.
Indeed, it was shown in~\cite{barcelo2020logical} that
for every \GML-formula, there is a 
GNN that implements the same node classifier: 

\begin{restatable}{proposition}{propGMLtoGNN}(\cite{barcelo2020logical})
\label{prop:GMLtoGNN}
    For every \GML-formula $\phi$
        there is a \GNN $\mathcal{A}$ such that
        $\cls_{\mathcal{A}}=\cls_{\phi}$. Moreover, the \GNN in question only uses Sum as aggregation and a single ReLU-FFNN as combination function.
\end{restatable}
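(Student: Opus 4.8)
The plan is to proceed by induction on the structure of $\phi$, compiling the formula into a \GNN that computes, in dedicated coordinates of the node embeddings, the truth values of all of its subformulas. Concretely, let $\phi_1,\dots,\phi_m=\phi$ enumerate the subformulas of $\phi$ in an order such that every proper subformula of $\phi_i$ precedes $\phi_i$. I would build a \GNN with $L=m$ layers operating on $m$-dimensional embeddings (the first layer lifting the $|P|$-dimensional multi-hot input into this space, the last one projecting down to a single coordinate), maintaining the invariant that after layer $i$, the $j$-th coordinate of $\emb^i(v)$ equals $\indicator{G,v\models\phi_j}$ for every $j\le i$, while the coordinates already computed are copied forward unchanged. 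Since truth values lie in $\{0,1\}$ and \relu is the identity on nonnegative reals, copying a coordinate is just the identity branch of the combination FFNN.

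Each layer $i$ adds coordinate $i$ using the node's own embedding, the Sum-aggregate of its neighbors, or both. I would realize the Boolean cases locally from $\emb^{i-1}(v)$: $\top$ is the constant $1$; atoms $p$ are carried from the input; negation $\neg\phi_j$ is $\relu(1-x_j)$; and conjunction $\phi_j\land\phi_k$ is $\relu(x_j+x_k-1)$, which evaluates to $1$ exactly when both conjuncts are $1$. For the graded modality $\Diamond^{\ge k}\phi_j$ I would use that Sum aggregation returns, in coordinate $j$, the integer $c=|\{u : (v,u)\in E,\ G,u\models\phi_j\}|$; the indicator $\indicator{c\ge k}$ is then computed exactly by $\relu(c-k+1)-\relu(c-k)$, which equals $1$ for integers $c\ge k$ and $0$ otherwise. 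Each of these is a single \relu-FFNN (one hidden layer suffices), and only Sum aggregation is used. The reliance on Sum is essential: it is what lets the network recover neighbor \emph{counts}, whereas Mean or Max would destroy them.

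Composing these layers and reading off coordinate $m$ after the final layer yields $\run_{\calA}(G,\emb_G)(v)=\indicator{G,v\models\phi}$, so $\run_{\calA}(G,\emb_G)(v)>0$ iff $G,v\models\phi$, i.e. $\cls_{\calA}=\cls_\phi$. The main obstacle, and the only genuinely delicate point, is realizing the \emph{discontinuous} graded-threshold semantics exactly with continuous \relu functions. This succeeds only because every quantity fed into a threshold gadget is a nonnegative integer: the $0/1$ invariant on the subformula coordinates, preserved inductively through the Boolean gadgets, guarantees that the Sum-aggregate is integral, so the piecewise-linear gadget $\relu(c-k+1)-\relu(c-k)$ lands exactly on $\{0,1\}$ rather than merely approximating a step function. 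Care must also be taken that each $\COM_i$ simultaneously copies the $i-1$ already-computed coordinates and computes the new one within a single FFNN, which is routine since all the operations involved are affine-plus-\relu.
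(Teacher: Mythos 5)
Your proposal is correct and matches the paper's proof in all essentials: one embedding coordinate per subformula, Sum aggregation to recover neighbor counts, and exact \relu gadgets for negation, conjunction, and the graded threshold. The only cosmetic differences are that the paper uses one layer per level of operator depth (computing all subformulas of that depth in parallel) rather than one layer per subformula, and it realizes the threshold as $\relu(1-\relu(k-c))$ instead of your equally valid $\relu(c-k+1)-\relu(c-k)$.
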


The converse does not hold in general, but it does when we bound the degree of the input graph:

\begin{restatable}{proposition}{propGNNtoGML}
\label{prop:GNNtoGML}
 Let $\mathcal{A}$ be a $(D,D')$\tdash\GNN with $D=|P|$, let $N>0$, and let \[X=\{\run_{\mathcal{A}}(G,\emb_G)(v)\mid\text{ $G=(V,E,\lab)$ is a graph of degree at most $N$ and $v\in V$}\}~.\]
 In other words $X\subseteq \mathbb{R}^{D'}$ is the set of all node embeddings that $\mathcal{A}$ can produce when run on a graph of degree at most $N$. Then   $X$ is a finite set, and 
     for each $\mathbf{x}\in X$, there is a \GML-formula $\phi_{\mathbf{x}}$ such that for every pointed graph $(G,v)$ of degree at most $N$, it holds that $G,v\models\phi_{\mathbf{x}}$ iff $\run_{\mathcal{A}}(G,\emb_{G})(v)=\mathbf{x}$.
    In particular, for each \GNN-classifier $\cls_{\mathcal{A}}$  there is a \GML-formula $\phi$ such that
    $\cls_{\phi}(G,v)=\cls_{\mathcal{A}}(G,v)$ for all pointed graphs $(G,v)$ of degree at most $N$.
\end{restatable}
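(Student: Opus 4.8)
The plan is to prove the finiteness of $X$ and the existence of the formulas $\phi_{\mathbf{x}}$ simultaneously, by induction on the layers of $\mathcal{A}$, exploiting that on a graph of degree at most $N$ each node aggregates over a multiset of at most $N$ neighbor embeddings. For $0\le i\le L$, let $X_i$ denote the set of all intermediate embeddings $\emb^i(v)$ that can arise when $\mathcal{A}$ is run on some graph $G$ of degree at most $N$ at some node $v$; thus $X=X_L$. I would prove by induction that each $X_i$ is finite and that for each $\mathbf{y}\in X_i$ there is a \GML-formula $\phi_{i,\mathbf{y}}$ characterizing exactly the nodes with $\emb^i(v)=\mathbf{y}$ on bounded-degree graphs, with the $\phi_{i,\mathbf{y}}$ for distinct $\mathbf{y}\in X_i$ mutually exclusive. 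For the base case $i=0$, the embeddings are multi-hot label encodings, so $X_0\subseteq\{0,1\}^{|P|}$ is finite, and $\phi_{0,\mathbf{y}}$ is the conjunction of literals $\bigwedge_{j:y_j=1}p_j\wedge\bigwedge_{j:y_j=0}\neg p_j$.

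For the inductive step, the key observation is that since the input graph has degree at most $N$ and every neighbor of $v$ carries a layer-$(i-1)$ embedding lying in the finite set $X_{i-1}$, the neighbor multiset $M=\multiset{\emb^{i-1}(u)\mid (v,u)\in E}$ is completely determined by the tuple of counts $(c_{\mathbf{z}})_{\mathbf{z}\in X_{i-1}}$, where $c_{\mathbf{z}}=|\{u:(v,u)\in E,\ \emb^{i-1}(u)=\mathbf{z}\}|$, subject to $\sum_{\mathbf{z}}c_{\mathbf{z}}\le N$. There are finitely many such count tuples, so $\AGG_i$ takes only finitely many values; since the own embedding $\emb^{i-1}(v)$ ranges over the finite $X_{i-1}$, the input $\emb^{i-1}(v)\oplus\AGG_i(M)$ to $\COM_i$ ranges over a finite set, and hence $X_i$ is finite. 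To build the formulas I would express each count condition in \GML using graded modalities: ``exactly $c$ neighbors satisfy $\psi$'' is $\Diamond^{\geq c}\psi\wedge\neg\Diamond^{\geq c+1}\psi$, and the mutual exclusivity of the $\phi_{i-1,\mathbf{z}}$ guarantees these counts do not interfere. For a fixed own-value $\mathbf{z}_0\in X_{i-1}$ and count tuple $(c_{\mathbf{z}})$ I form $\phi_{i-1,\mathbf{z}_0}\wedge\bigwedge_{\mathbf{z}}(\Diamond^{\geq c_{\mathbf{z}}}\phi_{i-1,\mathbf{z}}\wedge\neg\Diamond^{\geq c_{\mathbf{z}}+1}\phi_{i-1,\mathbf{z}})$, and $\phi_{i,\mathbf{y}}$ is the finite disjunction of these conjunctions over all pairs $(\mathbf{z}_0,(c_{\mathbf{z}}))$ whose induced value $\COM_i(\mathbf{z}_0\oplus\AGG_i(M))$ equals $\mathbf{y}$ (disjunction and $\bot$ being the usual \GML abbreviations). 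Mutual exclusivity of the new formulas follows since distinct $\mathbf{y}$ come from disjoint sets of pairs.

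Once the induction is complete, $X=X_L$ is finite and each $\phi_{\mathbf{x}}:=\phi_{L,\mathbf{x}}$ has the stated property. For the classifier statement, recall that $\cls_{\mathcal{A}}(G,v)=1$ iff $\run_{\mathcal{A}}(G,\emb_G)(v)>0$; since $D'=1$ I take $\phi:=\bigvee_{\mathbf{x}\in X,\ \mathbf{x}>0}\phi_{\mathbf{x}}$, which agrees with $\cls_{\mathcal{A}}$ on all bounded-degree pointed graphs.

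I expect the main obstacle to lie in the inductive step's reliance on the finiteness and mutual exclusivity of the previously constructed formulas: one must argue carefully that on bounded-degree graphs every node, and in particular every neighbor of $v$, genuinely receives a value in $X_{i-1}$, so that the count tuple faithfully reconstructs the aggregated multiset $M$ and thereby determines the layer-$i$ value deterministically. This deterministic dependence of $\emb^i(v)$ on the pair (own value, neighbor count tuple) is exactly what makes the disjunctive formula well-defined, and it is where the degree bound $N$ is essential: without it, both the finiteness of $X_i$ and the \GML-expressibility of the aggregation would fail.
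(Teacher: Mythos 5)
Your proposal is correct and follows essentially the same route as the paper's proof: induction over layers, finiteness of each $X_i$ via the degree bound, exact neighbor counts encoded as $\Diamond^{\geq c}\phi\wedge\neg\Diamond^{\geq c+1}\phi$, and a disjunction over all (own value, neighbor multiset) pairs mapping to the target embedding. Your explicit tracking of mutual exclusivity of the $\phi_{i,\mathbf{y}}$ is a point the paper leaves implicit but which is indeed what makes the count-based reconstruction of the aggregated multiset sound.
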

Proofs for these two propositions are provided
in the appendix, as we will build on them.

\section{Hierarchical Ego \GNNs}

In this section, we introduce and study the basic model of  Hierarchical Ego \GNNs ({\HEGNN}s). In the next
section, we will further refine the model by means of subgraph restrictions.

\paragraph{Hierarchical Ego \GNNs}
\begin{itemize}
    \item 
A $(D,D')$\tdash\HEGNN
of nesting depth $0$
is simply a $(D,D')$\tdash\GNN.
\item 
A $(D,D')$\tdash\HEGNN of nesting depth $d>0$ is a pair $(\calB, \calC)$ where
$\calB$  is a $(D+1,D'')$\tdash\HEGNN of nesting depth $d-1$ and
$\calC$ is a $(D+D'',D')$\tdash\GNN.
\end{itemize}
A \HEGNN $\calA = (\calB, \calC)$ defines a function $\run_{\calA}$ from embedded graphs to embeddings as follows:
\begin{algorithm}[H]
\begin{algorithmic}[1]
\Function{$\run_{\calA}$}{$G,\emb$} \hfill 
    \For{each node $v$ of $G$}
       \State $\emb'(v) := \emb(v) \oplus \run_{\calB}(G, \{u:\emb(u)\oplus \delta_{uv} \mid u\in V\})(v)$
    \EndFor
    \State \Return $\run_{\calC}(G, \emb')$
\EndFunction
\end{algorithmic}
\end{algorithm}
\vspace{-0.15cm}
In other words, for each node $v$, we run $\calB$ after extending the 
node embeddings to uniquely mark $v$, and concatenate the resulting embedding for $v$ to its original embedding. After constructing a new embedding for each $v$ we run $\calC$. On a graph with $n$ nodes a \HEGNN with depth $d$ and $l$ layers at each depth generates $O(n^d)$ graphs with different unique labelings. Since an $l$ layer \GNN sends at most $l\cdot n^2$ messages, a \HEGNN with depth $d$ and $l$ layers at each depth sends $O(l \cdot n^{d+2})$ messages. Applying the individualizations in depth-first order, $(d+1)\cdot n$ node embeddings are stored, which is bounded by $n^2$. Naturally the time and space complexity depend on the embedding dimension and the chosen aggregation and combination functions. Just as in the case of \GNNs, each $(D,D')$\tdash\HEGNN $\calA$ with $D=|P|$ and $D'=1$ naturally gives rise to a node classifier $\cls_{\calA}$. 
\begin{wrapfigure}[12]{r}{0.32\textwidth}
\vspace{2.0mm}
\begin{tabular}{cc}
\begin{tikzcd}[column sep={0.5cm,between origins}, row sep={0.5cm,between origins}, cells={nodes={draw, circle, minimum size=4mm,inner sep=0pt}}]
    & v \arrow [dash,r] \arrow[dash,ld] \arrow [dash,ddr] & ~ \arrow[dash,rd] &  \\
    ~ \arrow[dash,rd]&  &  & ~ \\
    & ~ \arrow[dash,r] & ~ \arrow[dash,ru] & 
\end{tikzcd}
& 
\begin{tikzcd}[column sep={0.5cm,between origins}, row sep={0.5cm,between origins}, cells={nodes={draw, circle, minimum size=4mm,inner sep=0pt}}]
    & v' \arrow [dash,dd] \arrow[dash,ld]  \arrow [dash,ddr] & ~ \arrow[dash,rd] &  \\
    ~ \arrow[dash,rd]&  &  & ~ \\
    & ~ & ~ \arrow[dash,uu] \arrow[dash,ru] & 
\end{tikzcd}
\\ \\
$(G,v)$ & $(G',v')$
\end{tabular}
\caption{Two non-isomorphic pointed graphs that are WL-indistinguishable.}
\label{fig:example}
\end{wrapfigure}
\begin{example}
Let $\calB$ be a $3$ layer $(2,2)$-\GNN with element-wise sum as aggregation and the identity map as combination. Let $\calC$ be a trivial $(2,2)$-\GNN that doesn't change the input. Then $\calA = (\calB, \calC)$ is a $(1,2)$-\HEGNN of depth $1$ such that $\run_{\calA}(G,v) \neq \run_{\calA}(G',v')$ for the graphs in figure \ref{fig:example}.
\end{example}
This shows that \HEGNN with nesting depth $1$ has strictly more separating power than \GNN. Let \HEGNNd{d} denote all classifiers $\cls_{\calA}$ where $\calA$ is a \HEGNN of nesting depth $d$.
As we will see below, $\HEGNNd{d}$ in fact forms an 
infinite hierarchy with respect to separating power for increasing values of $d$.
To show this, we first give a logical characterization of $\HEGNNd{d}$.
\paragraph{Graded hybrid logic}
Graded hybrid logic (henceforth $\GML(\down)$) extends \GML with 
\emph{variables} and the \emph{variable binder} $\down$. To be precise, the formulas
of $\GML(\down)$ are generated by the 
 grammar
$\phi ::= p \mid x \mid \neg\phi\mid \phi\land\psi \mid \Diamond^{\geq k}\phi \mid \down x.\phi$. We restrict
attention to \emph{sentences}, i.e., 
formulas without free variables. The definition of satisfaction for a GML-formula at a node $v$ of a graph $G=(V,E,\lab)$, extends
naturally to $\GML(\down)$-sentences as 
follows: $G,v\models\down x.\phi$
if $G[x\mapsto v],v\models\phi$, where $G[x\mapsto v]$ denotes a copy of $G$ in which $x$ is treated as a binary node feature true only at $v$.
By the \emph{$\down$-nesting-depth} of a $\GML(\down)$-sentence, we will mean 
the maximal nesting of $\down$ operators in the sentence. We denote with \GML($\down^d$) all sentences with maximal $\down$-nesting-depth $d$.


\begin{example}
    The sentence $\phi = \down x.\Diamond\Diamond\Diamond x$, which has $\down$-nesting-depth 1,  is satisfied
    by a pointed graph $(G,v)$ precisely if $v$ lies on a triangle.
    In particular, considering the example in Figure~\ref{fig:example}, 
    $\phi$ distinguishes $(G,v)$ from $(G',v')$. This
also shows that $\GML(\down)$ is more expressive than $\GML$.
\end{example}
\begin{example}    
    Building on the above example, the sentence $\psi = \down x.\Diamond(\phi\land\Diamond (\phi\land \Diamond \phi \land \Diamond(\phi\land x)))$, which has $\down$-nesting-depth 2, is satisfied by $(G,v)$ precisely if $v$ lies (homomorphically) on a cycle of length 4 consisting of nodes that each lie on a triangle.
\end{example}
In the literature, hybrid logics often include an $@$ operator, where $@_x\phi$ states that $\phi$ holds at the world denoted by the variable $x$.
Over undirected graphs, however,
every $\GML(\down,@)$-sentence is already
equivalent to a $\GML(\down)$-sentence
of the same $\down$-nesting-depth.

The connection between $\GNN$s and $\GML$ described in the previous section extends to a connection between $\HEGNN$s and $\GML(\down)$:

\begin{restatable}{theorem}{thmNEGNNvsGMLdown}
\label{thm:HEGNNvsGMLdown}
$\rho(\HEGNN)=\rho(\GML(\down))$. Moreover, for $d\geq 0$, $\rho(\HEGNNd{d})=\rho(\GML(\down^d))$.
\end{restatable}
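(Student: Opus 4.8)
The plan is to prove both equalities simultaneously by induction on the nesting depth $d$, lifting the two bounded-degree translations of Propositions~\ref{prop:GMLtoGNN} and~\ref{prop:GNNtoGML} from the modal/\GNN level to the hybrid/\HEGNN level. The induction hypothesis I would carry is the conjunction of two directions, stated at the \emph{embedding} level rather than merely at the classifier level, since the inner network's output is concatenated and fed forward. \textbf{(L1)} For every $\GML(\down^d)$-formula $\phi$ over $P$ together with a finite set $X$ of free variables, there is a depth-$d$ \HEGNN with input dimension $|P|+|X|$ whose run, on the embedding that encodes the node labels and a valuation of $X$ (each variable being a marker true at a single node), has a coordinate that is positive exactly at the nodes satisfying $\phi$. \textbf{(L2)} For every depth-$d$ \HEGNN $\calA$ and every degree bound $N$, the set of node embeddings $\calA$ produces on graphs of degree at most $N$ (with marker-valued input features) is finite, and each achievable vector $\mathbf{x}$ is characterized by a $\GML(\down^d)$-formula $\phi_{\mathbf{x}}$ over $P$ and the variables naming the markers.

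The base case $d=0$ is exactly Propositions~\ref{prop:GMLtoGNN} and~\ref{prop:GNNtoGML} (the markers are then just extra propositions), with Theorem~\ref{thm:GNN=WL} supplying the starting equivalence. For the inductive step I would exploit the recursive shape $\calA=(\calB,\calC)$. The individualization step that feeds $\{u:\emb(u)\oplus\delta_{uv}\mid u\in V\}$ into the depth-$(d{-}1)$ inner network $\calB$ is precisely the semantics of a $\down$-binder: running $\calB$ after marking $v$ realizes the graph $G[x\mapsto v]$ of the binder clause. Thus, for (L2), the hypothesis applied to $\calB$ (whose input carries one additional marker dimension, and whose runs are still over degree-$\leq N$ graphs, since marking leaves the edges unchanged) yields finitely many achievable inner vectors, each described by a $\GML(\down^{d-1})$-formula with a fresh free variable $x$ for the marker; prefixing $\down x$ turns each into a $\GML(\down^{d})$-formula describing the concatenated embedding $\emb'$. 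Since the original labels and these inner values range over finite sets, $\emb'$ takes finitely many values, each definable; applying Proposition~\ref{prop:GNNtoGML} to the outer \GNN $\calC$ over this enlarged, formula-definable feature set and substituting the $\down x.\theta$ formulas back for the fresh features characterizes the output of $\calA$ by $\GML(\down^{d})$-formulas, with depth accounting giving exactly $d$. The construction for (L1) is the mirror image: a $\GML(\down^{d})$-sentence is a \GML-formula over atoms that are either propositions or subformulas $\down x.\psi$ with $\psi\in\GML(\down^{d-1})$, so I would realize each $\down x.\psi$ by individualizing and invoking the inner \HEGNN from the hypothesis for $\psi$, reusing Proposition~\ref{prop:GMLtoGNN} for the plain-\GML glue, and assembling these as $\calB$ with the surrounding graded-modal skeleton as $\calC$.

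With (L1) and (L2) in hand, the separating-power equalities follow in the usual way. (L1) shows that every $\GML(\down^d)$-sentence is matched by a depth-$d$ \HEGNN classifier, so $\rho(\HEGNNd{d})\subseteq\rho(\GML(\down^d))$. For the converse, given any two pointed graphs distinguished by some depth-$d$ \HEGNN, their joint maximum degree is a finite bound $N$, so (L2) supplies a $\GML(\down^d)$-formula that also distinguishes them, giving $\rho(\GML(\down^d))\subseteq\rho(\HEGNNd{d})$. Intersecting over all $d$ then yields the unstratified claim $\rho(\HEGNN)=\rho(\GML(\down))$.

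The hard part, I expect, will be the dimension/variable bookkeeping in the inductive step of (L2): keeping the correspondence between marker dimensions and free variables exact, ensuring that finiteness of the achievable inner embeddings genuinely survives the extra marker coordinate, and verifying that substituting the $\down x.\theta$ subformulas into the \GML description of $\calC$ raises the $\down$-nesting-depth by exactly one and not more. A secondary subtlety is that Proposition~\ref{prop:GNNtoGML} is only available for bounded degree, so (L2) must be stated relative to a degree bound and the global separating-power claim reduced to the two-graph (hence bounded-degree) case, exactly as in the proof of Theorem~\ref{thm:GNN=WL}.
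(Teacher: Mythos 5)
Your proposal is correct and follows essentially the same route as the paper: induction on nesting depth $d$, lifting Propositions~\ref{prop:GMLtoGNN} and~\ref{prop:GNNtoGML}, with the individualization step of $(\calB,\calC)$ matched to the $\down$-binder, fresh propositions substituted for the maximal $\down$-subformulas, and the bounded-degree restriction discharged by reducing separation to two fixed graphs. The only presentational difference is that the paper strengthens the induction to tuples of formulas and proves an explicit concatenation lemma (Lemma~\ref{lem:GNN_can_be_copied} and Corollary~\ref{cor:HE-GNN_can_be_copied}) to combine the several inner networks into the single $\calB$, and uses a canonical-form normalization in place of your explicit free-variable bookkeeping — both of which your ``assembling these as $\calB$'' step implicitly requires.
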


The proof, given in the appendix,
is along the lines of Propositions~\ref{prop:GMLtoGNN} and~\ref{prop:GNNtoGML}. Indeed, there
is a translation from $\GML(\down)$-sentences to $\HEGNN$s, and, conversely, over bounded-degree inputs, 
there is a translation from $\HEGNN$s to 
$\GML(\down)$-sentences. Both translations preserve nesting depth. In Section~\ref{sec:comparison}, we put this logical characterization to use to obtain a number of further results. 


\section{Hierarchical Ego \GNNs with subgraph restriction}

Since \HEGNNs generate an exponential number of graphs in $d$, we make the models more manageable by restricting subgraphs to a fixed radius $r$ around the uniquely marked node, in line with the common approach for subgraph\tdash\GNNs (\cite{frasca2022understanding, you2021identity, zhang2023complete, zhang2021nested}).
\paragraph{Hierarchical Ego Subgraph\tdash\GNNs}
\begin{itemize}
    \item 
A $(D,D')$\tdash\HESGNN
of depth $0$
is simply a $(D,D')$\tdash\GNN.
\item 
A $(D,D')$\tdash\HESGNN of depth $d>0$ is a triple $(\calB, \calC, r)$ where
$\calB$  is a $(D+1,D'')$\tdash\HESGNN of depth $d-1$,
$\calC$ is a $(D+D'',D')$\tdash\GNN, and $r$ is a positive integer.
\end{itemize}

Given a graph $G=(V,E,\lab)$, a node $v\in V$, and a positive integer $r$, we will
denote by $G_v^r$ the induced subgraph of $G$ containing the radius-$r$ neighborhood 
of $v$.
\begin{algorithm}[H]
\begin{algorithmic}[1]
\Function{$\run_{\calA}$}{$G,\emb$}\hfill \emph{\# For $\mathcal{A}=(\mathcal{B},\mathcal{C},r)$}
    \For{each node $v$ of $G$}
         \State $\emb'(v) := \emb(v) \oplus \run_{\calB}(G_v^r, \{u:\emb(u)\oplus \delta_{uv} \mid u\in V\})(v)$
    \EndFor
    \State \Return $\run_{\calC}(G, \emb')$
\EndFunction
\end{algorithmic}
\end{algorithm}
\vspace{-0.15cm}
There is only one change compared to the previous version: $G$ got replaced by $G^r_v$ on line 3. Now if a \HESGNN with depth $d$, radius $r$ and at most $l$ layers is applied to a graph with degree $k$, it
generates $O(n \cdot (k^r+1)^{max(0,d-1)})$ graphs, sends at most $l\cdot n^2$ times as many messages 
and stores $n+d\cdot(k^r+1)$ embeddings. Each $(D,D')$\tdash\HESGNN $\calA$ again gives rise to a node classifier $\calA$. We denote with \HESGNNr{r} the set of such classifiers for radius $r$ and with \HESGNNdr{d}{r} the restriction to nesting depth $d$.


\paragraph{Graded hybrid subgraph logic}
\GMLone{\downarrow,W} further extends \GMLone{\down} with a \emph{``within'' operator} $W^r$ inspired by temporal logics with forgettable past~\cite{alur2007:nested, tencate2010:transitive}. The formulas of \GMLone{\downarrow,W} are generated by $\phi ::= p \mid x \mid \neg\phi \mid \phi \wedge \psi \mid \Diamond^{\geq k}\phi \mid \down x.\phi \mid W^r \phi$
. The definition of satisfaction for \GMLone{\downarrow}-sentences is extended by letting $G,v \models W^r \phi$ if $G^r_v, v \models \phi$. 
We will use $\down_{W^r} x .\phi$ as a shorthand for $\down x.W^r \phi$. We denote with \GML$(\down_W)$ the fragment of 
\GML$(\down,W)$ in which $\down$ and $W$ can only be used in this specific combination with each other, and we denote with \GML$(\down^d_{W^r})$ (for specific integers $d$ and $r$), the further fragment with radius $r$ and where $\down_{W^r}$ can be nested at most $d$ times.
In terms of separating power, \GMLone{\downarrow_W} is equivalent to \GMLone{\downarrow}, but pairing variable binders with subgraph restrictions of a specific radius serves to decrease expressive power. The connection between \HEGNN and \GMLone{\downarrow} we established in Theorem~\ref{thm:HEGNNvsGMLdown} now extends to the case with subgraph restrictions:
\looseness=-1

\begin{restatable}{theorem}{thmNESGNNequalsGMLdownW}
\label{thm:HESGNN=GML(downarrow,W)}~
\begin{enumerate}
    \item
    $\rho(\HESGNN) = \rho(\GMLone{\down_W}) = \rho(\GMLone{\down}) = \rho(\HEGNN)$. 
\item $\rho(\HESGNNr{r})=\rho(\GMLone{\down_{W^r}})$. Moreover, 
$\rho(\HESGNNdr{d}{r}) = \rho(\GMLone{\down^d_{W^r}})$. 
    \end{enumerate}
\end{restatable}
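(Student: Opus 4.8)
The plan is to prove the theorem in two stages: first establish a tight correspondence between \HESGNNs and $\GMLone{\down_W}$ that tracks both nesting depth and radius, and then relate $\GMLone{\down_W}$ back to $\GMLone{\down}$ (and hence, via Theorem~\ref{thm:HEGNNvsGMLdown}, to \HEGNN). The backbone is a radius-aware analogue of Theorem~\ref{thm:HEGNNvsGMLdown}: there is a truth-preserving translation from $\GMLone{\down_W}$-sentences to \HESGNNs, and, over graphs of bounded degree, a translation from \HESGNNs back to $\GMLone{\down_W}$-sentences, both preserving the nesting structure. The single new point compared to Theorem~\ref{thm:HEGNNvsGMLdown} is that the subgraph restriction $G_v^r$ appearing on line~3 of the \HESGNN semantics corresponds exactly to the operator $W^r$: running the inner sub-network $\calB$ on $G_v^r$ is matched by evaluating the body of $\down_{W^r}x.\psi$ in $G_v^r$. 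Hence the translations of Theorem~\ref{thm:HEGNNvsGMLdown} carry over almost verbatim, merely threading a radius through each level, sending a depth-$d$, radius-$r$ object on one side to a depth-$d$, radius-$r$ object on the other. This yields the indexed equation $\rho(\HESGNNdr{d}{r}) = \rho(\GMLone{\down^d_{W^r}})$, and, reading the translation level by level with possibly different radii, the mixed-radius identity $\rho(\HESGNN) = \rho(\GMLone{\down_W})$.

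The remaining identities follow by bookkeeping with unions. Using $\rho(\bigcup_i C_i) = \bigcap_i \rho(C_i)$ (immediate from the definition of $\rho$), the ``moreover'' of Part~2 follows by taking the union over $d$: since $\GMLone{\down^d_{W^r}} \subseteq \GMLone{\down^{d+1}_{W^r}}$ with union $\GMLone{\down_{W^r}}$, and likewise $\bigcup_d \HESGNNdr{d}{r} = \HESGNN\tdash r$, we obtain $\rho(\HESGNN\tdash r) = \bigcap_d \rho(\HESGNNdr{d}{r}) = \bigcap_d \rho(\GMLone{\down^d_{W^r}}) = \rho(\GMLone{\down_{W^r}})$. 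For Part~1 it then remains only to show $\rho(\GMLone{\down_W}) = \rho(\GMLone{\down})$, since the outer equalities are the mixed-radius identity above together with Theorem~\ref{thm:HEGNNvsGMLdown}. For the easy inclusion, given $\phi\in\GMLone{\down}$ separating $(G,v)$ from $(G',v')$, set $n=\max\{|G|,|G'|\}$ and replace every $\down x.\psi$ by $\down_{W^n} x.\psi$: as these formulas depend only on the connected component of the evaluation node and $G_w^n$ equals that component whenever $n\ge|G|$, each inserted $W^n$ is semantically trivial on $G,G'$, so the distinction is preserved. Conversely, I eliminate the $W$ operators by relativization.

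The main obstacle is this last relativization, because nested $W^r$ operators restrict to subgraphs of already-restricted subgraphs, so ``within radius $r$'' must be measured relative to the current region rather than to the ambient graph. I handle it by structural induction carrying a context of the active centre-variables $x_1,\dots,x_k$ (those whose $W^r$-scope we are inside) together with their radii, and by defining in $\GMLone{\down}$ membership predicates $\mathrm{in}_j$ for the $j$-th restricted subgraph $H_j$. Exploiting symmetry of $E$, bounded relativized reachability is expressible: with $\mathrm{reach}^{R}_{\le 0}(t)=t$ and $\mathrm{reach}^{R}_{\le m}(t)=t\vee\Diamond(R\wedge\mathrm{reach}^{R}_{\le m-1}(t))$, one sets $\mathrm{in}_0=\top$ and $\mathrm{in}_j=\mathrm{in}_{j-1}\wedge\mathrm{reach}^{\,\mathrm{in}_{j-1}}_{\le r_j}(x_j)$, so that $\mathrm{in}_j$ holds exactly at the nodes of $H_j$. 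The translation leaves atoms, variables and Booleans unchanged, sends $\Diamond^{\ge\ell}\psi$ to $\Diamond^{\ge\ell}(\mathrm{in}_k\wedge\psi^{*})$ (restricting the counting to the current region, which is sound because induced subgraphs preserve adjacency), and sends $\down_{W^{r}}y.\psi$ to $\down y.\psi^{*}$ while extending the context by $y$ at radius $r$. A routine induction then shows $G,w\models\phi^{*}$ iff $H_k,w\models\phi$ for every $w\in H_k$, which at the top level ($k=0$, $H_0=G$) gives the desired equivalence and completes the proof.
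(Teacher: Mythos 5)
Your proposal is correct and follows essentially the same route as the paper: depth- and radius-preserving translations in both directions (Theorems~\ref{thm:uniform_GML_to_HESGNN} and~\ref{thm:HESGNN_to_GML_down_W}), union bookkeeping over $d$ and $r$, the inclusion of $\GMLone{\down}$-separability into $\GMLone{\down_W}$-separability by inserting semantically trivial $W^n$ operators for $n$ the size of the larger graph, and elimination of $W$ by relativizing the modalities to bounded-distance predicates (Lemma~\ref{lem:GML_with_r_is_not_stronger}). Your $\mathrm{in}_j$ predicates, which measure distance \emph{within} the already-restricted region, are a slightly more explicit rendering of the paper's recursive relativization via $\xi = x_i \vee \bigvee_{1\leq j\leq r}\Diamond_{j}\, x_i$, and correctly address the nested-subgraph subtlety that the paper's lemma treats more tersely.
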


This is established again through a uniform translation from \GMLone{\down^d_{W^r}} sentences to \HESGNNdr{d}{r} classifiers and a converse uniform translation over bounded degree inputs from \HESGNNdr{d}{r} classifiers to \GMLone{\down^d_{W^r}} sentences.

The separating power of \HESGNNdr{d}{r} classifiers strictly increases with $d$ and $r$. Note that $\rho(X)\subsetneq \rho(Y)$ means that $X$ has strictly more separating power than $Y$:
\begin{restatable}{theorem}{thmstricthierarchyr}
\label{thm:NESGNNstrict_hierarchy}
    For $d \geq 1, r \geq 0$, $\rho(\HESGNNdr{d}{r+1}) \subsetneq \rho(\HESGNNdr{d}{r})$. 
\end{restatable}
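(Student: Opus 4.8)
The plan is to work entirely on the logical side, using the characterization $\rho(\HESGNNdr{d}{r}) = \rho(\GMLone{\down^d_{W^r}})$ from Theorem~\ref{thm:HESGNN=GML(downarrow,W)}. Recall that $\rho(\HESGNNdr{d}{r+1})\subseteq\rho(\HESGNNdr{d}{r})$ means exactly that radius $r+1$ is at least as separating as radius $r$ (anything the latter distinguishes, the former does too), whereas strictness asks for a single pair of pointed graphs that agree on all $\GMLone{\down^d_{W^r}}$-sentences but are separated by some $\GMLone{\down^d_{W^{r+1}}}$-sentence. So the proof splits into a containment part and a separation part.

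For containment, first I would show that every $\GMLone{\down^d_{W^r}}$-sentence is equivalent to a $\GMLone{\down^d_{W^{r+1}}}$-sentence by \emph{relativization}. The idea is that the radius-$r$ ball around the node bound by $\down$ can be carved out inside the radius-$(r+1)$ ball: writing $\alpha := \bigvee_{i=0}^{r}\Diamond^i x$ for ``within distance $r$ of the $x$-marked node'' (where $\Diamond^i$ denotes $i$ nested diamonds), a single $\down_{W^r} x.\psi$ becomes $\down_{W^{r+1}} x.\psi^{\alpha}$, where $\psi^{\alpha}$ relativizes every modality of $\psi$ to $\alpha$. Since geodesics of length $\le r$ to the marked node stay inside the radius-$(r+1)$ ball, $\alpha$ picks out exactly the radius-$r$ ball and the induced edge sets agree, so the relativized evaluation reproduces the semantics of $W^r$. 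Relativization preserves $\down$-nesting-depth (it adds no binders), so we stay in $\GMLone{\down^d_{W^{r+1}}}$.

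The hard part will be the \emph{nested} case $d\ge 2$: an inner $\down_{W^r} y.\theta$ restricts the already-restricted graph $G^r_v$, so the relevant region is the confined ball $(G^r_v)^r_w$, whose distances are measured \emph{inside} $G^r_v$ rather than in $G$, and one cannot recover membership in the outer ball from distance-to-$x$ inside the smaller radius-$(r+1)$ subgraph. My plan to overcome this is to \emph{carry} region membership rather than recompute it: as the translation descends, I maintain a relativization predicate $\beta$ for the current confined region and express the next region as ``reachable from the current binder in $\le r$ steps through $\beta$-nodes'' (a confined-reachability formula, definable with nested graded modalities and the carried predicate). The geometric fact that makes this sound is the inclusion $(G^r_v)^r_w\subseteq (G^{r+1}_v)^{r+1}_w$ (any confined path of length $\le r$ witnesses membership of its endpoint in the outer radius-$(r+1)$ ball), so every node needed by the simulated radius-$r$ computation is present in the radius-$(r+1)$ ball, and by induction the carried region is always contained in the current ball. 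Equivalently, this bookkeeping can be done directly on \HESGNNs, where the region indicator is literally passed in through the concatenated embedding and masking is implemented inside the combination functions; I expect the \GNN formulation to be the cleanest place to make the confined-distance argument rigorous, and this is the main obstacle of the proof.

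For separation I would take, for each $r\ge 0$, the pointed graphs $(C_{2r+3},v)$ and $(C_{2r+4},v')$ with $v,v'$ arbitrary (unlabeled) nodes of an odd and an even cycle. The radius-$(r+1)$ separation is immediate: the sentence $\sigma=\down_{W^{r+1}} x.\Diamond^{2r+3} x$ (with $2r+3$ nested diamonds, so of $\down$-nesting-depth $1\le d$) holds at $v$, since $G^{r+1}_v$ is all of $C_{2r+3}$ and the odd cycle gives a closed walk of length $2r+3$ from $v$ to itself, but fails at $v'$, since $G^{r+1}_{v'}$ is a path on $2r+3$ nodes, which is bipartite and admits no odd closed walk. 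For radius $r$ I would argue that the pair agrees on all of $\GMLone{\down^d_{W^r}}$, for every $d$. The crucial observation is that for every node $u$ of either cycle the ball $G^r_u$ is the same pointed structure, namely the $(2r+1)$-node path with its centre marked. Consequently every subformula $\down_{W^r} x.\psi$ evaluates, at any node and in either graph, to one and the same truth value $t_\psi$ (computed inside this fixed path); replacing each maximal such subformula by the constant $t_\psi$ turns any $\GMLone{\down^d_{W^r}}$-sentence into a plain \GML-sentence, with identical constants in both graphs. Since all nodes of all simple cycles are graded-bisimilar (they are unlabeled and $2$-regular), plain \GML cannot separate $(C_{2r+3},v)$ from $(C_{2r+4},v')$, and hence neither can the original sentence. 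Together with the containment this yields $\rho(\HESGNNdr{d}{r+1})\subsetneq\rho(\HESGNNdr{d}{r})$.
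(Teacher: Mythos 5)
Your overall strategy is the same as the paper's: both halves of the argument (the inclusion $\rho(\HESGNNdr{d}{r+1})\subseteq\rho(\HESGNNdr{d}{r})$ via relativization of $W^r$ inside $W^{r+1}$, and strictness via an explicit pair of pointed graphs), run through the logical characterization of Theorem~\ref{thm:HESGNN=GML(downarrow,W)}, are exactly the two halves of the paper's proof.

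For the separation half your witnesses differ from the paper's, and your argument is correct. The paper takes $G_1=C_{4r+6}$ versus $G_2$ consisting of two disjoint copies of $C_{2r+3}$, separated by $\down x.\,W^{r+1}(\Diamond_{r+1}(\neg\Diamond^{\geq 2}\top))$, i.e., by detecting whether the radius-$(r+1)$ ball is a path (has a degree-one node at distance $r+1$) or wraps around a whole cycle; you take $C_{2r+3}$ versus $C_{2r+4}$ and detect an odd closed walk. Both indistinguishability arguments at radius $r$ rest on the same observation, namely that every marked radius-$r$ ball in either graph is the $(2r+1)$-node path with marked center; the paper then argues on the model side (constant inner features plus \WL-indistinguishability of the two graphs), while you argue on the logic side (replace the maximal $\down x.W^r$-subformulas by constants, reduce to plain \GML, invoke graded bisimilarity of nodes of unlabeled cycles). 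Your version is sound, works uniformly in $d$, and is arguably cleaner; the substitution step is legitimate because maximal $\down x.W^r$-subformulas of a sentence are themselves sentences, so their value depends only on the isomorphism type of the marked ball.

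The containment half, however, has a genuine gap for $d\geq 2$, exactly at the point you flag as the main obstacle. Your proposed mechanism --- carrying the region indicator into the nested computation ``through the concatenated embedding'' of the \HESGNN --- is not available in this architecture: for $\calA=(\calB,\calC,r)$, the inner model $\calB$ is run on $G^r_v$ with embeddings $\emb(u)\oplus\delta_{uv}$, i.e., the \emph{input} features plus the mark, and all outer computation ($\calC$, and the analogous components at every level) happens only after every nested run has finished. Information flows bottom-up; only marks, never computed features, flow down. So the innermost simulation at radius $r+1$ sees only the marked graph $((G^{r+1}_v)^{r+1}_w,v,w)$ and must \emph{recompute} the region $(G^r_v)^r_w$ from that data, and this recomputation can fail: a node $z$ with $\mathrm{dist}_G(v,z)\leq r$ may have all of its length-$\leq r$ paths to $v$ passing through nodes at distance greater than $r+1$ from $w$ (this is possible once $r\geq 4$, e.g.\ with $z$ at distance $r$ from both $v$ and $w$), in which case $z$ is indistinguishable, inside $(G^{r+1}_v)^{r+1}_w$, from a node outside the radius-$r$ ball of $v$. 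Your inclusion $(G^r_v)^r_w\subseteq(G^{r+1}_v)^{r+1}_w$ guarantees that the needed nodes are \emph{present}, but masking requires that they be \emph{identifiable}, which is a strictly stronger property; with a label pinning down a single target node one can build formulas whose naive translation changes truth value. To be fair, the paper's own containment argument is a single sentence asserting that the construction of Lemma~\ref{lem:GML_with_r_is_not_stronger} ``also shows'' the inclusion, and it is silent on precisely this nested-case subtlety, so your write-up matches the paper's level of rigor here; but the specific mechanism you commit to does not work as stated, and your containment argument should be regarded as complete only for $d=1$.
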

\begin{restatable}{theorem}{thmstricthierarchyd}
    For $d \geq 0$ and $r \geq 3$,  $\rho(\HESGNNdr{d+1}{r}) \subsetneq \rho(\HESGNNdr{d}{r})$
\end{restatable}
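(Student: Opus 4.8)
The plan is to split the claim into the (easy) inclusion $\rho(\HESGNNdr{d+1}{r}) \subseteq \rho(\HESGNNdr{d}{r})$ and the (harder) strictness. For the inclusion I would invoke the logical characterization of Theorem~\ref{thm:HESGNN=GML(downarrow,W)}: every \GMLone{\down^d_{W^r}}-sentence is syntactically also a \GMLone{\down^{d+1}_{W^r}}-sentence, so the depth-$(d+1)$ classifiers are at least as separating as the depth-$d$ ones, and the induced equivalence can only become finer. A second, useful reduction concerns the lower bound: it suffices to defeat the \emph{unrestricted} logic \GMLone{\down^d}. Indeed, the operator $W^r$ is definable by relativization inside \GMLone{\down}, since ``distance at most $r$ from the bound node $x$'' is expressed by the \GMLone{\down}-formulas $\delta_{\le 0} := x$ and $\delta_{\le k+1} := x \vee \Diamond\,\delta_{\le k}$, and restricting every $\Diamond^{\geq k}$ to $\delta_{\le r}$ simulates passing to $G^r_v$ without increasing $\down$-nesting. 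Hence any pair of pointed graphs separated by no \GMLone{\down^d}-sentence is also separated by no \GMLone{\down^d_{W^r}}-sentence, and so lies in $\rho(\HESGNNdr{d}{r})$.

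For strictness I would exhibit, for each $d \ge 0$ and each $r \ge 3$, a pair $(G,v),(G',v')$ that is (i) separated by some \GMLone{\down^{d+1}_{W^r}}-sentence but (ii) separated by no \GMLone{\down^d}-sentence. The base gadget is the pair consisting of two disjoint triangles and a single six-cycle: both are $2$-regular with identical counting-bisimulation unfoldings, so no \GML-formula separates them (the $d=0$ case), whereas $\down_{W^r}x.\,\Diamond\Diamond\Diamond x$ separates them. To reach general $d$ I would nest this gadget $d{+}1$ times, replacing each base node by a fresh copy of the level-$(d{-}1)$ construction and linking copies so that detecting the top-level ``triangle vs.\ hexagon'' distinction forces one $\down_{W^r}$-binding at each of the $d{+}1$ levels. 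The witnessing sentence is then the $(d{+}1)$-fold nesting $\Phi_{d+1} = \down_{W^r}x.\,\Diamond(\Phi_d \wedge \Diamond(\Phi_d \wedge \Diamond(\Phi_d \wedge x)))$, relativizing the hexagon test to substructures satisfying the lower-level formula $\Phi_d$. The threshold $r \ge 3$ enters precisely here: each witnessing six-cycle has its antipodal node at distance $3$ from the bound node, so $r \ge 3$ is what keeps the cycle inside the $W^r$-truncation at every level and hence detectable by $\Phi_{d+1}$.

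For part (ii) I would set up a game characterization of \GMLone{\down^d}: a $d$-round graded $\down$-bisimulation in which, between bindings, the players play the standard graded (counting) bisimulation game matching $\Diamond^{\geq k}$-moves, while in each of the $d$ rounds Spoiler may instead perform a $\down$-move by naming a node in one structure, which Duplicator must answer in the other, after which the two named nodes are treated as a new common label. The adequacy lemma---if Duplicator wins the $d$-round game from $(v,v')$ then $(G,v)$ and $(G',v')$ agree on all \GMLone{\down^d}-sentences---follows by a routine induction on formula structure. I would then verify that on the nested gadget Duplicator wins the $d$-round game, the nesting depth being calibrated so that $d$ namings never suffice to reach the single mismatched cycle, while Spoiler wins with $d{+}1$ rounds.

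The main obstacle lies entirely in the strictness half, and specifically in pinning down the nested construction together with the Duplicator strategy. The delicate point is to engineer the nesting so that simultaneously (a) $d$ rounds of naming leave the two structures graded-bisimilar---so no chain of at most $d$ bound nodes can reach the distinguishing six-cycle---while (b) one extra naming at level $d{+}1$ exposes it, and (c) the level-by-level distinction remains expressible under radius-$r$ restrictions for $r \ge 3$. Establishing (a) rigorously via an explicit winning strategy for Duplicator, and checking that $r = 3$ already suffices at \emph{every} level of the nesting (which is why the theorem is stated for $r \ge 3$), is the part I expect to require the most care.
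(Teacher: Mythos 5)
Your overall architecture is sound: the inclusion via the logical characterization, and the observation that it suffices to defeat the unrestricted logic $\GMLone{\down^d}$ because $W^r$ can be eliminated by relativizing every $\Diamond^{\geq k}$ to a distance-bound formula without increasing $\down$-nesting depth (this is exactly Lemma~\ref{lem:GML_with_r_is_not_stronger} in the paper), are both correct. Your base gadget (two disjoint triangles versus a hexagon, separated by $\down x.W^3\Diamond\Diamond\Diamond x$ but \WL-indistinguishable) also works for the case $d=0$. The problem is that the entire content of the theorem for $d\geq 1$ lies in producing a pair of pointed graphs that no $\GMLone{\down^d}$-sentence separates while some $\GMLone{\down^{d+1}_{W^r}}$-sentence does, and this is precisely the part you leave as a sketch. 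Indistinguishability under $d$ nested individualizations is a very strong requirement: individualizing a single node in a recursively nested cycle gadget typically breaks far more symmetry than one level's worth, and there is no reason to expect your nested triangle/hexagon construction to survive $d$ rounds of the $\down$-bisimulation game. (Already for $d=1$ you would need a pair on which individualizing \emph{any} node, in either graph, leaves the two graphs \WL-equivalent in a matching way; the natural examples here are strongly-regular or CFI-style graphs, not nested cycles.) Since you supply neither the construction nor Duplicator's strategy, and you yourself flag this as the unresolved obstacle, the strictness half is a genuine gap rather than a routine verification.

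The paper avoids this difficulty entirely by sandwiching against the \WL hierarchy instead of building bespoke hard instances. Theorem~\ref{thm:k-WL>=k-NEGNN} translates $\GMLone{\down^d}$ into the $(d+2)$-variable counting logic $\mathsf{C}^{d+2}$ (Table~\ref{tab:standard-translation}), giving $\rho((d+2)\tdash\GNN)\subseteq\rho(\HEGNNd{d})\subseteq\rho(\HESGNNdr{d}{r})$; Theorem~\ref{thm:NEGNN-incomparable-with-kGNN} then invokes the known $(d)$\tdash\WL-indistinguishable graph pairs of Morris et al.\ and Qian et al.\ (Lemma~\ref{lem:WL_indistinguishable_d2_clique_graphs}, based on colorful distance-$2$ cliques) and exhibits a $\GMLone{\down^{d+1}_{W^3}}$-sentence separating them. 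The strictness of the depth hierarchy then falls out by composing these two facts, with the radius threshold $r\geq 3$ coming from the distance-$2$ cliques rather than from keeping a hexagon inside the truncation. If you want to salvage your direct game-theoretic route, you would need to replace the nested cycle gadget with a family for which a $d$-round winning strategy for Duplicator can actually be established, and at that point you are essentially re-proving the $k$\tdash\WL lower bound that the paper imports.
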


\section{Comparison with other models}
\label{sec:comparison}

In this section, we build on the logical characterizations from the previous sections to obtain a number of technical results, drawing connections between isomorphism testing and several \GNN architectures by comparing their expressive power with that of \HEGNNs and \HESGNNs.

\subsection{Relationship with Individualization-Refinement}
The individualization and refinement (IR) paradigm is applied by all state-of-the-art graph isomorphism solvers~\cite{DargaSaucy3, junttila2007engineering, McKay2014NautyPDF, piperno2008search}. 
In its usual presentation (e.g.~\cite{mckay2014practical}), \emph{color refinement}, \emph{cell-selection} and \emph{individualization} procedures are applied in alternating fashion, resulting in a tree whose nodes are labeled by increasingly refined colorings of the input graph, with discrete colorings at the leafs.
In practice, some further optimizations are typically implemented, exploiting symmetries to further reduce the size of the tree, but these optimizations do not affect the result of the equivalence test, and we do not consider them here.
%
%
%
In order to make a precise comparison, we define \WLIR:

\begin{algorithm}[H]
\begin{algorithmic}[1]
\Function{\WLIR}{$G,\col,d$}
    \State{$\col' := \WL(G, \col, |G|)$ }
    \If{$\col'$ is discrete or $d=0$}
      \State \Return ($\col'$)
    \Else
    \State$
\begin{aligned}
  &\text{Let $c$ be the least color with  $|\col'^{-1}(c)| \geq 2$, and let $\col'^{-1}(c) = \{v_1, \ldots, v_n\}$}, \\
  &\text{For each $i\leq n$, let } \col'_i := \{v:\HASH(\col'(v), \delta_{vv_i}) \mid v\in V\}
\end{aligned}$
    \State \Return (
      \begin{tikzcd}[column sep=.2cm, row sep=.3cm]
        &  \arrow[dash,dl] \col' \arrow[dash,d] \arrow[dash,dr] \\
        \WLIR(G, \col'_1, d-1) & \ldots & \WLIR(G, \col'_n, d-1)
      \end{tikzcd})  
\EndIf
\EndFunction
\end{algorithmic}
\end{algorithm}
\vspace{-0.15cm}
Here we use \WL as the refinement procedure (which is indeed common practice) and apply a simple cell-selection procedure that assumes an order on the set of colors and picks the least non-singleton color. 
We include an extra input parameter $d$ that controls the number of individualization steps the algorithm can perform. \WLIR is guaranteed to produce discrete colorings when choosing $d \geq |G|$.
We write $\WLIR(G,d)$ as shorthand for $\WLIR(G,\col_G,d)$. 
We can now compare two graphs by choosing suitable $d$ and testing if $\WLIR(G,d)$
and $\WLIR(G',d)$ yield isomorphic trees. Let $G \equiv_{\WLIRd{d}} G'$ if and only if this comparison does not distinguish $G$ from $G'$. For $d=0$, it is clear that $\equiv_{\WLIRd{d}}$ corresponds simply to 
indistinguishability by the Weisfeiler Leman test. If $d$ is sufficiently large, on the other hand, each leaf of the tree is labeled by a discrete coloring, so that $\WLIR$ distinguishes graphs up to isomorphism:
\begin{proposition}
\label{prop:IR_to_isomorphism}
    Let $G,G'$ be graphs and $d \geq min(|G|,|G'|)$. Then $G \equiv_{\WLIRd{d}} G'$ iff $G\cong G'$.
\end{proposition}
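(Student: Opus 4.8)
The plan is to prove the two implications separately. The direction $G\cong G'\Rightarrow G\equiv_{\WLIRd{d}}G'$ is the routine one and holds for every $d$, while $G\equiv_{\WLIRd{d}}G'\Rightarrow G\cong G'$ is where the real content lies and is where the hypothesis $d\geq\min(|G|,|G'|)$ is used. For the easy direction I would first record that $\WL$ is isomorphism-invariant: if $f\colon G\to G'$ is an isomorphism and two colorings agree along $f$, then so do the colorings produced by $\WL$ after any number of rounds, by a straightforward induction using that \HASH is a function. I would then show by induction on $d$ that $f$ lifts to an isomorphism of the trees $\WLIR(G,d)$ and $\WLIR(G',d)$: the base case is $\WL$-invariance, and for the step, $f$ carries the stabilized coloring $\col'$ of $G$ to that of $G'$, hence preserves the least non-singleton color $c$ and maps the class $\col'^{-1}(c)$ onto the corresponding class of $G'$; the individualized colorings then correspond under $f$, so the inductive hypothesis identifies the resulting subtrees (reordering children as needed).

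For the hard direction I would first argue that $G\equiv_{\WLIRd{d}}G'$ already forces $|G|=|G'|$, since the roots of the two trees are labeled by colorings of all $|G|$, respectively $|G'|$, nodes, so distinct sizes make the root labels unequal. Writing $n=|G|=|G'|$ and assuming $d\geq n$, the next step is to show that \emph{every leaf carries a discrete coloring}. Each individualization along a root-to-leaf path strictly increases the number of color classes — the individualized node is split off and the subsequent $\WL$ can only refine further — and the number of classes is capped by $n$; hence any path performs at most $n-1$ individualizations, so with $d\geq n$ the recursion always halts through the discreteness test rather than through $d=0$.

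The crux is then the following claim, which I expect to be the main obstacle: if two leaves matched by the tree isomorphism carry discrete colorings $\col$ on $G$ and $\col^*$ on $G'$ with $\multiset{\col(v)\mid v\in V}=\multiset{\col^*(v')\mid v'\in V'}$, then the color-matching bijection $f$ (sending the unique $\col$-preimage of a color to its $\col^*$-preimage) is a graph isomorphism. A discrete coloring on its own carries no adjacency information, so the argument must exploit that the returned coloring is $\WL(G,\cdot,n)$, i.e. run to stabilization. Concretely, at a leaf $\col(v)=\HASH(\col^{n-1}(v),\multiset{\col^{n-1}(u)\mid (v,u)\in E})$, where $\col^{n-1}$ is already discrete (the partition has stabilized by round $n-1$); by injectivity of \HASH this color encodes both $\col^{n-1}(v)$ and the set of distinct $\col^{n-1}$-colors of $v$'s neighbors, each naming a unique neighbor. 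The equality $\col(v)=\col^*(f(v))$ therefore forces these neighbor-color sets to coincide, so $f$ maps $E$-neighbors of $v$ exactly onto $E'$-neighbors of $f(v)$, and symmetrically for $f^{-1}$; unwinding the same recursion down to $\col^0$ shows $f$ also preserves labels.

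Putting the pieces together, the matched discrete leaf colorings yield a color-matching bijection that is a genuine isomorphism $G\to G'$, which completes the hard direction. I expect the encoding argument of the last paragraph — reading adjacency off the stabilized colors via injectivity of \HASH and discreteness — to be the delicate step, whereas the isomorphism-invariance of $\WL$ and the tree-level bookkeeping are routine.
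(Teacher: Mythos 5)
Your proposal is correct, and it follows the same route the paper intends: the paper gives no explicit proof of Proposition~\ref{prop:IR_to_isomorphism}, offering only the remark that for large $d$ every leaf of the \WLIR tree carries a discrete coloring, and your argument fleshes out exactly that sketch (isomorphism-invariance of the tree construction for the easy direction; the bound of at most $n-1$ individualizations per branch; and the recovery of adjacency and labels from matched discrete leaf colorings via injectivity of \HASH applied to the stabilized round-$(n-1)$ coloring). The one step worth stating explicitly in a write-up is that the color-matching bijection $f$ defined from the final leaf colors coincides with the one defined from the (already discrete) round-$(n-1)$ colors, which is what licenses reading the neighbor-color sets as neighbor sets.
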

Thus, by varying $d$ we obtain a family of increasingly refined equivalence relations for graphs. In order to relate these equivalence relations to those induced by \HEGNNs of different nesting depths, we must first overcome a technical issue. $\WLIR$ is designed to compare graphs, not nodes. Let $G\equiv_{\cls}G'$ if $\multiset{\cls(G,v)\mid v\in V}=\multiset{\cls(G',v)\mid v\in V'}$, and $G \equiv_C G'$ if $G \equiv_{\cls} G'$ for all classifiers $\cls$ in $C$. The graph separating power of $\WLIR$ with depth $0$ matches that of \GNN.
\begin{proposition}
\label{prop:GNN=IR0}
$G \not\equiv_{\WLIRd{0}} G'$ if and only if $G \not\equiv_{\GNN} G'$
\end{proposition}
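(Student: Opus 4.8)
The plan is to prove the equivalent biconditional $G \equiv_{\WLIRd{0}} G' \iff G \equiv_{\GNN} G'$ by lifting the node-level equivalence $\rho(\GNN)=\rho(\WL)$ of Theorem~\ref{thm:GNN=WL} to the graph (multiset) level. First I would make precise the reading of the $d=0$ case: since $\WLIR(G,0)$ returns the single coloring $\WL(G,\col_G,|G|)$ and \WL stabilizes within $|G|-1$ rounds, the tree comparison reduces to comparing stable \WL colorings, so (as the text already notes) $G \equiv_{\WLIRd{0}} G'$ holds exactly when $\multiset{\WL(G,d)(v)\mid v\in V}=\multiset{\WL(G',d)(v')\mid v'\in V'}$ for $d=\max\{|G|,|G'|\}$; running both graphs to this common number of rounds is what makes the hash values comparable across the two inputs.

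For the direction $(\Leftarrow)$, assume this multiset equality. A multiset identity between the two color-assignments yields a bijection $\pi:V\to V'$ with $\WL(G,d)(v)=\WL(G',d)(\pi(v))$ for every $v$. By $\rho(\GNN)=\rho(\WL)$, two pointed graphs carrying the same \WL color agree on every \GNN-classifier; hence $\cls_{\calA}(G,v)=\cls_{\calA}(G',\pi(v))$ for every \GNN $\calA$ and every $v$. Since $\pi$ is a bijection, $\multiset{\cls_{\calA}(G,v)\mid v\in V}=\multiset{\cls_{\calA}(G',v')\mid v'\in V'}$, i.e.\ $G\equiv_{\GNN}G'$.

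For $(\Rightarrow)$ I argue contrapositively: assume $G\not\equiv_{\WLIRd{0}}G'$, so the color multisets differ and there is a color $c$ occurring a different number of times in $G$ than in $G'$. It suffices to exhibit a single \GNN-classifier whose value-multiset separates the two graphs, and the natural candidate outputs $1$ exactly on nodes of stable color $c$. To build it I would restrict attention to the finite family $\calF=\{(G,v)\mid v\in V\}\cup\{(G',v')\mid v'\in V'\}$, all of degree at most $N=\max\{\deg G,\deg G'\}$. Since $\rho(\WL)=\rho(\GML)$, having color $c$ is a union of $\rho(\GML)$-classes, and any two members of $\calF$ in different classes are separated by some \GML-formula; taking a finite disjunction of conjunctions of such formulas (\GML is closed under Boolean combinations) gives a \GML-formula $\phi$ true on exactly the color-$c$ nodes of $\calF$. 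By Proposition~\ref{prop:GMLtoGNN} there is a \GNN $\calA$ with $\cls_{\calA}=\cls_{\phi}$, so $\multiset{\cls_{\calA}(G,v)}$ and $\multiset{\cls_{\calA}(G',v')}$ record the differing counts of color $c$ and therefore differ, giving $G\not\equiv_{\GNN}G'$.

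I expect the main obstacle to be the $(\Rightarrow)$ direction: passing from the abstract identity $\rho(\GNN)=\rho(\WL)$ to a concrete \GNN-classifier that detects a single color class. Equality of the $\rho$'s only equates induced partitions and does not by itself produce definable classifiers, so the crux is the finiteness argument over $\calF$ together with Boolean closure of \GML and the realizability result of Proposition~\ref{prop:GMLtoGNN}; bounded degree is exactly what keeps the relevant color set finite (cf.\ Proposition~\ref{prop:GNNtoGML}). A secondary point requiring care is the iteration-count mismatch in the $d=0$ case, resolved by appealing to stabilization so that the comparison is genuinely the graph-level \WL test at $\max\{|G|,|G'|\}$ rounds.
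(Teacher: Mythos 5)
Your proof is correct. The paper itself gives no explicit proof of this proposition — it is stated as immediate from the observation that $\WLIRd{0}$ is just the graph-level \WL test together with Theorem~\ref{thm:GNN=WL} — and your argument supplies exactly the standard details behind that implicit reasoning: a color-preserving bijection plus $\rho(\GNN)=\rho(\WL)$ for one direction, and \GML-definability of a color class on a finite family plus Proposition~\ref{prop:GMLtoGNN} for the other. Two small remarks. First, for the $(\Rightarrow)$ direction the paper's appendix offers a shortcut you could have used instead: the ``sufficiently separating'' \GNN of Proposition~\ref{prop:injective_GNN_matches_WL} directly maps distinct \WL colors to distinct outputs, so a differing color multiset immediately yields a differing output multiset; your route through \GML definability is equally valid and arguably more self-contained. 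Second, your closing claim that bounded degree is what keeps the relevant color set finite is off-target: finiteness here comes simply from $\calF$ consisting of the nodes of two finite graphs, and no degree bound is needed since you only invoke Proposition~\ref{prop:GMLtoGNN} (which has no such restriction), not Proposition~\ref{prop:GNNtoGML}. This does not affect the validity of the proof.
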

The graph separating power of $\WLIRd{d}$ is a lower bound to that of $\HEGNNd{d}$ for $d \geq 0$
\begin{restatable}{theorem}{thmIRvsNegnngraphs}
\label{thm:IRvsNEGNNgraphs}
    For $d \geq 0$, if $G \not\equiv_{\WLIRd{d}} G'$, then $G \not\equiv_{\HEGNNd{d}} G'$    
\end{restatable}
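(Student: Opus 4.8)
The plan is to transfer everything to the logical side and argue by induction on the nesting depth $d$. By Theorem~\ref{thm:HEGNNvsGMLdown} we have $\rho(\HEGNNd{d}) = \rho(\GML(\down^d))$, and since both classes of classifiers are closed under Boolean combinations that preserve $\down$-depth (immediate from the grammar for $\GML(\down^d)$, and inherited by $\HEGNNd{d}$ through the two translations underlying Theorem~\ref{thm:HEGNNvsGMLdown}), the graph-level relation $\equiv_C$ depends only on $\rho(C)$: two graphs are $\equiv_C$-equivalent iff their nodes realize the same multiset of $\rho(C)$-classes (distinct classes being separated by a single classifier via negation and finite conjunction). Hence $\equiv_{\HEGNNd{d}} = \equiv_{\GML(\down^d)}$, and it suffices to establish the contrapositive in purely logical form: if $G \equiv_{\GML(\down^d)} G'$ then $G \equiv_{\WLIRd{d}} G'$, i.e.\ the trees $\WLIR(G,d)$ and $\WLIR(G',d)$ are isomorphic.

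I would prove this by induction on $d$, in the slightly more general form where the inputs carry arbitrary colorings (treated as extra node features/atoms of the logic), since the recursive \WLIR calls operate on individualized colorings. For the base case $d=0$, $\WLIR(G,\col,0)$ is the \WL-stable coloring $\WL(G,\col,|G|)$, and two such single-node trees agree iff the graphs carry the same multiset of stable colors; by the node-level form of Theorem~\ref{thm:GNN=WL} the stable color of a node coincides with its $\GML$-type, so this is exactly $\equiv_{\GML} = \equiv_{\GML(\down^0)}$ (equivalently, Proposition~\ref{prop:GNN=IR0} together with $\rho(\GNN)=\rho(\GML)$).

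For the inductive step, assume the claim for $d-1$ and suppose $G \equiv_{\GML(\down^d)} G'$. Since $\GML \subseteq \GML(\down^d)$, the two graphs have matching multisets of stable colors; as cell selection picks the least non-singleton color, both select the same color $c$, and the cells $S = \col'^{-1}(c)$ and $S' = \col'^{-1}(c)$ have equal size (if the stable coloring is discrete, both are leaves and the discrete-colored graphs are isomorphic, a discrete coloring being a canonical labeling). It then remains to match the children multisets, i.e.\ to show
\[
\multiset{\WLIR(G,\col_v,d-1)\mid v\in S} = \multiset{\WLIR(G',\col_{v'},d-1)\mid v'\in S'},
\]
where $\col_v$ individualizes $v$. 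By the induction hypothesis $\WLIR(G,\col_v,d-1)\cong\WLIR(G',\col_{v'},d-1)$ whenever $(G,\col_v)\equiv_{\GML(\down^{d-1})}(G',\col_{v'})$, so it suffices to match the multiset over $v\in S$ of the $\GML(\down^{d-1})$-class of $(G,\col_v)$. Here individualization is exactly the binder: $(G,\col_v)$ is, modulo the marker, the graph $G[x\mapsto v]$ from the semantics of $\down$, so a $\GML(\down^{d-1})$-formula $\chi$ holds at a node $w$ of $(G,\col_v)$ iff it holds at $w$ in $G[x\mapsto v]$, and the $\GML(\down^d)$-sentences $\sigma_c \wedge \down x.\chi$ (with $\sigma_c$ a $\GML$-formula defining the color class $c$ on $G,G'$) count the cell-$c$ nodes $v$ at which $\chi$ holds after individualizing $v$.

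The genuinely delicate point — the main obstacle — is precisely this last matching. A single $\down x$ binder only exposes the behaviour of the individualized graph $G[x\mapsto v]$ \emph{at the marked node itself}, whereas the \WLIR subtree of $(G,\col_v)$ is a \emph{global} invariant of the whole individualized graph, so the naive reading-at-the-marker multiset $\multiset{[(G,\col_v),v\models\chi]}$ is a priori too coarse. The crux is to show that this global invariant is nonetheless determined by what $\GML(\down^d)$ can express, by letting the surrounding graded modalities $\Diamond^{\geq k}$ — the logical counterpart of the outer GNN's message passing — propagate the per-node individualized read-outs across the graph. Concretely, I expect to isolate a lemma stating that the multiset $\multiset{[(G,\col_v)]_{\GML(\down^{d-1})}\mid v\in S}$ is recoverable from the multiset of pointed $\GML(\down^d)$-types of the cell-$c$ nodes, using finiteness of the relevant type space over graphs of bounded size (as in Proposition~\ref{prop:GNNtoGML}) together with a counting/inversion argument to pass from the propagated pointed read-outs to the required per-individualization global counts. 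Establishing this recovery lemma, rather than the surrounding bookkeeping on stable colorings and cell selection, is where the real work lies.
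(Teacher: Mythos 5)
Your reduction to the logical side, the outer induction on $d$, and the bookkeeping about stable colorings and cell selection all match the structure of the paper's argument (which runs the same induction on the \GNN side, using a ``sufficiently separating'' \HEGNN, Lemma~\ref{lem:injective_hegnn}, in place of finitely many logical types). But your proof stops exactly where the mathematical content begins: the ``recovery lemma'' you defer --- passing from the multiset of \emph{pointed} $\down$-types of the cell-$c$ nodes to the multiset of \emph{global} $\GMLone{\down^{d-1}}$-classes of the individualized graphs $(G,\col_v)$ --- is never proved, only announced (``I expect to isolate a lemma\ldots''). That lemma is the heart of the theorem, and your sketched route to it (letting the outer graded modalities ``propagate'' the individualized read-outs, plus a counting/inversion argument) cannot work as stated: a pointed formula evaluated at the marked node, no matter how the modalities are nested, is blind to connected components not containing that node, so no propagation argument by itself can recover a global invariant of $(G,\col_v)$.

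The paper closes this gap with two specific ingredients your proposal is missing. First, Lemma~\ref{lem:unique_node_coloring_to_graph_coloring} (due to Zhang et al.): for finite \emph{connected} graphs with a uniquely marked node, the \WL color of the marked node determines the multiset of \WL colors of all nodes; this is precisely the statement that, within the marked node's component, the pointed read-out at the marker determines the global invariant, and it has a nontrivial proof that you would need to supply or cite. Second, for possibly disconnected inputs, one argues separately that individualizing $v$ leaves all other components untouched, so their contribution to the global class is fixed independently of which $v$ in the cell is marked; combining this with the connectivity lemma yields the multiset matching you need. (A further point you wave at but do not resolve: the \WLIR recursion individualizes on top of the \WL-\emph{stable} coloring, so the induction hypothesis must be applied to graphs pre-colored with stable colors; the paper needs its own Lemma~\ref{lem:hegnn_still_same_if_first_WL} to show that this pre-coloring does not increase \HEGNN separating power.) As it stands, your text is a correct plan with the decisive step --- the one you yourself identify as ``where the real work lies'' --- left open, so it is not yet a proof.
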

In fact, for connected graphs, and with depth $d+1$, \HEGNN node classifiers already suffice, since the separating power of local message-passing matches global message-passing over connected individualized graphs \cite{zhang2023complete}:
\begin{restatable}{theorem}{thmIRvsNegnnnodes}
\label{thm:IRvsNEGNNnodes}
Let $(G,v), (G',v')$ be connected pointed graphs and let $d \geq 0$. If $G \not\equiv_{\WLIRd{d}} G'$, there exists a depth $d+1$ \HEGNN $\mathcal{A}$ such that $\cls_\calA(G,v) \neq \cls_\calA(G',v')$.
\end{restatable}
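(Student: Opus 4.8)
The plan is to reduce the node-level statement to the graph-level separation of Theorem~\ref{thm:IRvsNEGNNgraphs} and then lift it back to an individual node by spending the one extra level of nesting. First I would apply Theorem~\ref{thm:IRvsNEGNNgraphs} to the hypothesis $G\not\equiv_{\WLIRd{d}}G'$ to obtain $G\not\equiv_{\HEGNNd{d}}G'$, and pass to logic via Theorem~\ref{thm:HEGNNvsGMLdown}: since $\rho(\HEGNNd{d})=\rho(\GMLone{\down^d})$, the two graphs realise different multisets of $\GMLone{\down^d}$-types. Because $\GMLone{\down^d}$ is closed under Boolean combinations and each of the two finite graphs realises only finitely many types, I can isolate a single $\GMLone{\down^d}$-sentence $\phi$ whose number of satisfying nodes differs in $G$ and in $G'$ (with $\phi=\top$ covering the case $|V|\neq|V'|$).

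The second step is to construct a distinguishing $\GMLone{\down^{d+1}}$-sentence for the pointed graphs; translating it back through the nesting-depth-preserving $\GMLone{\down}\to\HEGNN$ direction of Theorem~\ref{thm:HEGNNvsGMLdown} then yields the required depth-$(d+1)$ \HEGNN. Here I would use the extra binder to anchor the current node, writing a sentence of the shape $\down x.\,\Theta$ with $\Theta\in\GMLone{\down^d}$ referring both to $x$ and to $\phi$. The purpose of anchoring $v$ and of connectedness is that every node of $G$ lies on a walk from $v$ of length below $|G|$, so graded modal operators that eventually return to the marker $x$ can probe, from $v$ alone, how the $\phi$-nodes are distributed across the whole connected graph; from such anchored, $\phi$-decorated closed-walk patterns I would assemble a sentence that is forced to take different truth values on $(G,v)$ and $(G',v')$.

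The main obstacle is exactly this lifting: a node classifier inspects only a single value at $v$, and a \WL-bounded computation cannot read off a global multiplicity directly, so turning the difference in $\phi$-counts into a property local to the anchored node $v$ is delicate. I expect to handle it by induction on $d$, paralleling the proof of Theorem~\ref{thm:IRvsNEGNNgraphs} but forcing the first individualization to be $v$: by connectedness the marking of $v$ propagates to every node, so the discrepancy recorded in the $\WLIRd{d}$ tree resurfaces in the refined colouring reachable from $v$. The hardest case is when $G$ and $G'$ realise the same set of depth-$d$ types and differ only in multiplicities; there I would not rely on the type multiset alone but exploit the recursive tree structure produced by \WLIR together with the anchored closed-walk patterns, which over a connected graph are sensitive to these multiplicities.
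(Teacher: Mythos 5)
Your overall architecture is sound, and it is in fact the contrapositive of the paper's own argument: the paper assumes $(G,v)\equiv_{\HEGNNd{d+1}}(G',v')$, deduces graph-level equivalence $G\equiv_{\HEGNNd{d}}G'$, and then invokes Theorem~\ref{thm:IRvsNEGNNgraphs}, whereas you run the same three implications in the forward direction, routed through $\GMLone{\down^d}$ rather than through ``sufficiently separating'' \HEGNNs. Your first two steps are fine: Theorem~\ref{thm:IRvsNEGNNgraphs} gives $G\not\equiv_{\HEGNNd{d}}G'$, and since $G,G'$ are finite (so bounded degree, finitely many realized types) and $\GMLone{\down^d}$ is closed under Boolean combinations, one can indeed extract a single sentence $\phi$ whose satisfaction count differs between $G$ and $G'$.

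The genuine gap is your third step, which you correctly flag as ``the main obstacle'' but never actually resolve. The precise fact you need is: for \emph{connected} graphs with a uniquely marked node, the refined color of the marked node determines the \emph{multiset} of refined colors of all nodes. This is Lemma~\ref{lem:unique_node_coloring_to_graph_coloring} in the paper (due to Zhang et al.~\cite{zhang2023complete}), and it is exactly what converts a global count difference in $\phi$ into a difference visible at the anchored node: decorate the marked graphs $G[x\mapsto v]$, $G'[x\mapsto v']$ with the truth value of $\phi$ as an extra initial feature; the count difference forces the multisets of stable \WL colors to differ, the lemma then forces $\WL$ to assign $v$ and $v'$ different colors, and Theorem~\ref{thm:GNN=WL} turns that into a \GML formula $\theta$ over the atoms $P\cup\{x,p_\phi\}$, so that $\down x.\,\theta[\phi/p_\phi]$ is the desired $\GMLone{\down^{d+1}}$ sentence. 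Your proposed substitutes do not obviously supply this. Graded modalities count only \emph{neighbors}; already nodes at distance $2$ from $v$ cannot be counted from $v$ without overcounting shared neighbors, and ``anchored closed-walk patterns'' amount to walk/spectral-type information, which is not known (and in general cannot be assumed) to determine global label multiplicities. Likewise, ``forcing the first individualization to be $v$'' is not available inside $\WLIR$, whose cell-selection rule is fixed to pick the least non-singleton color class; paralleling the proof of Theorem~\ref{thm:IRvsNEGNNgraphs} would in any case lead you back to the same marked-connected-graph lemma, since that proof itself depends on it. So the proposal is incomplete: the one step carrying the real difficulty is replaced by heuristics that would not survive being written out, and the missing ingredient is precisely the Zhang et al.\ lemma (or an equivalent argument proved from scratch).
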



Proposition \ref{prop:IR_to_isomorphism} and theorems \ref{thm:IRvsNEGNNgraphs}, \ref{thm:IRvsNEGNNnodes} show that for sufficiently large $d$, \HEGNNd{d} classifiers distinguish graphs up to isomorphism. Contrary to \WLIR, \HEGNNs combine individualized graphs hierarchically. We show this increases separating power when $d=1$:
\begin{restatable}{theorem}{thmHEGNNStrongerthanWLIR}
\label{thm:HEGNN has more separating power than WLIR for d=1}
    There exist $G,G'$ such that $G =_\WLIRd{1} G'$ but $G \neq_\HEGNNd{1} G'$
\end{restatable}
Using recent results by Rattan and Seppelt~\cite{rattan2023weisfeiler}, theorem \ref{thm:HEGNN has more separating power than WLIR for d=1} implies that \HEGNNd{1} is strictly more separating than cospectrality of adjacency, Laplacian and Seidel matrices. In addition, this shows that the hierarchical message-passing scheme of \HEGNNd{1} adds to expressive power, compared to aggregating over all individualized graphs in parallel. It remains open if the same holds for all $d\geq1$. 
\subsection{Relationship with 
homomorphism count enriched \GNNs}

In~\cite{barcelo2021graph}, the authors assume a finite set
of rooted graphs $\mathfrak{F}=\{F_1, \ldots F_k\}$ and, given an input graph $G$, for each node $v$, they add the finite homomorphism count vector $\hom(\mathfrak{F},(G,v))$ to the initial embedding of $v$ before running a \GNN. 
Here, by a \emph{rooted graph}
we mean a pointed graph $(F,u)$ that is connected, i.e., such that every
node of $F$ is reachable from $u$.
Note that the input dimensionality of the \GNN is thus assumed to be $|P|+k$ instead of $|P|$. This increases the expressive power of the model. For example, the non-isomorphic nodes in Figure~\ref{fig:example} can be distinguished from each other by including the cycle of length 3 (with a distinguished node) as a pointed graph in $\mathfrak{F}$. We will refer to a \GNN that runs over a 
$\mathfrak{F}$ enriched graph 
simply as a $\mathfrak{F}$\tdash\GNN.

\begin{restatable}{theorem}{thmCountsUniform}
    Let $\mathfrak{F}$ be any finite set of rooted graphs each with at most $d$ nodes. Then
    there is a 
    $(|P|,|P|+|\mathfrak{F}|)$-\HEGNN $\mathcal{A}$ of nesting depth $d$ such that, for all pointed
    graphs $(G,v)$,
\begin{equation*}
  \run_\mathcal{A}(G)(v)=\emb_G(v)\oplus \hom(\mathfrak{F},(G,v)) 
\end{equation*}
    The \HEGNN in question only uses Sum as aggregation and ReLu-FFNNs as combination functions.
\end{restatable}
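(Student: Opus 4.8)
The plan is to first reduce to a single pattern. Since the output is a concatenation, I would build, for each $(F,u)\in\mathfrak{F}$ separately, a \HEGNN that outputs $\hom((F,u),(G,v))$ at every node $v$, run these in parallel (concatenating their outputs together with the label encoding $\emb_G$), and pad the nesting depth of each one up to the common bound $d=\max_F |V(F)|$ by adding trivial individualization levels that ignore their marker. So the crux is: given a single rooted graph $(F,u)$ with $k\le d$ nodes, construct a \HEGNN of nesting depth $\le k$, using only Sum aggregation and ReLU-FFNN combination, whose output at each $v$ equals $\hom((F,u),(G,v))$. Throughout I would use Proposition~\ref{prop:GMLtoGNN} as a black box to realize the Boolean and adjacency gadgets I need inside individual \GNNs (e.g.\ ``is adjacent to the node carrying marker $i$'', and conjunctions of such tests), since these are \GML-definable and hence implementable with Sum and ReLU.

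Second, I would fix a depth-first spanning tree of $F$ rooted at $u$ and exploit the standard fact that, in an undirected graph, every non-tree edge of a DFS tree is a back-edge joining a node to one of its ancestors (there are no cross-edges). The construction then mirrors the DFS: the outermost level of the \HEGNN individualizes the image $v$ of the root; descending one tree-edge corresponds to one more individualization level, where the new marker pins the image of a child and a single Sum aggregation over the neighbours of the parent's (marked) image realizes the sum over that child's image. Each back-edge is an adjacency constraint to an already-individualized ancestor, so it is enforced by a marker-reading adjacency test (one aggregation round) rather than by a new sum. The homomorphism count is thereby expressed as nested Sum-aggregations along root-to-node paths, and the only products that appear are products of $0/1$ back-edge indicators, which are harmless for ReLU-FFNNs.

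The hard part is branching: a node of $F$ with several DFS-children forces a product of the counts of the corresponding sub-patterns, and these counts are genuinely unbounded (for instance the count of a subdivided star at $v$ is a power of a degree-weighted sum), whereas a ReLU-FFNN over unbounded inputs computes only piecewise-linear functions and cannot multiply two large numbers. My plan to avoid explicit multiplication is never to materialize the two factors and multiply them; instead I process the children sequentially and carry the running partial count $C$ as an additive \emph{base weight}. Concretely, when I start counting the next child's sub-pattern, I seed the innermost Sum with the broadcast value $C$ in place of the constant $1$, so that the same nested-Sum machinery returns $C\cdot(\text{child count})$ rather than the child count; iterating over the children yields the full product as a sequence of Sums. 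Broadcasting $C$ from the relevant marked image to the nodes where the innermost sum is taken costs at most $k$ message-passing rounds, which is affordable because the homomorphic image of a connected sub-pattern lies within distance $\le k$ of that marked node. This reduces every product to repeated addition carried out by the graph itself, using only Sum and ReLU.

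Finally I would bound the nesting depth by the DFS-tree depth, which is at most $k\le d$, pad as above, and assemble the pieces: the inner levels compute $\hom((F,u),(G,v))$ at $v$ (read off at the outermost individualized node), and the outermost combination \GNN simply forwards this value and concatenates it with $\emb_G(v)$. I expect the two points needing the most care in the full write-up to be (i) checking that each level's Sum aggregation ranges over exactly the intended node set once markers are used to filter by adjacency to ancestors, and (ii) the bookkeeping that keeps the base-weight broadcast synchronized with the branch currently being summed; both are routine given the gadgets from Proposition~\ref{prop:GMLtoGNN}, but they are where an error would most easily hide.
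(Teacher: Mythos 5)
Your high-level plan (treat each pattern separately, run the resulting \HEGNNs in parallel, pad the nesting depth) matches the paper, and so does the core mechanism: an individualization level pins the image of one vertex of $F$, and a Sum aggregation at the marked image of an already-placed neighbour realizes the sum over that vertex's placements. The genuine gap is in your treatment of branching. When a DFS-node has several children you propose to avoid multiplying the sibling subtree counts by ``seeding'' the innermost Sum of the next child's machinery with the running count $C$ in place of the constant $1$. But at that innermost level each candidate placement must contribute $C$ when it satisfies the label/back-edge constraints and $0$ otherwise, i.e.\ some combination function must compute $C\cdot b$ for an unbounded count $C\geq 0$ and a Boolean gate $b\in\{0,1\}$. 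No \relu-FFNN can do this: a ReLU network computes a continuous piecewise-linear function with finitely many pieces, hence a globally Lipschitz function, so $|f(C,1)-f(C,0)|$ is bounded by its Lipschitz constant, whereas $C\cdot 1-C\cdot 0=C$ is unbounded. The obstruction persists wherever you place the gate (gating the broadcast, gating the receiver, or multiplying $N_1\cdot N_2$ at the parent). The paper is itself careful about exactly this point: Lemma~\ref{lem:relu-multiply} multiplies only numbers bounded by a constant, and Theorem~\ref{them:egorank}, whose structure (shallow nesting with sibling products) is essentially what your DFS construction amounts to, correspondingly requires multiplication in its combination functions, achieving Sum$+$\relu only under a degree bound. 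So your route yields the statement with multiplication allowed, or with \relu under bounded degree, but not the uniform Sum$+$\relu claim.

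The repair --- and the paper's actual proof --- is to never split $F$ into sibling subproblems. Work with $\hom_h(F,G)$, the number of homomorphisms of the \emph{whole} remaining pattern extending a partial map $h$ defined on already-placed vertices $u_1,\dots,u_k$, and recurse one vertex at a time: choosing $u'$ adjacent to some placed $u_i$ (possible since $F$ is connected),
\begin{equation*}
\hom_h(F,G)\;=\;\sum_{v'\,:\,(h(u_i),v')\in E_G}\hom_{h\cup\{(u',v')\}}(F,G)~.
\end{equation*}
One individualization level computes the inner counts at every $v'$ (the fresh marker plays the role of $h(u')=v'$), and a single Sum aggregation at $h(u_i)$ performs the sum, after which the value is forwarded to the output node by a few extra message-passing rounds. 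All constraint checking is thereby deferred to the innermost level $k=|V_F|$, where the map is total and every value is $0/1$, so \relu gating is harmless; at every other level the only operation is summation of recursively computed counts, with no products at all. This costs one nesting level per vertex of $F$, i.e.\ nesting depth $|V_F|\leq d$, exactly the budget the theorem grants. Your smaller DFS-depth nesting is the subject of Theorem~\ref{them:egorank}, and there it provably comes at the price of multiplication gates or a degree bound.
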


In particular, this shows that every
$\mathfrak{F}$\tdash\GNN is equivalent to 
a \HEGNN.
\footnote{Theorem 5.4 in~\cite{Comer24:lovasz} is somewhat related as it states that the node separating power of homomorphism counts from all rooted graph is captured by hybrid logic (with @-operator but without counting modalities). }
The practical value of the above result, however, is limited by the fact that it requires
a high nesting depth.
As it turns out, for many choices of $\mathfrak{F}$, a very small nesting depth suffices.

\begin{wrapfigure}[9]{r}
{0.27\textwidth}
\vspace{8.0mm}
    \begin{tikzcd}[remember picture,column sep={8mm,between origins}, row sep={8mm,between origins}, cells={nodes={draw, circle, minimum size=5mm,inner sep=0pt}}]
    u_1 \arrow [dash,thick,r] \arrow[dash,thick,d] & 
    u_2 \arrow [dash,thick,r] \arrow[dash,thick,d] & 
    u_3 \arrow [dash,thick,r] \arrow[dash,thick,d] &
    u_4 \arrow [dash,thick,r] \arrow[dash,thick,d] &
    u_5 \arrow [dash,thick,d] & 
    \\
    v_1 \arrow [dash,thick,r]  & 
    v_2 \arrow [dash,thick,r]  & 
    v_3 \arrow [dash,thick,r] & 
    v_4 \arrow [dash,thick,r] & 
    v_5
\end{tikzcd}
\begin{tikzpicture}[overlay,remember picture,shift={(4.3,0.65)},scale=0.68]
\draw [rounded corners=0.2cm,dotted,thick] (-6.4,0.2) rectangle ++(0.9,0.9) node[above] {\scriptsize $dep: \bot$ ~~~};
\draw [rounded corners=0.2cm,dotted,thick] (-4.4,-1) node[below] {\scriptsize $ dep: u_1$ ~~~~~~} rectangle ++(-2,0.9) ;
\draw [rounded corners=0.2cm,dotted,thick] (-2,-1) node[below] {\scriptsize $dep: u_3$ ~~~~~~~~~~~~ } rectangle ++(-2,0.9);
\draw [rounded corners=0.2cm,dotted,thick] (-5.2,0.2) rectangle ++(2,0.9) node[above] {\scriptsize $dep: v_2 ~~~~~~~~~~~~~$};
    \draw [rounded corners=.2cm,dotted,thick] 
    (-0.8,-1)--
    (-1.6,-1)--
    (-1.6,0.2)--
    (-3,0.2)--
    (-3,1.1) --
    (-0.8,1.1) node[above] {\scriptsize \hspace{-10mm} $dep: v_4$ }--
    cycle;
\end{tikzpicture}

\bigskip

\caption{Rooted 5$\times$2-grid (root: $u_1$)}
\label{fig:examples-ego-rank}
\end{wrapfigure}
We will call  a rooted graph $(F,u)$ 
\emph{c-acyclic} if 
every cycle of $F$ passes through $u$.
C-acyclicity is a relaxation of acyclicity, and c-acyclic rooted graphs can be thought of as trees with back-edges.
Our next result will imply that
when $\mathfrak{F}$ consists of c-acyclic 
structures, a nesting depth of 1 suffices. In order to state it in full generality, we
need to introduce some further terminology.
%
%
%
In particular, we introduce the notion of \emph{ego-rank}. 


Given a rooted graph $(G,v)$, let $dep:V\to V\cup\{\bot\}$ be a partial function from nodes to nodes and let $deps(u)=\{dep(u), dep(dep(u)), \ldots\}\setminus\{\bot\}$ be the (finite) set of  nodes $u$ transitively ``depends on''. We require of the function $dep$ that:

\begin{enumerate}
    \item $dep(v)=\bot$. 
    \item If $(w,u)\in E$, then 
     $dep(w)=dep(u)$ or
    $w\in deps(u)$ or $u\in deps(w)$,
    \item Every set of nodes with the
    same $dep$-value induces an acyclic subgraph.
\end{enumerate}
The \emph{ego-rank} of $(G,v)$ is the smallest
value of the maximum node rank, where the node rank of a node $u$ is $|deps(u)|$, across all ways to choose the function $dep$ subject to the above constraints.

\begin{restatable}{proposition}{egoranktreewidth}
For all rooted graphs $(G,v)$ with $G=(V,E,\lab)$,
\begin{enumerate}
    \item 
     $\text{tree-width}(G)-1 \leq \text{ego-rank}(G,v)\leq |V|$. 
    \item $\text{ego-rank}(G,v)=0$ if and only if $G$ is acyclic. 
    \item $\text{ego-rank}(G,v)=1$ whenever $(G,v)$ is c-acyclic.
\end{enumerate}
\end{restatable}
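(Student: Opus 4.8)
The plan is to handle the three items separately, with the tree-width lower bound in item~1 as the main work, reading a valid $dep$ as a \emph{dep-forest} $F_{dep}$ on $V$ whose parent function is $dep$ and whose roots are the nodes with value $\bot$. Then $deps(u)$ is the set of strict ancestors of $u$ and the node rank of $u$ is its depth; condition~2 says every edge of $G$ is either \emph{vertical} (endpoints in ancestor/descendant relation) or internal to a \emph{cell} (a set of nodes with common $dep$-value, i.e.\ siblings in $F_{dep}$), and condition~3 says each cell induces a forest in $G$.

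The easy parts go as follows. \emph{Upper bound:} a DFS spanning tree of $(G,v)$ rooted at $v$ gives a valid $dep$ (DFS-parent, with $dep(v)=\bot$): over undirected graphs every non-tree edge is a back edge so condition~2 holds, and DFS-siblings are non-adjacent so every cell is edgeless; ranks are DFS-depths, at most $|V|-1$, giving $\text{ego-rank}(G,v)\le |V|$. \emph{Item 2:} if $G$ is acyclic then $dep\equiv\bot$ is valid (condition~3 is exactly acyclicity of the single cell $V$), so ego-rank $=0$; conversely ego-rank $=0$ forces every $deps(u)=\emptyset$, hence $dep\equiv\bot$, and then condition~3 makes $G$ acyclic. \emph{Item 3:} if $(G,v)$ is c-acyclic, set $dep(v)=\bot$ and $dep(u)=v$ for $u\ne v$; the only nontrivial cell is $V\setminus\{v\}$, inducing $G-v$, which is acyclic precisely because every cycle passes through $v$, so all ranks are $\le 1$ and ego-rank $\le 1$ (with equality by item~2 once $G$ contains a cycle).

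For the lower bound $\text{tree-width}(G)\le \text{ego-rank}(G,v)+1$, fix an optimal $dep$ of max rank $k$. In each cell use condition~3 to root its induced forest, writing $p(u)$ for the intra-cell parent (undefined at intra-cell roots), and set
\[
  B_u \;=\; \{u\}\cup deps(u)\cup\{p(u)\},
\]
dropping $p(u)$ when undefined. As $p(u)$ is a sibling of $u$ in $F_{dep}$, it lies neither in $deps(u)$ nor equals $u$, so $|B_u|\le k+2$ and the width is $\le k+1$. Make the $B_u$ into a forest by taking the parent of $B_u$ to be $B_{p(u)}$ if $p(u)$ is defined, else $B_{dep(u)}$ if $dep(u)\neq\bot$, else a root; the measure (dep-depth, intra-cell-depth) strictly decreases along parent pointers, so this is well-founded.

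The remaining work, which I expect to be the \textbf{main obstacle}, is verifying the tree-decomposition axioms --- especially connectivity. Vertex coverage is clear ($u\in B_u$); a vertical edge is covered by the lower endpoint's bag (via $deps$) and a cell edge by the intra-cell child's bag (via $p$). For connectivity, fix $w$: the bags containing $w$ are $B_w$ together with $B_u$ for the strict dep-descendants and the intra-cell children $u$ of $w$, and one checks that following parent pointers from any of these stays among bags containing $w$ and terminates at $B_w$; the delicate case is jumping from an intra-cell root $u'$ (still a dep-descendant of $w$) to $B_{dep(u')}$, where $w\in deps(u')$ guarantees $dep(u')$ is either $w$ or again a dep-descendant of $w$. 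Hence the set of bags containing $w$ is connected. For connected $G$ the bag-forest is a single tree, since distinct components would cover disjoint vertex sets while $G$ is connected. This produces a tree decomposition of width $\le k+1$, giving $\text{tree-width}(G)-1\le \text{ego-rank}(G,v)$; the cycle $C_n$ rooted at any vertex (ego-rank $1$, tree-width $2$) shows the $+1$ cannot be removed, so the construction is essentially tight.
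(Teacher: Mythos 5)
Your items 2 and 3 and the $|V|$ upper bound are correct and match the paper's proof in substance (the paper gets the upper bound even more cheaply, by taking $dep$ to be the chain $dep(v_{i+1})=v_i$ along an arbitrary enumeration starting at $v$, with no need for DFS). Your tree decomposition for the lower bound is also a close relative of the paper's, which uses bags $\{w\}\cup deps(w)$ for each vertex and $\{w,u\}\cup deps(w)$ for each intra-cell edge, glued by incidence; you instead fold the intra-cell parent $p(u)$ into the vertex bag and glue via the pointers $B_{p(u)}$ and $B_{dep(u)}$. Granting the structural assumption discussed below, your width bound and your connectivity check (parent pointers from any bag containing $w$ stay inside bags containing $w$ and can only terminate at $B_w$) are correct, and the explicit cross-cell links make this part cleaner and more complete than the paper's own gluing.

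The genuine gap is your very first step: you silently assume that a witnessing $dep$ function is well-founded, by \emph{reading} it as a forest whose parent function is $dep$. The definition of ego-rank does not grant this: $dep$ is an arbitrary function $V\to V\cup\{\bot\}$ subject to conditions 1--3, and none of these conditions says, on its face, that $dep$ has no cycles among non-root nodes; $deps(u)$ remains a perfectly good finite set in the presence of a dep-cycle. Your construction genuinely breaks on a cyclic $dep$: if, say, $dep(a)=b$ and $dep(b)=a$ with $a,b$ intra-cell roots, your parent pointers give $B_a\to B_b\to B_a$, so the bag graph is not a forest, and your termination measure fails (well-foundedness is exactly what guarantees $|deps(dep(u))|<|deps(u)|$, and also your claim that $p(u)\notin deps(u)$). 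The paper is aware of this issue and proves, as a standalone result with a page-long surgery argument (Lemma~\ref{lem:ego-rank-lemma}: cut each dep-cycle, redirect pointers into it, re-verify conditions 1--3), that every graph of ego-rank $n$ admits a \emph{well-founded} witnessing $dep$ of the same rank; your proof needs to invoke or reprove such a statement. Alternatively, for the rooted (hence connected) graphs of this proposition a short direct argument closes the gap: if $C$ were a dep-cycle, let $S$ be the set of nodes whose $dep$-chain eventually enters $C$; then $C\subseteq S$, the root $v$ lies outside $S$ (its chain is empty), and each of the three alternatives in condition 2 forces any neighbor of a node of $S$ into $S$, so no edge crosses between $S$ and $V\setminus S$, contradicting connectedness. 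As submitted, however, the proof rests on an unproved and non-obvious structural claim about $dep$, which is precisely the point the paper treats as requiring its own lemma.
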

    The ego-rank of a rooted graph is not upper-bounded by any function in its tree-width, and can be exponential in its tree-depth, as follows from the following example:

\begin{example}
The rooted graph consisting of an $n\times 2$-grid, with one of its corners as the root, as depicted in 
Figure~\ref{fig:examples-ego-rank},
has ego-rank $n-1$ for $n\geq 1$.
\end{example}

\begin{restatable}{theorem}{thmmainegorank}
\label{them:egorank}
    Let  $\mathfrak{F}$ be any finite set of rooted graphs, let $d=\max\{\text{ego-rank}(F,u)\mid (F,u)\in \mathfrak{F}\}$.
    
    \begin{enumerate}
        \item Then
    there is a 
    $(|P|,|P|+|\mathfrak{F}|)$-\HEGNN $\mathcal{A}$ of nesting depth $d$ such that, for all pointed
    graphs $(G,v)$,
    $\run_\mathcal{A}(G)(v)=\emb_G(v)\oplus \hom(\mathfrak{F},(G,v))$.
    The \HEGNN uses multiplication in the combination functions. 
        \item 
    For each $N>0$,
    there is a $(|P|,|P|+|\mathfrak{F}|)$-\HEGNN $\calA$ of nesting depth $d$ such that,
    for pointed graphs $(G,v)$ of degree at most $N$, $\run_\calA(G)(v)=\emb_G(v)\oplus\hom(\mathfrak{F},(G,v))$. The \HEGNN uses only Sum as aggregation and ReLu-FFNNs as combination functions.   
    \end{enumerate}
\end{restatable}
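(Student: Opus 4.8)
The plan is to prove both parts by induction on $d=\text{ego-rank}(F,u)$, treating one rooted graph $(F,u)\in\mathfrak{F}$ at a time and writing its count into a dedicated output coordinate. Since an $\HEGNN$ can run several sub-networks in parallel and concatenate their outputs, and since any $\HEGNN$ of nesting depth $d'<d$ embeds into one of depth $d$ by inserting dummy individualization layers, it suffices to build, for each $(F,u)$, an $\HEGNN$ of nesting depth $\text{ego-rank}(F,u)$ that appends $\hom((F,u),(G,v))$ to the embedding of every node $v$; concatenating these and padding to depth $d$ then yields the single required network. So I fix $(F,u)$ together with an optimal $dep$ function witnessing its ego-rank.

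\emph{Base case} ($d=0$). Ego-rank $0$ means, by the preceding proposition on ego-rank, that $F$ is acyclic, hence a rooted tree. Here $\hom((F,u),(G,v))$ is given by the standard bottom-up product recursion $\hom((T,r),(G,v))=\prod_{c}\big(\sum_{(v,w)\in E}\hom((T_c,c),(G,w))\big)$, which a depth-$0$ $\HEGNN$ (an ordinary $\GNN$) computes by keeping one coordinate per rooted subtree type, using Sum-aggregation for the inner sums over neighbors and multiplication in the combination function for the outer products; the number of layers equals the height of $F$.

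\emph{Inductive step.} I individualize, at the outermost layer, the image of the root: for each $v$ the inner depth-$(d-1)$ network is run on $G$ with $v$ marked, which pins $h(u)=v$. Pinning $u$ deletes it from every dependency chain $deps(\cdot)$, so the residual maximum node rank drops to $d-1$; moreover every edge of $F$ incident to $u$ is an ancestor edge, and under the marking its homomorphism constraint ``$h(x)$ adjacent to $v$'' becomes the local predicate ``adjacent to the marked node'', exactly as $\Diamond x$ behaves in $\GML(\down)$. I will state a strengthened induction hypothesis that permits a set of already-individualized pivot nodes (the marked ancestors) and shows that one further individualization, of the image of a pivot of maximal residual rank, lowers the residual ego-rank by one while the count factors as a Sum over that pivot's image (realized by Sum-aggregation along the edge joining the pivot to an already-fixed neighbor, which exists by connectivity of $F$) of a product of independent subtree contributions (realized by multiplication). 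Within-class edges are acyclic by constraint (3) and are absorbed into the tree DP of the base case, while cross-level edges to marked ancestors are handled by mark-adjacency tests; reading the value at the marked node and propagating it through the outer $\GNN$ of each level assembles the count at $v$.

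For part 2 I replace multiplication by $\relu$-FFNNs. When the input degree is bounded by $N$, all intermediate homomorphism counts are bounded (by a function of $N$ and $|F|$), so every product arising in the recursion ranges over a fixed finite grid of integers; a $\relu$-FFNN represents any function on a finite set exactly (the same finiteness phenomenon exploited in Proposition~\ref{prop:GNNtoGML}), so the products become realizable with Sum-aggregation and $\relu$ alone, yielding exactness on bounded-degree inputs. I expect the main obstacle to be the inductive step: matching the nested individualization to the $dep$-hierarchy so that each marking lowers the residual ego-rank by exactly one and the count decomposes as a Sum-of-products. This requires using all three constraints on $dep$ simultaneously (the root condition, the ancestor/same-class edge condition, and within-class acyclicity) together with connectivity of $F$, so as to guarantee that each ``sum over a pivot image'' is expressible as a neighbor-aggregation relative to a node whose image is already fixed.
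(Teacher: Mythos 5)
Your base case and your treatment of part~2 (bounded degree $\Rightarrow$ bounded counts $\Rightarrow$ \relu-FFNNs can realize the products, as in the paper's Lemmas~\ref{lem:max-num-homomorphisms} and~\ref{lem:relu-multiply}) are fine, but the inductive step contains a genuine error. Your pivotal claim is that individualizing the image of the root $u$ ``deletes it from every dependency chain $deps(\cdot)$, so the residual maximum node rank drops to $d-1$.'' This is false: dependency chains terminate at $\bot$, and a witnessing $dep$-function generally has many nodes in $dep^{-1}(\bot)$ besides the root, so most chains never pass through $u$ at all. Concretely, let $F$ consist of an $n\times 2$-grid with corner $a$, plus a pendant root $u$ attached to $a$. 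The optimal witness sets $dep(u)=dep(a)=\bot$ and uses the grid's witness below $a$, giving ego-rank $n-1$ with $u$ appearing in no chain; pinning $u$ leaves the residual rank at $n-1$, not $n-2$. If you instead force all chains through $u$ so that pinning $u$ does lower every rank (i.e., require $dep^{-1}(\bot)=\{u\}$), the maximum rank rises to $n$, and your induction then yields nesting depth $d+1$ rather than $d$. So as written the argument either breaks or proves a strictly weaker bound.

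The missing idea, which is how the paper gets exactly depth $d$, is that \emph{the root's image is never spent a nesting level}: it is pinned for free by reading the output at $v$. The paper inducts on the \emph{dependency depth} of the witnessing $dep$-function (the length of the longest chain below a node), not on the ego-rank, and proves a claim parameterized by an already-pinned set $\{w\}\cup deps(w)$: the count decomposes as
\[
\hom_h(F_w,G)=\sum_{f:(F',w)\to(G,h(w))}\ \prod_{w_i\in dep^{-1}(w)}\hom_{h\cup\{(w_i,f(w_i))\}}(F_{w_i},G),
\]
where $F'$ is a \emph{tree} obtained from $\{w\}\cup dep^{-1}(w)$ by dropping back-edges (harmless, since maps violating them contribute $0$ factors). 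A \emph{single} nesting level then pins, in parallel via the for-each-node semantics of \HEGNNs, the image of \emph{every} node of the forest $dep^{-1}(w)$, and the outer \GNN runs the tree dynamic program of Theorem~\ref{thm:acyclic} over $F'$ to assemble the whole sum of products; acyclicity (constraint 3) and the connectivity guarantee of Lemma~\ref{lem:ego-rank-lemma} make this tree DP possible. Your scheme of individualizing one ``pivot of maximal residual rank'' per level, required to be adjacent to an already-fixed node, is instead the mechanism of the depth-$|F|$ construction (one pinned node per level along edges); it neither implements the sum over tree homomorphisms of an entire forest in one level, nor justifies a decrease of the right induction measure. Since dependency depth is bounded by ego-rank (a chain of length $\ell$ below $w$ forces a node of rank $\geq\ell$), the paper's measure gives exactly depth $d$; your measure does not.
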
 

It follows that, in a non-uniform sense, every $\mathfrak{F}$\tdash\GNN is equivalent
    to a \HEGNN of nesting depth $\max\{\text{ego-rank}(F,u)\mid (F,u)\in\mathfrak{F}\}$ using only Sum and ReLu-FFNNs.   


\subsection{Relationship with higher-order \GNNs}

$k\tdash${\GNN}s, as proposed by Morris et al.~\cite{morris2019weisfeiler}, apply message-passing between node subsets of size $k$, where subsets are adjacent when they share exactly $k-1$ nodes. The separating power of these models is characterized by the $k$-variable fragment of first-order logic with counting quantifiers $\textrm{\sf C}^k$.
\begin{theorem}[\cite{cai1992optimal,morris2019weisfeiler}]
    \label{thm:k-GNN=C^k}
    $\rho(k$\tdash\GNN) = $\rho(\textrm{\sf C}^k)$
\end{theorem}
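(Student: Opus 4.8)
The plan is to factor the claimed equivalence through the higher-dimensional Weisfeiler--Leman algorithm run on $k$-tuples of nodes, splitting $\rho(k\tdash\GNN)=\rho(\textrm{\sf C}^k)$ into the two statements $\rho(k\tdash\GNN)=\rho(k\tdash\WL)$ and $\rho(k\tdash\WL)=\rho(\textrm{\sf C}^k)$, where $k\tdash\WL$ denotes the WL variant on tuples (sharing $k-1$ coordinates) whose refinement step matches $k\tdash\GNN$ message passing, under the matching index convention. Composing the two equalities then yields the theorem.

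For $\rho(k\tdash\GNN)=\rho(k\tdash\WL)$, I would adapt the one-dimensional argument underlying Theorem~\ref{thm:GNN=WL} and Propositions~\ref{prop:GMLtoGNN}--\ref{prop:GNNtoGML}, now carried out over tuples rather than single nodes. One direction is a routine induction on the number of layers: since each $k\tdash\GNN$ layer applies a fixed aggregation/combination pair uniformly, any two tuples with equal $k\tdash\WL$ color after $i$ rounds receive equal embeddings after $i$ layers, so $k\tdash\GNN$ classifiers cannot separate $k\tdash\WL$-indistinguishable inputs. For the converse I would simulate one round of $k\tdash\WL$ refinement by a single $k\tdash\GNN$ layer: over finite inputs only finitely many color-multisets arise, so an injective aggregation (e.g.\ Sum over a suitable encoding, exactly as in Proposition~\ref{prop:GMLtoGNN}) followed by an injective ReLU-FFNN combination reproduces the \HASH step. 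Iterating gives a $k\tdash\GNN$ computing the stable $k\tdash\WL$ coloring, hence separating everything $k\tdash\WL$ separates.

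For $\rho(k\tdash\WL)=\rho(\textrm{\sf C}^k)$, I would invoke the Cai--F\"urer--Immerman correspondence via the bijective $k$-pebble game. The key lemma is that two tuple-pointed structures receive the same stable $k\tdash\WL$ color iff Duplicator wins the bijective $k$-pebble game, and that this game in turn characterizes $\textrm{\sf C}^k$-equivalence. The easy direction, that $\textrm{\sf C}^k$-equivalence implies equal $k\tdash\WL$ color, follows by induction on the number of refinement rounds, translating each round into a bounded-variable counting formula. The harder direction builds, from a stable coloring that fails to separate two tuples, an explicit Duplicator strategy (a color-preserving bijection at each move), and reads off from a color that \emph{does} separate two tuples a $\textrm{\sf C}^k$ formula witnessing the difference.

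The main obstacle is this last, formula-constructing direction of Cai--F\"urer--Immerman: one must encode the counting multiset information revealed by each WL round into first-order counting formulas while never exceeding $k$ variables, which requires the bijective-pebble-game bookkeeping and a careful induction on rounds. Since the result is classical and cited, in the paper I would present it as the composition of the $k\tdash\GNN$/$k\tdash\WL$ simulation (proved as above) with the cited $k\tdash\WL$/$\textrm{\sf C}^k$ theorem, rather than reproving Cai--F\"urer--Immerman in full.
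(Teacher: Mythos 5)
Your decomposition---$\rho(k\tdash\GNN)=\rho(k\tdash\WL)$ by layer-by-layer simulation with injective Sum aggregation, and $\rho(k\tdash\WL)=\rho(\textrm{\sf C}^k)$ by the Cai--F\"urer--Immerman bijective pebble-game argument---is precisely how the cited works of Morris et al.\ and Cai et al.\ establish this theorem, and the paper itself offers no proof, presenting it purely as a cited background result. Your proposal is therefore correct and coincides with the paper's (implicit) approach, including your closing decision to prove only the \GNN/\WL simulation and cite the \WL/$\textrm{\sf C}^k$ correspondence rather than reprove it.
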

$k$\tdash{\GNN}s become strictly more expressive with larger $k$, and distinguish graphs of size $n \leq k$ up to isomorphism. As such, they have proven to be a useful yardstick for the  development of expressive \GNN architectures. The model proposed by Morris et al. requires exponential space and time since it maintains $n^k$ features in memory, and applies message passing over $n^{k+1}$ edges. Maron et al.~\cite{maron2019provably} proposed $k$\textup{-IGNs} with $O(n^{k-1})$ space complexity and $O(n^k)$ time complexity, which were shown by Azizian and LeLarge ~\cite{azizian2020characterizing} and Geerts~\cite{geerts2020expressive} to have the same separating power as $k$\tdash{\GNN}s.

\HEGNNs constitute an alternative hierarchy, where nesting depth $d$ yields classifiers that distinguish graphs of size $n\leq d$ up to isomorphism (Theorem \ref{thm:IRvsNEGNNgraphs}). \HEGNNs perform simple message passing on $n^d$ graphs, but they can do this in sequence and hence need to store only $n^2$ node features. Using the logical characterizations we show the separating power of \HEGNNs with nesting depth $d$ is at most that of $d+2$\tdash{\GNN}s, or equivalently the $d+1$\tdash\WL algorithm.
\begin{restatable}{theorem}{thmWLstrongerthanNEGNN}
    \label{thm:k-WL>=k-NEGNN} For $d \geq 0$, $\rho((d+1)\tdash\WL)=\rho((d+2)$\tdash\GNN$)\subseteq \rho(\HEGNNd{d})$.
\end{restatable}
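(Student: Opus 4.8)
The plan is to route everything through the logical characterizations and reduce the claim to a bound on the number of first-order variables needed to express \GMLone{\down} in counting logic. The first equality is a restatement of known correspondences: by Theorem~\ref{thm:k-GNN=C^k} we have $\rho((d+2)\tdash\GNN)=\rho(\textrm{\sf C}^{d+2})$, and by the classical correspondence of Cai et al.~\cite{cai1992optimal} the $(d+1)\tdash\WL$ refinement has exactly the separating power of $\textrm{\sf C}^{d+2}$, giving $\rho((d+1)\tdash\WL)=\rho(\textrm{\sf C}^{d+2})=\rho((d+2)\tdash\GNN)$. So it remains to prove the inclusion $\rho(\textrm{\sf C}^{d+2})\subseteq\rho(\HEGNNd{d})$, and by Theorem~\ref{thm:HEGNNvsGMLdown} this is the same as $\rho(\textrm{\sf C}^{d+2})\subseteq\rho(\GML(\down^d))$. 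Since $\rho$ reverses inclusions of classifier families, it suffices to show that every node classifier defined by a $\GML(\down^d)$ sentence is already definable by a $\textrm{\sf C}^{d+2}$ formula with one free variable.

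To this end I would set up the standard translation $ST_t(\cdot)$ from graded hybrid logic into first-order logic with counting, carrying along an assignment of the currently bound hybrid variables to distinct first-order variables together with a distinguished variable $t$ for the current point. Atoms translate as $ST_t(p)=P(t)$ and $ST_t(z)=(t=u_z)$, where $u_z$ is the first-order variable holding the point bound to $z$; the Boolean connectives are homomorphic; a graded modality becomes a counting quantifier, $ST_t(\Diamond^{\geq k}\phi)=\exists^{\geq k} s\,(E(t,s)\wedge ST_s(\phi))$; and the binder reuses the current-point variable, namely $ST_t(\down z.\phi)=ST_t(\phi)$ with $z$ assigned to the variable $t$. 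One then checks by induction on $\phi$ that $G,v\models\phi$ iff $G\models ST_t(\phi)[t:=v]$, which is routine.

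The crux is the variable count. I would prove, by induction on the formula, that translating a \GMLone{\down} subformula occurring within the scope of $i$ binders needs at most $i+2$ distinct first-order variables: $i$ of them frozen to hold the bound points, and $2$ scratch variables recycled for the modal navigation exactly as in the two-variable translation of ordinary modal logic. The key observation making this tight is that the binder $\down z$ does not cost a fresh variable at the moment of binding: immediately after $\down z$ the current point coincides with the point bound to $z$, so a single variable $t$ can play both roles, and an additional variable is only consumed once the formula navigates away from that point while still needing to test equality against it. A careful accounting then shows that the active set of variables at binder-depth $i$ peaks at $i+2$ (even when crossing into a new binder, the transient usage stays at most $i+3\le d+2$ since then $i+1\le d$), so that a sentence — which has binder-depth at most $d$ and a single free variable $t$ — lies in $\textrm{\sf C}^{d+2}$. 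The main obstacle, and the only genuinely non-routine part, is precisely this bookkeeping: one must verify that the two scratch variables can always be recycled across modal steps without ever overwriting a frozen bound point, and that crossing a binder never forces more than $d$ points to be remembered at once.
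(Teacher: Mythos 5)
Your proposal is correct and takes essentially the same route as the paper: reduce via Theorem~\ref{thm:HEGNNvsGMLdown} to showing that every $\GML(\down^d)$ sentence translates into $\textrm{\sf C}^{d+2}$ by a standard translation with two recycled scratch variables, then invoke Theorem~\ref{thm:k-GNN=C^k} (and Cai et al.\ for the \WL equality). The only difference is bookkeeping: the paper first renames hybrid variables so that at most $d$ are used and translates $\down z_i.\phi$ as $\exists z_i(z_i=x\land tr_x(\phi))$ with two alternating variables $x,y$ (cf.~Table~\ref{tab:standard-translation}), whereas you identify each bound variable with the current-point variable at binding time and recruit a fresh scratch variable when navigating away; both accountings peak at $d+2$ variables.
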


This generalizes a recent result by Frasca et al.~\cite{frasca2022understanding}, who showed that the separating power of many common subgraph\tdash\GNNs is bounded by that of $2$\tdash\WL. We further show that this result is optimal, in the sense that \HEGNNs with nesting depth $d$ can distinguish nodes that $d+1$\tdash\GNNs can not:
\begin{restatable}{theorem}{thmNEGNNstrongerthanWL}
    \label{thm:NEGNN-incomparable-with-kGNN}
    For $d \geq 0$, \HESGNNdr{d}{3} can distinguish pointed graphs that cannot be distinguished by $d$\tdash\WL, or equivalently, by a $(d+1)$\tdash\GNN.
\end{restatable}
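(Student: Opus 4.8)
The plan is to combine the homomorphism‑counting machinery of Theorem~\ref{them:egorank} with the fact that homomorphism counts from high‑tree‑width patterns are invisible to low‑dimensional \WL. For each $d$ I would use the single rooted pattern $F=K_{d+2}$, the complete graph on $d+2$ vertices rooted at an arbitrary vertex $u$, and argue: (i) the node classifier $v\mapsto\hom(K_{d+2},(G,v))$ is realised by a \HESGNNdr{d}{3} classifier (after thresholding, this is just ``$v$ lies on a $(d+2)$-clique''), while (ii) this classifier separates a pair of pointed graphs that are $d\tdash\WL$-equivalent. Since any homomorphism out of a complete graph is injective with clique image, $\hom(K_{d+2},(G,v))$ is a positive multiple of the number of $(d+2)$-cliques through $v$, so the construction is clique detection.

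For (i) the key structural point is that $K_{d+2}$ simultaneously realises the minimal possible ego-rank and a minimal radius. Its radius from $u$ is $1$, and I would verify $\text{ego-rank}(K_{d+2},u)=d$: the lower bound $\text{ego-rank}\ge \text{tree-width}-1=(d+1)-1=d$ is the ego-rank/tree-width inequality, and for the matching upper bound I exhibit an explicit dependency function, ordering the vertices $u=w_0,w_1,\dots,w_{d+1}$ and setting $dep(w_i)=w_{i-1}$ for $1\le i\le d$ and $dep(w_{d+1})=w_{d-1}$, so that $w_d$ and $w_{d+1}$ become siblings and every node rank is at most $d$. With ego-rank $d$, Theorem~\ref{them:egorank} yields a depth-$d$ \HEGNN computing $\emb_G(v)\oplus\hom(K_{d+2},(G,v))$. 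Because $K_{d+2}$ has radius $1$, every such homomorphism lands in the closed neighborhood $G^1_v\subseteq G^3_v$, so the count depends only on $G^3_v$; this lets me replace the \HEGNN by a depth-$d$ radius-$3$ \HESGNN computing the same quantity (equivalently, guarding the corresponding $\GMLone{\down^d}$-sentence by $W^3$ keeps it in $\GMLone{\down^d_{W^3}}$, using Theorem~\ref{thm:HESGNN=GML(downarrow,W)}).

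For (ii) I would invoke the homomorphism-count characterisation of \WL (Dvo\v{r}\'ak; Dell, Grohe and Rattan): $d\tdash\WL$ determines exactly the counts $\hom(F,-)$ with $\text{tree-width}(F)\le d$. Since $\text{tree-width}(K_{d+2})=d+1$, the number of $(d+2)$-cliques is \emph{not} determined by $d\tdash\WL$ (intuitively, complete graphs are cores whose counts cannot be reconstructed from strictly lower-tree-width patterns), so there exist graphs $G,G'$ that are $d\tdash\WL$-equivalent with $\hom(K_{d+2},G)\neq\hom(K_{d+2},G')$. A short pigeonhole step then lifts this to pointed graphs: $G$ and $G'$ carry identical multisets of $d\tdash\WL$ vertex-colors, and since their total clique participation differs, some color class must have differing participation multisets, yielding $v\in G$ and $v'\in G'$ of the \emph{same} $d\tdash\WL$ color — hence indistinguishable by $d\tdash\WL$, equivalently by a $(d+1)\tdash\GNN$ via Theorem~\ref{thm:k-GNN=C^k} — but with $\hom(K_{d+2},(G,v))\neq\hom(K_{d+2},(G',v'))$. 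Thresholding the \HESGNNdr{d}{3} count between these two values gives the separating classifier.

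The main obstacle is arranging part (i): one needs a single rooted pattern that is cheap for \HESGNNdr{d}{3} yet hard for $d\tdash\WL$, and these requirements pull against each other, because $\text{ego-rank}\ge\text{tree-width}-1$ forces any $d\tdash\WL$-hard pattern (tree-width $\ge d+1$) to have ego-rank $\ge d$ already. Complete graphs are precisely the structures that make this inequality tight while also achieving radius $1$, which is why $K_{d+2}$ threads the needle; the two delicate, load-bearing points are therefore the exact ego-rank computation above and the external fact that counting $(d+2)$-cliques genuinely requires $(d+1)\tdash\WL$ rather than being recoverable from lower-tree-width homomorphism counts.
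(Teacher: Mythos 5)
Your overall architecture (a pattern that is cheap for depth-$d$ \HEGNN but invisible to $d$\tdash\WL) is sound, and two of its ingredients check out: the ego-rank computation $\text{ego-rank}(K_{d+2},u)=d$ is correct (the sibling trick $dep(w_{d+1})=w_{d-1}$ makes the tree-width lower bound tight, and the induced set $dep^{-1}(w_{d-1})=\{w_d,w_{d+1}\}$ is a single edge, hence acyclic), and the pigeonhole lift from graph-level to node-level $d$\tdash\WL-equivalence is fine. However, there is a genuine gap in step (ii): the fact you call ``external'' does \emph{not} follow from Dvo\v{r}\'ak or Dell--Grohe--Rattan. Those results say that $d$\tdash\WL-equivalence coincides with homomorphism indistinguishability over \emph{all} patterns of tree-width at most $d$; they do \emph{not} say that for a particular pattern $F$ of tree-width $d+1$ there exist $d$\tdash\WL-equivalent graphs with $\hom(F,G)\neq\hom(F,G')$. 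That stronger statement is precisely the assertion that the class of tree-width-$\le d$ graphs is \emph{homomorphism-distinguishing closed} --- Roberson's conjecture for this class, settled only recently by Neuen. So your separating pair exists, but only by invoking a deep recent theorem that your proposal neither cites nor replaces with a construction. The paper sidesteps this entirely by using an explicit construction (Lemma~\ref{lem:WL_indistinguishable_d2_clique_graphs}, due to Morris et al.\ and Qian et al.): graphs $G_{d+1}\equiv_{d\tdash\WL}H_{d+1}$ where $G_{d+1}$ has a \emph{colorful distance-2 clique} of size $d+2$ and $H_{d+1}$ does not --- distance-2 cliques rather than genuine cliques, exactly because CFI-style constructions do not produce large cliques.

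There is a second, smaller gap in step (i): the passage from the depth-$d$ \HEGNN of Theorem~\ref{them:egorank} to a \HESGNNdr{d}{3}. This direction matters, because $\rho(\HEGNNd{d})\subseteq\rho(\HESGNNdr{d}{3})$ (Lemma~\ref{lem:GML_with_r_is_not_stronger} plus the characterizations), i.e.\ the subgraph-restricted model is, if anything, \emph{less} separating, so proving the separation for unrestricted \HEGNNs does not yield the stated theorem. Your justification --- ``guard every binder with $W^3$'' --- is not semantics-preserving in general; the paper's lemma only converts guarded sentences into unguarded ones, never the reverse. The locality intuition ($K_{d+2}$ has radius $1$, so all homomorphic images live in $G^1_v\subseteq G^3_v$) is the right idea, but making it rigorous requires either re-running the construction of Theorem~\ref{them:egorank} under \emph{nested} radius-3 restrictions and checking that every count it computes is unchanged, or a syntactic argument that the particular $\GMLone{\down^d}$-sentence produced by the bounded-degree translation remains correct after guarding. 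The paper never faces this issue because its formula $\down x_1.W^3(\Diamond\Diamond\down x_2.W^3(\cdots))$ is built directly inside the guarded fragment, with the colorful labels $\alpha_{d+1},\alpha_{d+2}$ doing the work that your ego-rank sibling trick does, namely handling the last two clique vertices without spending binders. In short: your route can likely be completed, but as written it rests on a misattributed (and much deeper) external theorem and an unproven model conversion, both of which the paper's proof avoids by construction.
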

Given the difference in the number of stored embeddings, one would expect that for $d \geq 0$,\\
$\rho((d+1)\tdash\WL) \subsetneq \rho(\HEGNNd{d})$.
We leave this as an open conjecture.

\subsection{Relationship with other variants of subgraph\tdash\GNNs}

Numerous recent studies~\cite{frasca2022understanding,you2021identity, zhang2023complete, zhang2021nested, zhao2022from} have proposed variants of subgraph\tdash\GNNs, where message passing is applied to a collection of subgraphs. Subgraph\tdash\GNNs show state-of-the-art performance on real-world benchmarks such as ZINC molecular property prediction ~\cite{bevilacqua2022equivariant, frasca2022understanding,zhao2022from}. In particular, a variant of subgraph\tdash\GNNs in which nodes receive neighborhood embeddings based on their radius $r$ neighborhood with distinguished center node label, also known as ``ego-networks'', have become prominent as simple yet expressive subgraph\tdash\GNN architecture~\cite{frasca2022understanding, you2021identity, zhang2021nested}.

\textup{ID-GNN}s~\cite{you2021identity} are \HESGNNs $(\calB, \calC)$ with nesting depth $1$, where $\calC$ is a trivial \GNN that doesn't apply any message passing. Nested \GNNs~\cite{zhang2021nested} do not use individualization, but perform global pooling over subgraphs followed by message passing over the input graph. Since local aggregation with individualization is strictly more expressive than global aggregation over connected subgraphs~\cite{zhang2023complete}, the separating power of nested \GNN is strictly less than that of \HESGNNs with nesting depth 1. Theorem \ref{thm:HESGNN=GML(downarrow,W)} thus provides a logical upper bound to the separating power of these models. 

Several other generalizations of subgraph\tdash\GNNs have recently been proposed~\cite{papp2022theoretical, qian2022ordered}. Most related to this work, Qian et al. introduced Ordered Subgraph Aggregation networks that apply message passing on $|G|^k$ copies of $G$, each labeled with the atomic type of a $k$ size subgraph, and perform aggregation over representations for a predefined selection of the $|G|^k$ subgraphs. Like \HEGNN this constitutes a strict hierarchy with separating power upper bounded by $k+1$\tdash\WL, but not by $k$\tdash\WL. It is not immediately clear whether \textup{OSAN} yields more expressive node classifiers than \HEGNN, since \textup{OSAN} performs global aggregation over arbitrary sets of subgraphs, while \HEGNN uses local message-passing hierarchically. 
\section{Experiments}
We apply \HESGNN to ZINC-12k~\cite{dwivedi2023benchmarking, irwin2012zinc} and compare with standard \textup{GCN}~\cite{kipf2017semisupervised}, \textup{GIN}~\cite{xu2019powerful} and PNA~\cite{corso2020principal} layers, as well as \textup{GIN}s augmented with random identifiers (GNN+RID)~\cite{abboud2020surprising, sato2021random}, Tinhofer orderings (GNN+T)~\cite{pellizzoni2024expressivity}, and homomorphism counts from cycles of size $3$ to $10$ ($\mathfrak{F}$-GNN)~\cite{barcelo2021graph}.
We use feature dimension 256 for all models for a maximum of 1000 epochs on a single 20GB gpu.
\begin{figure}[ht]
    \centering
    \includegraphics[width=\linewidth]{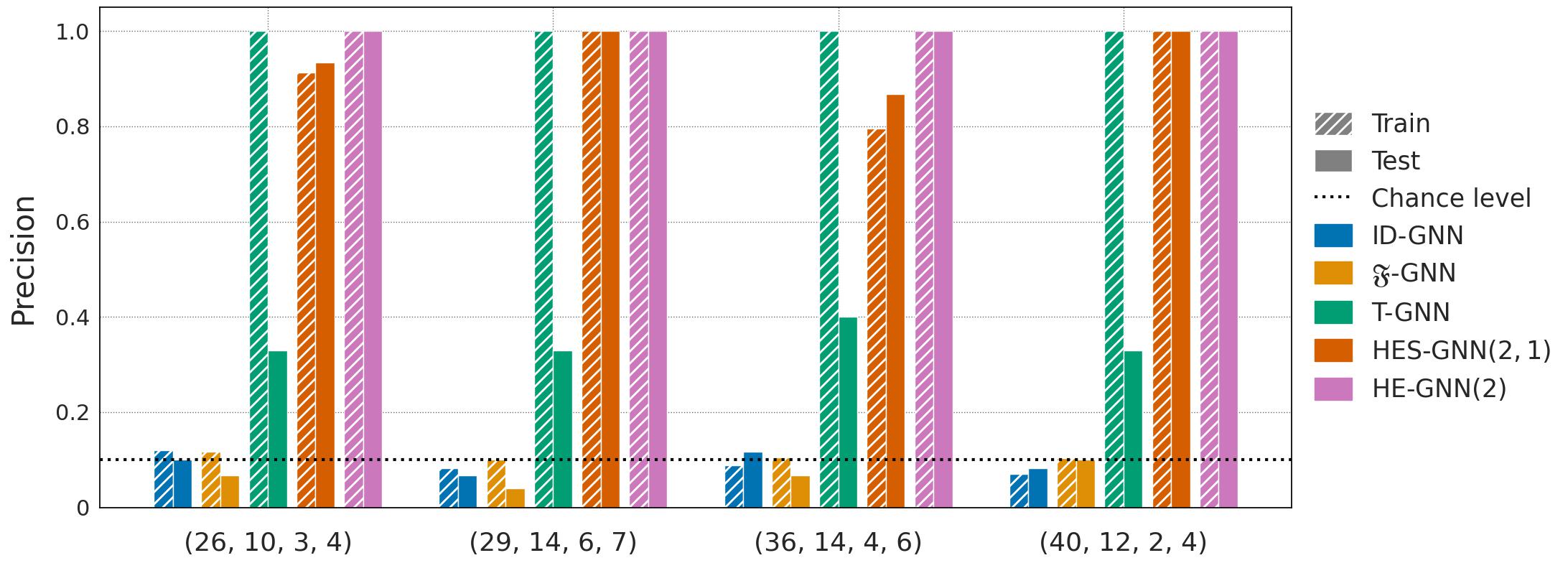}
    \caption{Precision on isomorphism classification of strongly regular graphs for four settings of $(v,k,\lambda,\mu)$. Models from left to right: ID-GNN~\cite{you2021identity}, $\mathfrak{F}\tdash\GNN$ with counts from $C_3, \dots C_{10}$~\cite{barcelo2021graph}, GNN with Tinhofer labelings~\cite{pellizzoni2024expressivity}, \HESGNN with $d=2, r=1$ and \HEGNN with $d=2$.}
    \label{fig:barplot}
\end{figure}
\begin{figure}[h]
  \centering
  \begin{minipage}[c]{0.6\textwidth}
    \centering
    \includegraphics[width=0.4908\linewidth]{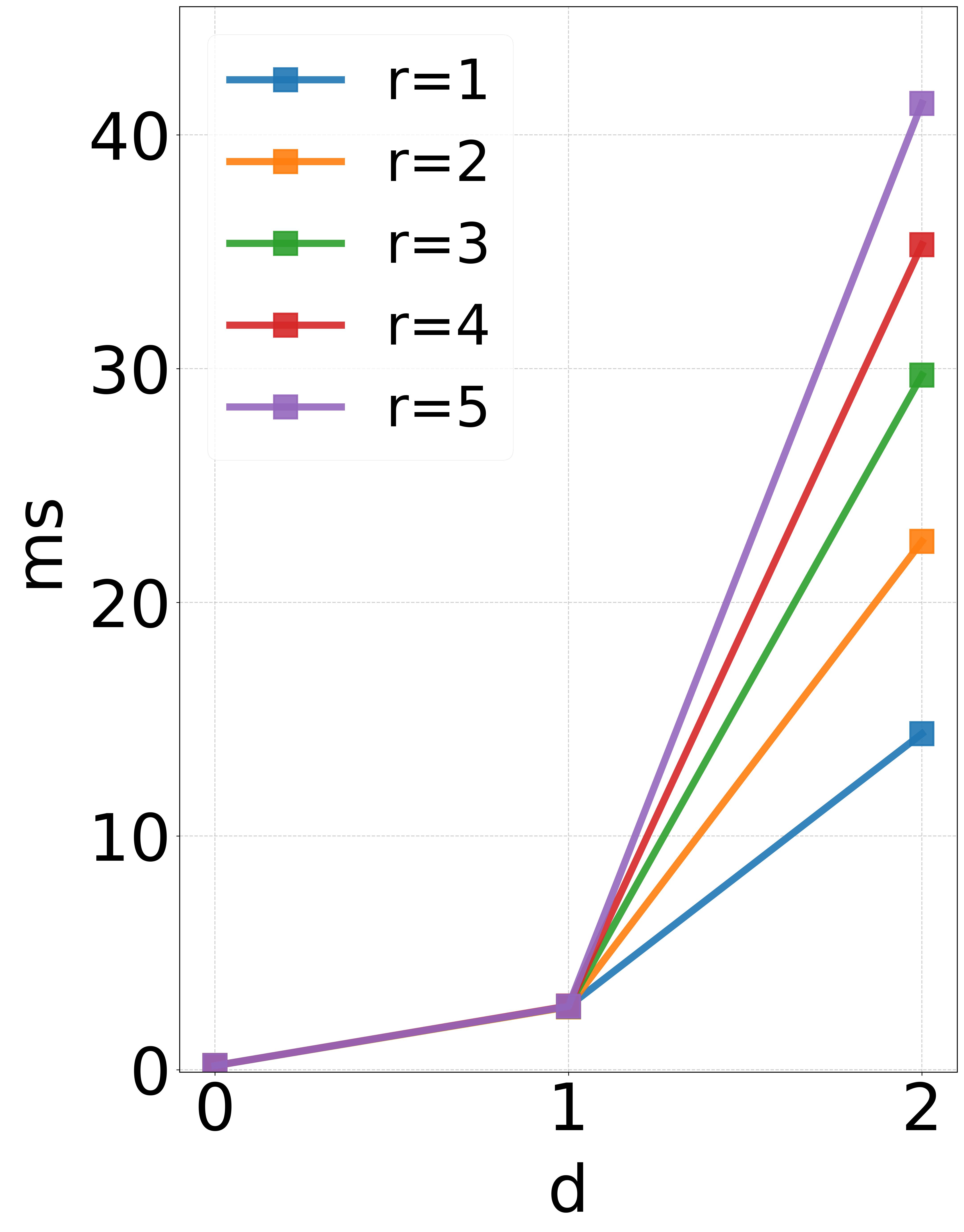}
    \includegraphics[width=0.48\linewidth]{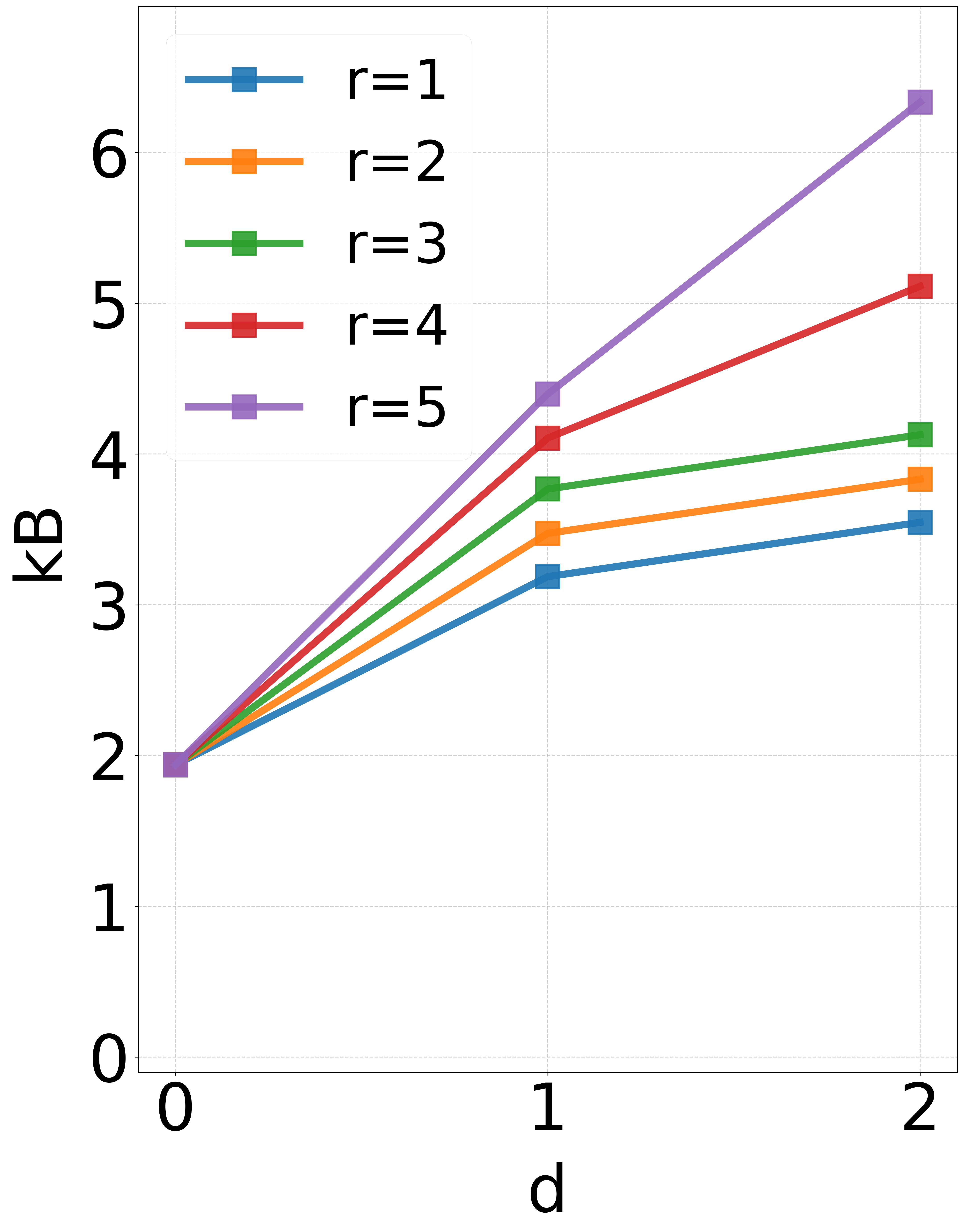}
    \caption{Time (left) and memory (right) per sample of a single forward pass of \HESGNNdr{d}{r} on ZINC. This does not include gradients. Experiments run with batch size $20$ and feature dimension $256$.}
    \label{fig:ablation}
  \end{minipage}%
  \hfill
  \begin{minipage}[c]{0.36\textwidth}
    \centering
    \captionof{table}{Mean absolute error on ZINC-12k, test scores after 10 runs.}
    \label{tab:zinc}
    \begin{adjustbox}{width=\linewidth}
      \begin{NiceTabular}{@{}lcc@{}}
        \toprule
        Model & & ZINC (MAE)\\
        \midrule
        GCN~\cite{kipf2017semisupervised}       & & 0.216\\
        GIN~\cite{xu2019powerful}               & & 0.193\\
        PNA~\cite{corso2020principal}& & 0.207\\
        GNN+T~\cite{pellizzoni2024expressivity} & & 0.199\\
        GNN+random ID~\cite{abboud2020surprising,sato2021random} & & 0.226\\
        
        $\frak{F}$\textup{-GNN}~\cite{barcelo2021graph}         & & 0.154\\
        \midrule
        & (d,r) & \\
        \midrule
        \Block{2-1}{\HESGNN}   
                               & (1,1) & 0.191\\
                               & (1,2) & 0.189\\
                               & (1,3) & 0.149\\
                               & (2,1) & 0.188\\
                               & (2,2) & 0.183\\
                               & (2,3) & \textbf{0.140}\\
        \\[-0.7em]
        \Block{1-1}{$\frak{F}$\tdash\HESGNN} & (1,3) & \textbf{0.140}\\
        \bottomrule
      \end{NiceTabular}
    \end{adjustbox}
  \end{minipage}
  \vspace{-0.25cm}
\end{figure}
All code used for the experiments is available on git.\footnote{\href{https://github.com/ariesoeteman/HEGNN}{https://github.com/ariesoeteman/HEGNN}} Table \ref{tab:zinc} shows the achieved mean absolute error after $10$ runs and validation score selection. \HESGNNs with depth $1$ outperform $\calF$\textup{-GNN} even though ZINC has cycle counts in its target function. \HESGNNdr{2}{3} performs equally well as $\calF$\tdash\HESGNNdr{1}{3}, a depth $1$ model augmented with homomorphisms counts.

Figure \ref{fig:barplot} shows the performance of \HEGNN and \HESGNN on distinguishing strongly regular graphs. We use $4$ synthetic datasets, each containing $30$ random isomorphisms of 10 strongly regular graphs for a specific choice of parameters $(v,k,\lambda,\mu)$~\cite{spence_srg}. Since strongly regular graph are indistinguishable by 2\tdash\WL and hence by 3\tdash\GNNs, depth $1$ \HEGNN and ID-GNN~\cite{you2021identity} perform at chance level. 
Adding cycle counts doesn't alleviate this, in line with theorem \ref{them:egorank}. Tinhofer orderings fully individualize all nodes to yield maximal separating power, resulting in 100\% precision on the test set. However, since these orderings are not isomorphism-invariant over the tested graphs, T-GNN does not generalize to the test set. Depth 2 \HEGNNs distinguish the strongly regular graphs and generalize to unseen isomorphisms, even with restricted subgraph radius. Figure \ref{fig:ablation} shows time and memory usage of \HEGNNs with varying depth and radius on ZINC for a single forward pass. 

\section{Limitations and directions for further research}



\textbf{Efficiency}
Compared to $k$\tdash\GNNs, \HEGNNs store $|G|^2$ instead of $|G|^k$ node features. Nevertheless, they still require an exponential amount of message passing steps in the nesting depth $d$, rendering implementations infeasible for large $d$. \IR-based graph isomorphism tests typically reduce tree size via informed cell selection and automorphism pruning, whereas Dupty and Lee~\cite{dupty2022graph} address scalability in a learning setting through compressed approximate trees. Further study is needed to explore how techniques from isomorphism testing can be adapted to graph representation learning, and how these optimizations affect expressive power.

\textbf{Expressive power}
 Some questions regarding separating power are left open by our results. In particular, we gave sufficient conditions for graphs from which \HEGNNd{d} can count homomorphisms in terms of size and ego-ranks, but we have not shown these conditions are necessary. 
Our results are also limited in scope by the fact that they concern separation power and not the uniform expressibility of functions. Several uniform expressibility results have recently been obtained for \GNNs (both relative to first-order logic~\cite{barcelo2020logical}, and
in terms of Presburger logic~\cite{Benedikt2024:decidability}), and it remains to be seen if these results can be extended to \HEGNNs.

\textbf{Beyond expressive power} The empirical success of \GNNs cannot be understood through expressive power alone, as it also depends on trainability aspects such as convergence and generalization. These remain to be explored more extensively, across \GNN architectures and in the context of \HEGNNs.
\clearpage

\section*{Acknowledgements}
We thank Martin Grohe, Ronald de Haan, and Carsten Lutz  for valuable 
discussions that contributed to the development of this work.
\bibliographystyle{plain}
\bibliography{neurips}

\clearpage
\appendix

\section*{Missing proofs}

\section{Proofs for section 2}

\propGMLtoGNN*
\begin{proof}
Let $\Phi$ be the smallest set of GML-formulas containing $\phi$ that is closed under taking subformulas.
The \emph{operator depth} of a GML-formula is defined as follows: $\depth(p)=0$, $\depth(\neg\phi)=\depth(\phi)+1$, 
$\depth(\phi\land\psi)=\max\{\depth(\phi),\depth(\psi)\}+1$
$\depth(\Diamond^{\geq n}\phi)=\depth(\phi)+1$.
Let $|\Phi|=k$ and let $L$ be the maximal operator depth of a formula in $\Phi$.
Let 
\[\calA = ((\COM_i)_{i=1\ldots L},(\AGG_i)_{i=1\ldots L})\]
where
\begin{itemize}
\item $\COM_1:\mathbb{R}^{2|P|}\to \mathbb{R}^{k}$  given by $\COM_1(x_1,\ldots, x_{|P|},y_1,\ldots, y_{|P|})=(z_1, \ldots, z_k)$ with $z_i=x_j$ if $\phi_i$ is of the form $p_j$ and $z_i=0$ otherwise.
    \item For $i>1$,  $\COM_i:\mathbb{R}^{2k}\to\mathbb{R}^{k}$ is given by $\COM_i(x_1, \ldots, x_{k}, y_1, \ldots y_{k})=(z_1, \ldots, z_{k})$ with
    \[z_i=\begin{cases} x_i & \text{if $\phi_i$ is of the form $p$} \\ 1-x_j & \text{if $\phi_i$ is of the form $\neg\phi_j$} \\ \max\{x_j,x_m\} & \text{if $\phi_i$ is of the form $\phi_j\land\phi_m$} \\ \text{$1$ if $y_j\geq n$, $0$ otherwise} & \text{if $\phi_i$ is of the form $\Diamond^{\geq n}\phi_j$}\end{cases}\]
    \item Each $\AGG_i$ is (pointwise) sum. 
\end{itemize}

It is not difficult to see that $\COM_i$ can be implemented by a two-layer feed-forward network using the \relu activation function. In particular, $\max\{x_j,x_m\}$ can be 
expressed as $\relu(x_j+x_m-1)$ and
``$1$ if $y_j\geq n$, $0$ otherwise''
can be expressed as 
$\relu(1-\relu(n-y_i))$.

It can be shown by a straightforward induction on $d$, that, for all $d\geq 0$, 
for all $\phi\in\Phi$ of operator depth $d$, 
and for all $i>d$,
$\emb_G^{i}(v)(j)$ is 1 if $G,v\models\phi_j$ and 0 otherwise.

In order to turn $\calA$ into a classifier, finally, we extend $\COM_L$ with one additional linear layer
of dimensionality $(k,1)$ that takes
a vector $(x_1, \ldots, x_k)$ and 
outputs $x_i$, where $\phi_i=\phi$.
\end{proof}

\propGNNtoGML*
\begin{proof}
Let $\calA=((\COM_i)_{i=1\ldots L}, (\AGG_i)_{i=1\ldots L})$.
By definition,
$\run_{\calA}(G,\emb_G)(v)$ is equal to $\emb^L_G(v)$ where, for each $i=0\ldots L$, $\emb_G^i:V\to\mathbb{R}^{D_i}$
is given by
\begin{itemize}
    \item $\emb_G^0 = \emb_G$
    \item $\emb_G^{i} = \{v: \COM_i (\emb_G^{i-1}(v) \oplus \AGG_i (\multiset{\emb_G^{i-1}(u) \mid  (v,u)\in E}))\mid v\in V\}$ for $i>0$
\end{itemize}

The main statement can therefore be restated as saying that the set
\[X_L = \{\emb_G^L(v)\mid \text{$G=(V,E,\lab)$ is a graph of degree at most $N$ and $v\in V$}\}\]
is finite and there is a 
defining GML-formula $\phi_{\textbf{x}}$ for each
$\textbf{x}\in X_L$ (over pointed graphs of degree at most $N$).
We proceed by induction on $L$.

For
$L=0$,
$\emb^L_G$ equals $\emb_G$. In this case, $X$ is equal to the set of all multi-hot encodings, i.e.,
$X=\{0,1\}^{D}$, and for every
$\textbf{x}=(x_1, \ldots, x_D)\in X_L$, 
we can simply choose $\phi_\textbf{x}=\alpha_1\land\cdots\land\alpha_D$ where $\alpha_i=p_i$ if 
$x_i=1$ and $\alpha_i=\neg p_i$ if $x_i=0$.

Next, let $L>0$. By induction hypothesis,
the claim holds for $X_{L-1}$.
Let us consider
pairs $(\textbf{z},Z)$
where $\textbf{z}\in X_{L-1}$ and $Z$ is a multiset of vectors belonging to $X_{L-1}$, such that $Z$ has cardinality at most $N$. Note that there are at most finitely many such pairs. For any such 
pair let $\phi_{(\textbf{z},Z)}$ be the GML-formula \[\phi_{\textbf{z}}\land \bigwedge_{\text{$\textbf{u}$ occurs in $Z$ with cardinality $n$}}(\Diamond^{\geq n}\phi_{\textbf{u}}\land\neg\Diamond^{\geq n+1}\phi_{\textbf{u}})\]
We now define our formula $\phi_{\textbf{x}}$ as the disjunction of $\phi_{(\textbf{z},Z)}$ for all
pairs $(\textbf{z},Z)$ for which it holds that 
$\COM_L(\textbf{z}\oplus \AGG_L(Z))=\textbf{x}$.
It follows from the construction
that $(G,v)\models\phi_{\textbf{x}}$ if and only if $\emb_G^L(v)=\textbf{x}$.
\end{proof}

\section{Proofs of logical characterizations in sections 3 and 4}

We prove several lemmas building up to a uniform translation from $\HESGNNdr{d}{r}$ to $\GML(\down^d_{W^r})$ (theorem \ref{thm:uniform_GML_to_HESGNN}) and a converse uniform translation for graphs with bounded degree (theorem \ref{thm:HESGNN_to_GML_down_W}). The logical characterizations in theorems \ref{thm:HEGNNvsGMLdown} and \ref{thm:HESGNN=GML(downarrow,W)} follow with small additions or adjustments to the proofs. We will use $\Diamond_{k}$ for $\underbrace{\Diamond \Diamond \dots \Diamond}_{k}$.

\begin{proposition}
    Every $\GML(\down,@)$-sentence is equivalent
to a $\GML(\down)$-sentence of the same $\down$-nesting-depth over unordered graphs.
\end{proposition}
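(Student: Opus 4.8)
The plan is to eliminate the $@$-operator by a structural rewriting that replaces each subformula $@_x\phi$ by a bounded ``reverse navigation'' back to the uniquely marked $x$-node, exploiting that over undirected graphs every edge can be traversed in both directions. The starting observation is that $@_x\phi$ is a \emph{global} formula: for a fixed graph and a fixed interpretation of the bound variables, its truth value is the same at every node, since the semantic clause for $@_x$ discards the current node and jumps to the node denoted by $x$. Over undirected graphs this global value can nonetheless be recovered locally: if the current evaluation node $c$ is known to be connected to the $x$-node by a walk of some fixed length $\ell$, then $@_x\phi$ is equivalent, \emph{in that syntactic position}, to $\Diamond_\ell(x\wedge\phi)$. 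Indeed, by symmetry a walk of length $\ell$ runs from $c$ back to the $x$-node, and since $x$ marks a unique node, every length-$\ell$ walk witnessing $\Diamond_\ell(x\wedge\phi)$ must end exactly at that node, where $\phi$ is then evaluated. Crucially this replacement adds only diamonds and no binders, so the $\down$-nesting-depth is preserved.

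To turn this into a proof I would first rename bound variables apart so that there is no shadowing, and then define the translation by a top-down induction that carries a partial function $\mu$ assigning to each currently active variable the number of modal operators traversed since its binder along the current branch. Concretely, $\mu(x):=0$ at a binder $\down x$; $\mu$ is incremented on all active variables when descending through a $\Diamond^{\geq k}$; negations, conjunctions and binders are handled homomorphically; and an occurrence $@_x\phi$ is replaced by $\Diamond_{\mu(x)}(x\wedge\phi')$, where $\phi'$ is the recursive translation of $\phi$ begun from the updated map $\mu'$ with $\mu'(x):=0$ and $\mu'(y):=\mu(x)+\mu(y)$ for the remaining active variables $y$ (reflecting that, after navigating from $c$ to the $x$-node, a walk of length $\mu(x)+\mu(y)$ still reaches the $y$-node). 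Note that nested occurrences such as $@_x@_y\psi$ require no separate treatment: they are resolved by the recursion, which composes the two navigations. Since the output contains exactly the binders of the input and introduces none, the $\down$-nesting-depth is unchanged.

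The correctness argument rests on a single invariant, which is also where the main difficulty lies: whenever the translation reaches a subformula under the map $\mu$, the current evaluation node is connected to the node denoted by $x$ by a walk of length exactly $\mu(x)$, for every active $x$. The base case $\mu(x)=0$ is immediate, and the inductive step across a modality is precisely where undirectedness is used: moving to a neighbour $c'$ of $c$ lengthens every guaranteed reverse walk by one, because symmetry supplies the edge back from $c'$ to $c$; over directed graphs this step fails, which is exactly why the statement is restricted to undirected graphs. Granting the invariant, the in-context equivalence $@_x\phi\equiv\Diamond_{\mu(x)}(x\wedge\phi)$ follows from uniqueness of the $x$-marker, and a routine induction on formula structure then shows the translated sentence agrees with the original on all pointed graphs. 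The hard part is thus setting up the generalized, context-dependent statement correctly: tracking the distances $\mu$, verifying that walks compose so that the update $\mu'(y)=\mu(x)+\mu(y)$ genuinely certifies a walk to the $y$-node, and confirming that negations on the syntactic path do not disturb the invariant, since a negation flips truth values without altering which node an inner subformula is evaluated at.
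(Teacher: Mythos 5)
Your proof is correct, and it rests on the same core idea as the paper's: over undirected graphs, $@_x\psi$ can be simulated by a chain of diamonds navigating back to the node uniquely marked by $x$, with the required walk length controlled by the modal structure of the sentence. The difference lies in how that length is handled. The paper does essentially no bookkeeping: taking $n$ to be the modal depth of the whole sentence, it replaces every subformula $@_x\psi$ by the disjunction $\bigvee_{i=0}^{2n}\Diamond_i(x\land\psi)$ (where $\Diamond_i$ denotes an $i$-fold $\Diamond$ prefix), relying only on the coarse invariant that the evaluation node always stays within walk-distance $2n$ of every marked node; this makes the replacement context-free and the induction routine. You instead compute, per occurrence of $@_x$, the exact walk length $\mu(x)$ via a context map that is incremented at modalities and composed at jumps ($\mu'(y)=\mu(x)+\mu(y)$), replacing $@_x\phi$ by the single chain $\Diamond_{\mu(x)}(x\land\phi')$. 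Your version buys a tighter, disjunction-free translation and makes the role of undirectedness fully explicit (the edge back from $c'$ to $c$ is exactly what sustains the invariant), at the price of a more delicate context-dependent induction: the $\mu'$ update at jumps is the step that would silently fail if gotten wrong, and you have it right, including the treatment of nested $@$'s by composition of walks through the jump point. The paper's version buys brevity and robustness, since no exact-length invariant needs to be maintained. Both translations add only diamonds and no binders, so both preserve $\down$-nesting-depth as required.
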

\begin{proof}
  Let $\phi$ be a $\GML(\down,@)$-sentence.
  Let $n$ be the 
 modal depth of $\phi$, that is, the maximal nesting depth of the 
  modal operators in the sentence, and let $\phi'$ be the sentence
  obtained from $\phi$ by replacing every
  subformula of the form $@_x\psi$ by
  \[\bigvee_{i=0\ldots 2n}(\underbrace{\Diamond\cdots\Diamond}_{\text{length $i$}}(x\land\psi))\]
  Clearly, $\phi'$ has the same $\down$-nesting-depth as $\phi$.
  A straightforward induction proof shows that $\phi$ and $\phi'$ are equivalent on unordered graphs. Intuitively, this is because all variables in $\phi$ are bound, and $2n$ bounds the distance 
  between nodes in the subgraph that
  the formula $\phi$ can ``see''.
\end{proof}

\begin{proposition}[Canonical Form]
\label{prop:canonicalform}
Let $Nd^W_\phi(\psi)$ be the number of $W$ operators in $\phi$ that have $\psi$ in their scope. For each $\GMLone{\down^d_{W^r}}$ sentence $\phi$ there exists an equivalent $\GMLone{\down^d_{W^r}}$ sentence $\psi$ such that for every subformula of the form $\downarrow x_i W^r \psi'$, $i = d+1 - Nd^W_{\psi}(\psi')$. We call such $\psi$ canonical.
\end{proposition}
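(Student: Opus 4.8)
The plan is to recognize that the canonical form is just a systematic renaming of bound variables according to a \emph{depth-indexed naming convention}, analogous to De Bruijn indices but using the fixed supply of names $x_1,\dots,x_d$. The index condition $i = d+1 - Nd^W_\psi(\psi')$ says exactly that a $\down_{W^r}$-binder occurring at $\down$-nesting-depth $k$ (i.e.\ with $k-1$ further $\down_{W^r}$-binders enclosing it) must bind the variable $x_{d+1-k}$: the outermost binder uses $x_d$, the next $x_{d-1}$, and the innermost (depth $d$) uses $x_1$. Since in a $\GMLone{\down^d_{W^r}}$ sentence every $\down$ comes paired with exactly one $W^r$ and these are the only $W$-operators, the count $Nd^W_\psi(\psi')$ equals the nesting depth $k$ at that point, so this reading is faithful.

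First I would define a recursive translation $T_{\sigma,k}$ that carries a partial renaming $\sigma$ (from the currently in-scope original variables to canonical names) and a counter $k$ recording how many $\down_{W^r}$-binders have already been entered. On the Boolean and modal cases $T$ is homomorphic, on a nominal test it returns $T_{\sigma,k}(x) = \sigma(x)$, and on a binder it sets
\[
T_{\sigma,k}(\down x.\, W^r\phi) \;=\; \down x_{d-k}.\, W^r\, T_{\sigma',\,k+1}(\phi),
\qquad \sigma' = \sigma[x \mapsto x_{d-k}].
\]
Applying $T_{\emptyset,0}$ to the input sentence produces a formula in which the binder entered as the $(k+1)$-th along any branch binds $x_{d-k}$, which is precisely the canonical index condition. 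By construction the $\down$-nesting-depth, the radii $r$, and the property of being a sentence are all preserved, so the output lies in $\GMLone{\down^d_{W^r}}$.

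The correctness statement I would then prove by induction on $\phi$ is the semantic invariant: for every graph $G$, node $v$, and assignment $g$ of the in-scope original variables to nodes, writing $h$ for the assignment with $h(\sigma(x)) = g(x)$, we have $G,v \models_g \phi$ iff $G,v \models_h T_{\sigma,k}(\phi)$. The nominal and binder cases are where the renaming is exercised; in the binder case $\down x$ (resp.\ $\down x_{d-k}$) reinterprets the marked node identically on both sides, after which the induction hypothesis applies with $\sigma'$ and the updated assignments. This reduces everything to standard $\alpha$-equivalence for the $\down$-binder.

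The main obstacle, and the only place the hypotheses are really used, is guaranteeing that the renaming never causes variable \emph{capture}, i.e.\ that $\sigma$ stays injective on the in-scope variables so that two distinct active bindings are never conflated (which is also what makes $h$ well defined). This holds because along any root-to-leaf branch the $\down_{W^r}$-binders occur at strictly increasing depths $1,2,\dots \le d$, and $k \mapsto d-k$ is injective on $\{0,\dots,d-1\}$, so the canonical names assigned along a branch are pairwise distinct; binders on incomparable branches may reuse a name, but their scopes are disjoint, so no capture arises. This is exactly why a $\down$-nesting-depth bound of $d$ makes the $d$ names $x_1,\dots,x_d$ sufficient, and it is the crux that makes the canonical form well defined.
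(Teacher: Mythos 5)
Your proof is correct, and it rests on the same underlying idea as the paper's: the canonical form is obtained by $\alpha$-renaming each $\down_{W^r}$-binder to the name determined by its nesting depth, which is sound because the names assigned along any root-to-leaf branch are pairwise distinct, while binders on incomparable branches have disjoint scopes. The two proofs differ in their bookkeeping, though. The paper argues bottom-up, by induction on the nesting depth $d$ combined with structural induction: given $\phi = \down x_i. W^r \phi_1$, it first canonicalizes $\phi_1$ (induction hypothesis), then substitutes the free occurrences of $x_i$ in the result by $x_{d+1}$, and prefixes $\down x_{d+1}.W^r$. You argue top-down, via a single recursive translation $T_{\sigma,k}$ carrying an explicit renaming environment and a depth counter, followed by a separate semantic invariant ($G,v \models_g \phi$ iff $G,v \models_h T_{\sigma,k}(\phi)$, with $h$ induced by $\sigma$ and $g$) proved by induction on $\phi$. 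The paper's version is shorter, but yours is more careful on precisely the points its terseness leaves implicit: the proposition is stated for sentences, yet the paper's induction hypothesis is applied to $\phi_1$, which has $x_i$ free, so the statement actually being inducted on must be generalized to open formulas together with a policy for their free variables --- your environment $\sigma$, with the accompanying injectivity argument, is exactly that generalization; likewise, the paper's substitution step tacitly needs that $x_{d+1}$ is never bound inside a canonical depth-$d$ formula (no capture), which is the same disjoint-names observation you make explicit as the crux. In short: same method, with your version supplying the capture-avoidance and free-variable bookkeeping that the paper's proof takes for granted.
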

\begin{proof}
    We assume the proposition holds for $d$ and apply induction over formula construction in \GMLone{\down^{d+1}_{W^r}}. Let $\phi_1 \in \GMLone{\down^d_{W^r}}$, and $\phi = \down x_i. W^r \phi_1$. Let $\psi_1$ be the canonical form of $\phi_1$ and construct $\psi^*_1$ substituting free occurrences of $x_i$ in $\psi_1$ by $x_{d+1}$. Then:
    \begin{align*}
        \psi = \down x_{d+1}. W^r \psi^*_1
    \end{align*}
     Clearly $\psi \equiv \phi$, and for every subformula $\downarrow x_j.\,W^r\,\psi'$ in $\psi$ we have $j = d+2 - Nd^W_\psi(\psi')$. Canonical form is maintained under construction with the other connectives ($\neg, \wedge, \lozenge^{>k}$).
\end{proof}

\begin{lemma}
\label{lem:GNN_can_be_copied}
    Given a finite tuple $(\calA^1, \dots \calA^m)$ where for $1 \leq j \leq m$, $\calA^j$ is a $(D,D')$\tdash\GNN, there exists a $(D,m\cdot D')$\tdash\GNN $\calA$ such that $\run_\calA(G)(u) = \displaystyle\bigoplus_{1 \leq j \leq m} (\run_{\calA^j}(G)(u))$. Moreover, if all $\calA^j$ use the same pointwise aggregation function, then $\calA$ uses the same aggregation function, and if all $\calA^j$ use 
    \relu-FFNNs as combination functions, the same holds for $\calA$.
\end{lemma}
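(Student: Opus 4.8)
The plan is to build $\calA$ by running the $m$ given networks \emph{in parallel}, each on its own disjoint block of coordinates, and reading off the concatenation of their outputs at the end. The one structural obstacle to a literal parallel product is that the $\calA^j$ may have different numbers of layers, whereas a single \GNN applies all of its layers synchronously to every node. I would therefore first normalize depths. Let $L_j$ be the number of layers of $\calA^j$ and set $L=\max_j L_j$. For each $j$ with $L_j<L$ I pad $\calA^j$ with $L-L_j$ trivial layers that \emph{freeze} its block: each such layer keeps the aggregation of $\calA^j$ (its output is simply discarded) and uses a combination function that projects onto the node's own stored embedding, ignoring the aggregated part. This projection is linear, hence expressible as a \relu-FFNN (using $x=\relu(x)-\relu(-x)$), so padding preserves both the aggregation type and the FFNN property, and after padding every $\calA^j$ has exactly $L$ layers while still computing $\run_{\calA^j}$.

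With all depths equal to $L$, I define $\calA$ so that at layer $i$ it carries the concatenation $\bigoplus_j \emb^{i}_{j}$ of the layer-$i$ embeddings of the individual networks, where $\emb^{i}_{j}$ is understood to have stabilized once $i\geq L_j$. For the aggregation at a layer $i>1$ I apply, blockwise, each $\AGG^j_i$ to the multiset of $j$-th sub-blocks; this is a well-defined multiset function and yields $\bigoplus_j \AGG^j_i(\multiset{\emb^{i-1}_{j}(u)\mid (v,u)\in E})$. For the combination at layer $i>1$ I first apply a fixed coordinate permutation regrouping the input $\bigl(\bigoplus_j \emb^{i-1}_{j}(v)\bigr)\oplus\bigl(\bigoplus_j \mathrm{agg}_j\bigr)$ into $\bigoplus_j\bigl(\emb^{i-1}_{j}(v)\oplus \mathrm{agg}_j\bigr)$, and then apply each $\COM^j_i$ to its own group. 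Layer $1$ is the only special case, since all $\calA^j$ share the single $D$-dimensional input $\emb$: there the combined aggregation outputs $\bigoplus_j\AGG^j_1(\multiset{\emb(u)\mid (v,u)\in E})$, and the combined combination fans the shared vector $\emb(v)$ out to pair with each block's aggregate before applying each $\COM^j_1$. Permutations and fan-outs are linear maps that fold into the first linear layer of the respective FFNNs, so the block-diagonal assembly of \relu-FFNNs is again a \relu-FFNN.

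A straightforward induction on $i$ then shows that the layer-$i$ embedding produced by $\calA$ equals $\bigoplus_j \emb^{i}_{j}$; at $i=L$ this gives $\run_{\calA}(G)(u)=\bigoplus_j \run_{\calA^j}(G)(u)$, as required. For the ``moreover'' clauses I would record two simplifications. If every $\calA^j$ uses the same pointwise aggregation $f$, then at each layer $i>1$ the blockwise application of $f$ coincides with a single pointwise $f$ applied to the full concatenated embedding (pointwise operations act coordinatewise, hence blockwise), so the combined aggregation is literally $f$; at layer $1$ I instead take the combined aggregation to be $f$ on the $D$-dimensional multiset and let the combination fan out the vector $\emb(v)\oplus f(\multiset{\emb(u)})$ to all blocks, again without changing the aggregation. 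The only per-network differences, namely the combination functions, are absorbed into the block-diagonal \COM. If every $\COM^j_i$ is a \relu-FFNN, the prepended permutation/fan-out together with the block-diagonal weight layout keep the combined combination a \relu-FFNN.

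I expect the main obstacle to be bookkeeping rather than anything conceptual: keeping the coordinate layout consistent across the shared input at layer $1$, the blockwise intermediate layers, and the frozen padding layers, and verifying in particular that the first-layer fan-out together with the same-pointwise-aggregation case is handled without silently altering the aggregation function. Everything else reduces to the two observations that pointwise aggregations act blockwise and that linear rearrangements compose with FFNNs.
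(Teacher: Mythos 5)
Your overall strategy (normalize depths by padding, then run the $m$ networks blockwise in parallel and conclude by induction on layers) is the same as the paper's, but your handling of layer $1$ in the general case has a genuine flaw: the object you build is not a \GNN under this paper's definition. The paper requires every aggregation to be dimension-preserving, $\AGG_i\colon \mathcal{M}(\mathbb{R}^{D_i})\to\mathbb{R}^{D_i}$, and correspondingly $\COM_i\colon \mathbb{R}^{2D_i}\to\mathbb{R}^{D_{i+1}}$. Your first layer instead uses an aggregation that maps the multiset of $D$-dimensional neighbor embeddings to the $mD$-dimensional vector $\bigoplus_{1\leq j\leq m} \AGG^j_1(\multiset{\emb(u)\mid (v,u)\in E})$, and a combination whose input has dimension $D+mD\neq 2D$. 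When the $\AGG^j_1$ genuinely differ (the main statement of the lemma places no restriction on them), this is not a legal layer, and there is no natural way to fit the $m$ distinct aggregates into a single dimension-preserving aggregation on $\mathbb{R}^D$; any set-theoretic encoding trick would in particular wreck the \relu-FFNN half of the ``moreover'' clause, since FFNN combinations could not decode it. Note that your fallback for the shared-pointwise-aggregation case (take $\AGG_1=f$ and fan out inside $\COM_1$) is dimension-correct, so what breaks is precisely the main claim and the case of heterogeneous aggregations with FFNN combinations.

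The repair is exactly the paper's one extra move, which your own fan-out machinery almost supplies: prepend an initial layer whose combination ignores the aggregate and copies the input $m$ times, $\COM_1(v\oplus v')=\bigoplus_{1\leq j\leq m} v$ (so the combined network has $L'+1$ layers, $L'$ being the common padded depth). After this layer the running embedding already lives in $\mathbb{R}^{mD}$, every subsequent aggregation is the blockwise product of the $\AGG^j$'s and hence dimension-preserving, and your blockwise induction goes through verbatim. Since the copying layer's aggregate is discarded, its aggregation can be chosen to be the shared pointwise one whenever it exists, and $\COM_1$ is linear, hence a \relu-FFNN, so both ``moreover'' clauses survive. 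The rest of your write-up --- the explicit end-padding with frozen layers via $x=\relu(x)-\relu(-x)$ (which the paper leaves implicit in its ``without loss of generality, all $\calA^j$ have the same number of layers''), the blockwise aggregation and combination for $i>1$, and the observation that pointwise aggregations act coordinatewise and hence blockwise --- matches the paper's proof.
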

\begin{proof}
For each $j$ let $\calA^j = ((\COM^j_i)_{i=1, \dots, L_j},\ (\AGG^j_i)_{i=1, \dots, L_j})$. Without loss of generality all $\calA^j$ have the same number of layers $L'$. Let $L = L'+1$. We construct $\calA = ((\COM_i)_{i=1, \dots, L}, (\AGG_i)_{i=1, \dots, L})$.
The first layer copies each embedding $m$ times, i.e. for $v,v' \in \mathbb{R}^D$:
$\COM_1(v \oplus v') = 
\displaystyle\bigoplus_{1 \leq j \leq m} (v)$.

    Given combination functions $\COM^j$
    let $
    \underset{\substack{1 \leq j \leq m}}{\COM}$
    apply each $\COM_j$ on the associated subspace:
    \begin{align*}
        \underset{\substack{1 \leq j \leq m}}{\COM} (v_1 \oplus \dots v_m) \oplus (v'_1 \oplus \dots v'_m) = \displaystyle\bigoplus_{1 \leq j \leq m} (\COM^j(v_j \oplus v'_j))
    \end{align*}
    Similarly, given aggregation functions $\AGG^j$
    let $\underset{\substack{1 \leq j \leq m}}{\AGG}$
    behave as:
    \begin{align*}
        \underset{\substack{1 \leq j \leq m}}{\AGG} (M) = \displaystyle\bigoplus_{1 \leq j \leq m} (\AGG^j( M_{j}))
    \end{align*}
    Here, for $\AGG_j: \mathcal{M}(\mathbb{R}^{D_j}) \to \mathbb{R}^{D_j}$, $M$ is a multiset of embeddings in $\mathbb{R}^{\sum_{1 \leq j \leq m} D_j}$ and $M_j$ is the multiset of embeddings in $\mathbb{R}^{D_j}$, obtained by restricting each embedding in $M$ to indices $[\sum_{j' < j} D_{j'}, \sum_{j' \leq j} D_{j'}]$. Note that if each $\AGG_j$ is the same point-wise aggregation function (e.g. sum), then $\underset{\substack{1 \leq j \leq m}}{\AGG}$ is the same point-wise aggregation function on the concatenated space.

    We define the remaining layers of $\calA$ as:
    \begin{align*}
        \forall l > 1: \AGG_l = \underset{\substack{1 \leq j \leq m}} {\AGG_{l-1}}\\
        \forall l > 1: \COM_l = \underset{\substack{1 \leq j \leq m}} {\COM_{l-1}}
    \end{align*}
\end{proof}

\begin{corollary}
\label{cor:HE-GNN_can_be_copied}
    Given a finite tuple $(\calA^1, \dots \calA^m)$ where for $1 \leq j \leq m$, $\calA^j$ is a $(D,D')$\tdash\HESGNN with nesting depth $d$ and radius $r$, there exists a $(D,m \cdot D')\tdash\HESGNN$ $\calA$ with nesting depth $d$ and radius $r$ such that $\run_\calA(G)(u) = \displaystyle\bigoplus_{1 \leq j \leq m} (\run_{\calA^j}(G)(u))$. Moreover, if all $\calA^j$ use the same pointwise aggregation function, then $\calA$ uses the same aggregation function, and if all $\calA^j$ use 
    \relu-FFNNs as combination functions, the same holds for $\calA$.
\end{corollary}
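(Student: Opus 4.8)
The plan is to prove the corollary by induction on the nesting depth $d$, using Lemma~\ref{lem:GNN_can_be_copied} both as the base case and as the engine of the inductive step. For $d=0$ every $\calA^j$ is by definition a $(D,D')$\tdash\GNN, so the statement is exactly Lemma~\ref{lem:GNN_can_be_copied}, and the preservation of the aggregation and combination functions is inherited unchanged.

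For the inductive step, write each depth-$d$ machine as $\calA^j=(\calB^j,\calC^j,r)$, where $\calB^j$ is a $(D+1,D''_j)$\tdash\HESGNN of depth $d-1$ and radius $r$, and $\calC^j$ is a $(D+D''_j,D')$\tdash\GNN; by hypothesis all $\calA^j$ share the radius $r$ at every level. First I would apply the induction hypothesis to $(\calB^1,\dots,\calB^m)$ to obtain a single depth-$(d-1)$, radius-$r$ \HESGNN $\calB$ of input dimension $D+1$ and output dimension $\sum_j D''_j$ with $\run_\calB(G)(u)=\bigoplus_j\run_{\calB^j}(G)(u)$. Substituting $\calB$ into the inner loop of the \HESGNN run procedure and writing $\beta^j(v)$ for the output of $\calB^j$ at $v$ on the individualized radius-$r$ subgraph $G_v^r$, the intermediate embedding produced by $\calA=(\calB,\calC,r)$ is $\emb'(v)=\emb(v)\oplus\beta^1(v)\oplus\cdots\oplus\beta^m(v)$: this uses crucially that the individualization marking $\delta_{uv}$ and the subgraph $G_v^r$ are identical across all the $\calB^j$, so the inner runs can genuinely be shared.

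It then remains to build the top-level \GNN $\calC$ with $\run_\calC(G,\emb')(v)=\bigoplus_j\run_{\calA^j}(G)(v)$, and this is where the one real subtlety lies. The difficulty is a dimension mismatch: in the run of the original $\calA^j$, the network $\calC^j$ receives the embedding $\emb'_j(v)=\emb(v)\oplus\beta^j(v)$, whereas $\emb'$ contains only a single copy of $\emb(v)$ followed by all the $\beta^j(v)$ concatenated. I would resolve this by adapting the parallel-\GNN construction from the proof of Lemma~\ref{lem:GNN_can_be_copied}: its first layer, which there copies the whole input $m$ times, is replaced by a first layer whose (linear, hence \relu-FFNN-implementable) combination function ignores the aggregated neighbour value and maps $\emb(v)\oplus\bigoplus_j\beta^j(v)$ to $\bigoplus_j\bigl(\emb(v)\oplus\beta^j(v)\bigr)$, duplicating the $\emb(v)$-block and distributing the $\beta^j$-blocks. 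The remaining layers are the block-diagonal gadgets $\underset{1\le j\le m}{\COM}$ and $\underset{1\le j\le m}{\AGG}$ from that proof, which run $\calC^1,\dots,\calC^m$ in parallel on disjoint coordinate blocks and therefore compute $\bigoplus_j\run_{\calC^j}(G,\emb'_j)(v)$.

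Chaining the two facts gives $\run_\calA(G)(v)=\run_\calC(G,\emb')(v)=\bigoplus_j\run_{\calC^j}(G,\emb'_j)(v)=\bigoplus_j\run_{\calA^j}(G)(v)$, which is the claim. For the ``moreover'' part, the modified first layer of $\calC$ is linear and can take the shared pointwise aggregation (whose value it discards), and the parallel gadgets preserve pointwise aggregation and \relu-FFNN combination by Lemma~\ref{lem:GNN_can_be_copied}; together with the inductive guarantee for $\calB$, this shows that a common pointwise aggregation function and \relu-FFNN combination functions across the $\calA^j$ are carried over to $\calA$. I expect the reindexing first layer to be the only point requiring care; everything else is the same block-diagonal bookkeeping already validated in the proof of Lemma~\ref{lem:GNN_can_be_copied}.
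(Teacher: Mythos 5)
Your proof is correct and follows essentially the same route as the paper's: induction on nesting depth, with Lemma~\ref{lem:GNN_can_be_copied} as the base case, the induction hypothesis merging the inner networks $\calB^j$ into one depth-$(d-1)$ \HESGNN, and a top-level \GNN built from the block-diagonal construction of that lemma whose first layer is modified to map $\emb(v)\oplus\bigoplus_j\beta^j(v)$ to $\bigoplus_j(\emb(v)\oplus\beta^j(v))$. The paper's proof is exactly this (its $\COM_1$ performs the same redistribution), so the only difference is that you spell out the shared individualization/subgraph observation and the ``moreover'' bookkeeping more explicitly.
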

\begin{proof}
    For $d=0$ the claim follows from the above lemma. For $d > 0$ and $\calA^j = (\calB^j, \calC^j)$, construct $\calA = (\calB, \calC)$ where $\run_\calB(G)(u) = \displaystyle\bigoplus_{1 \leq j \leq m} (\run_{\calB^j}(G)(u))$.
    
    $\calC$ follows the construction of the above lemma, with the exception that the first combination function does not copy the complete vertex embedding $m$ times. Instead, $\COM_1$ now receives a vertex embedding $v \oplus \displaystyle\bigoplus_{1 \leq j \leq m}( v_j )$, where $v$ is the original vertex embedding 
    and $v_j$ is the output of $\mathcal{B}^j$
    , and produces output $\displaystyle\bigoplus_{1 \leq j \leq m}(v \oplus v_j)$.
\end{proof}



\begin{theorem}
\label{thm:uniform_GML_to_HESGNN}
    Given a finite tuple $(\phi_1, \dots, \phi_m)$ where for $1 \leq j \leq m$, $\phi_j$ is a 
   $\GMLone{\down^d_{W^r}}$ sentence with propositions in $P$, there exists a $(|P|, m)\tdash\HEGNN$ $\calA$ with nesting depth $d$ and radius $r$ such that for all $G$ labeled with $P$
    $\run_\calA(G)(u) = \displaystyle\bigoplus_{1 \leq j \leq m} (\cls_{\phi_j}(G,u))$.
\end{theorem}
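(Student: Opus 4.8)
The plan is to prove this theorem by structural induction on the $\down$-nesting-depth $d$, working with sentences assumed to be in the canonical form guaranteed by Proposition~\ref{prop:canonicalform}. The key simplification is Corollary~\ref{cor:HE-GNN_can_be_copied}, which lets me build a single \HESGNN computing a tuple of outputs from separate \HESGNNs for each $\phi_j$; so it suffices to handle a single sentence $\phi$ and then bundle the $m$ constructions together, preserving nesting depth and radius. The base case $d=0$ is exactly the \GML-to-\GNN translation of Proposition~\ref{prop:GMLtoGNN}, since a $\GMLone{\down^0_{W^r}}$ sentence contains no $\down$ (and hence no $W$, as $W$ only appears paired with $\down$), making it an ordinary \GML-formula translatable to a plain \GNN.

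For the inductive step, I would first reduce to handling a single sentence $\phi$ of $\down$-nesting-depth $d$ and, treating $\phi$ as a \GML-formula over an extended proposition set, identify its maximal subformulas of the form $\down_{W^r} x.\psi = \down x.W^r\psi$ where $\psi$ has $\down$-nesting-depth $d-1$. Each such binder subformula should be computed as a fresh binary feature: running the inner \HESGNN of depth $d-1$ on the radius-$r$ subgraph $G^r_v$ with the marked node $v$ (where the marker plays the role of the bound variable $x$) yields precisely the truth value of $\psi$ at $v$ under the assignment $x\mapsto v$, which is by definition the satisfaction of $\down_{W^r}x.\psi$ at $v$. This is exactly the semantics baked into the \HESGNN \run function on line~3, where $\delta_{uv}$ supplies the individualization marker and $G^r_v$ supplies the subgraph restriction. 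The induction hypothesis, applied to the tuple of inner subformulas $\psi$, gives a single depth-$(d-1)$, radius-$r$ \HESGNN $\calB$ computing all their truth values simultaneously; concatenating these to the original embedding via the outer loop produces an embedding over which the remaining (now binder-free) \GML-structure of $\phi$ can be evaluated.

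After this concatenation step, the outer part of $\phi$ is a Boolean/graded-modal combination of propositions from $P$ together with the newly computed binary features for the top-level binder subformulas. This is a \GML-formula over the enriched feature set, so by Proposition~\ref{prop:GMLtoGNN} there is a plain \GNN $\calC$ computing $\cls_\phi$ from the enriched embedding. Setting $\calA=(\calB,\calC,r)$ gives a depth-$d$, radius-$r$ \HESGNN computing $\cls_\phi$, and a final application of Corollary~\ref{cor:HE-GNN_can_be_copied} combines the $m$ such constructions into one \HESGNN outputting $\bigoplus_{1\leq j\leq m}\cls_{\phi_j}(G,u)$.

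The main obstacle I anticipate is the bookkeeping around variable binding and the canonical form. The subtlety is that $\phi$ may nest several binders, and an inner binder subformula may contain free occurrences of the variable $x$ bound by an outer $\down$; when I peel off the outermost binder and hand the inner formula to the induction hypothesis, that previously-bound $x$ must now be treated as an additional input proposition (the individualization marker passed down via $\delta_{uv}$), which is precisely why the input dimension of $\calB$ is $D+1$ rather than $D$ in the \HESGNN definition. Making the correspondence between ``marked node'' and ``bound variable'' precise across the recursion — ensuring the marker introduced at each level lines up with the right variable index, for which the canonical form of Proposition~\ref{prop:canonicalform} is exactly the alignment tool — is where the proof requires care, even though each individual step is routine.
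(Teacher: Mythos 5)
Your proposal is correct and follows essentially the same route as the paper's proof: reduce to a single sentence via Corollary~\ref{cor:HE-GNN_can_be_copied}, put $\phi$ in canonical form (Proposition~\ref{prop:canonicalform}) so the outermost binders all use $x_d$, apply the induction hypothesis to the inner formulas $\psi_j$ with the bound variable treated as the marker feature, bundle those into one depth-$(d-1)$ network $\calB$, and handle the residual binder-free formula $\phi^*$ over the enriched propositions with Proposition~\ref{prop:GMLtoGNN} to obtain $\calC$, giving $\calA=(\calB,\calC,r)$. The subtlety you flag about outer variables occurring free in inner formulas, and the canonical form serving as the alignment tool between markers and variable indices, is exactly how the paper resolves it.
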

\begin{proof}
    For $d=0$, following the uniform translation from \GML to \GNN of proposition \ref{prop:GMLtoGNN},  there exist a \GNN $\calA_j$ such that $\run_{\calA_j} = \cls_{\phi_j}$. By lemma \ref{lem:GNN_can_be_copied} there then exists a $\GNN$ $\calA$ that produces the concatenation of outputs for each $\calA_j$.

    We apply induction over $d$. It suffices to show that any single sentence $\phi$ in $\GMLone{\down^{d}_{W^r}}$ is implemented by a single \HEGNN $\calA$, since we can construct a \HEGNN with concatenated outputs following corollary \ref{cor:HE-GNN_can_be_copied}.

    By lemma \ref{prop:canonicalform} we can assume $\phi$ is canonical, hence for every maximal subformula $\downarrow x_i. W^r (\psi)$ in $\phi$, $i = d$. Let $\downarrow x_d. W^r\psi_1, \dots, \downarrow x_d. W^r\psi_k$ be all such maximal subformulas. By the semantics of $\GMLone{\down^d_{W^r}}$,     
    for all $1 \leq j \leq k$:
    \begin{align*}
        G, v &\models \downarrow x_{d}. W^r(\psi_j) \text{ iff }\\
        G^r_v[x_{d} \mapsto v], v &\models \psi_j
    \end{align*}
    Now when $x_{d}$ is treated as a binary node feature, $\psi_j$ is a sentence in $ \GML(\down^d_{W^r})$. There thus exists a \HEGNN $\calB_j$ such that 
    \begin{align*}
        \run_{\calB_j}(G^r_v[x_{d} \mapsto v])(v)
        &=
        \cls_{\psi_j}(G^r_v[x_{d} \mapsto v], v)\\
        &=
        \cls_{\down_{x_d}.W^r (\psi_j)}(G, v)
    \end{align*}
    We construct a single \HEGNN $\calB$ with nesting depth $d-1$ that outputs the concatenated outputs for all such $\calB_j$, following corollary \ref{cor:HE-GNN_can_be_copied}:
    \begin{align*}
    \run_{\calB}(G^r_v[x_{d} \mapsto v])(v) 
        &= \displaystyle\bigoplus_{1 \leq j \leq m} \cls_{\down x_{d}. W^r (\psi_j)}(G, v)
    \end{align*}
    Now construct $\phi^*$ from $\phi$ by substituting each $\down x_{d}. W^r \psi_j$ with a proposition $q_{j}$. Given $G = (V,E,\lab)$, let $G^* = (V,E,\lab^*)$, where $\lab^*$ extends $\lab$ so that $q_{j} \in \lab^*(u) \text{ iff }$ $G,u \models \downarrow x_{d}. W^r \psi_j$. Then:
    \begin{align*}
        \cls_{\phi}(G,v) &= \cls_{\phi^*}(G^*,v)
    \end{align*}
    Note that $\phi^* \in \GML$. Hence by proposition \ref{prop:GMLtoGNN} there exists a \GNN $\calC$ such that for $\calA = (\calB, \calC)$:
    \begin{align*}
         \cls_{\phi^*}(G^*,v) &= \run_{\calC}(G^*,v)\\
         &= \run_{\calA}(G,v)
    \end{align*}
\end{proof}

\begin{theorem}
\label{thm:HESGNN_to_GML_down_W}
    Let $\calA$ be a $(D,D')$\tdash\HESGNN with $D=|P|$, nesting depth $d$, radius $r$. Let $N \geq 0$ and
    \begin{align*}
        X = \{\run_{\calA}(G, \emb_G)(v) \,|\, G=(V,E,\lab) \text{ is a graph of degree at most $N$ and $v \in V$}\}
    \end{align*}
    Then $X$ is a finite set, and for each $\mathbf{x} \in X$ there is a $\GMLone{\down^d_{W^r}}$ sentence $\phi_\mathbf{x}$ such that for every pointed graph $(G,v)$ of degree at most $N$ it holds that $G,v\models \phi_{\mathbf{x}}$ iff $\run_\calA(G)(v) = \mathbf{x}$. In particular, for each $\HESGNNdr{d}{r}$ classifier $\cls_\calA$ there is a $\GMLone{\down^d_{W^r}}$ sentence $\phi$ such that $\cls_\phi(G,v) = \cls_\calA(G,v)$ for all pointed graphs $(G,v)$ of degree at most $N$.
\end{theorem}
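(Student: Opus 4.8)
The plan is to mirror the proof of Proposition~\ref{prop:GNNtoGML}, but to run the induction on the nesting depth $d$ rather than on the number of \GNN layers. The base case $d=0$ is immediate: a depth-$0$ \HESGNN is just a \GNN, so Proposition~\ref{prop:GNNtoGML} already yields a finite set $X$ together with \GML-formulas $\phi_{\mathbf{x}}$, and these are trivially $\GMLone{\down^0_{W^r}}$-sentences (they use no $\down_{W^r}$ at all).

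For the inductive step I would write $\calA=(\calB,\calC,r)$ with $\calB$ a $(|P|+1,D'')$-\HESGNN of nesting depth $d-1$ and radius $r$, and $\calC$ a $(|P|+D'',D')$-\GNN. First I would unwind the run: for each node $v$ the inner contribution is obtained by running $\calB$ on $G_v^r$ with the extra feature $\delta_{uv}$ marking $v$, and this input is exactly the multi-hot label encoding of $G_v^r$ over the extended signature $P\cup\{p_x\}$, where the fresh proposition $p_x$ marks $v$. Since an induced subgraph of a graph of degree at most $N$ again has degree at most $N$, the induction hypothesis applies to $\calB$ over $P\cup\{p_x\}$: the set $Y$ of achievable inner embeddings is finite, and each $\mathbf{y}\in Y$ is defined by a $\GMLone{\down^{d-1}_{W^r}}$-sentence $\psi_{\mathbf{y}}$.

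The key semantic observation, which I would verify next, is that reading the proposition $p_x$ as the bound variable $x$ gives $G,v\models\down_{W^r}x.\,\psi_{\mathbf{y}}[p_x/x]$ iff the inner embedding at $v$ equals $\mathbf{y}$. This uses that individualizing the center and taking the radius-$r$ neighborhood commute, i.e.\ $(G[x\mapsto v])_v^r=G_v^r[x\mapsto v]$, so that the combination $\down x.\,W^r$ captures precisely ``run $\calB$ on $G_v^r$ with $v$ individualized''. Conjoining with a conjunction of $P$-literals describing the original label of $v$, I obtain, for each of the finitely many values $\mathbf{z}=\mathbf{a}\oplus\mathbf{y}$ of $\emb'(v)$, a $\GMLone{\down^d_{W^r}}$-formula $\chi_{\mathbf{z}}$ with $G,v\models\chi_{\mathbf{z}}$ iff $\emb'(v)=\mathbf{z}$.

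Finally I would fold in $\calC$. Because $\calC$ consumes the structured embedding $\emb'$ rather than a label encoding, I introduce a fresh proposition $q_{\mathbf{z}}$ for each possible value $\mathbf{z}$, relabel $G$ to $G^*$ so that $q_{\mathbf{z}}$ holds exactly where $\emb'(v)=\mathbf{z}$, and prepend to $\calC$ one combination layer decoding the one-hot $q$-encoding back to $\mathbf{z}$, yielding a \GNN $\calC^*$ with $\run_{\calC^*}(G^*,\emb_{G^*})=\run_{\calC}(G,\emb')=\run_\calA(G)$. Applying Proposition~\ref{prop:GNNtoGML} to $\calC^*$ gives finiteness of $X$ and \GML-formulas $\theta_{\mathbf{x}}$ over $P\cup\{q_{\mathbf{z}}\}$; substituting each $q_{\mathbf{z}}$ by $\chi_{\mathbf{z}}$ produces $\phi_{\mathbf{x}}\in\GMLone{\down^d_{W^r}}$, and since the $q_{\mathbf{z}}$ occur only as atoms in $\theta_{\mathbf{x}}$ the substitution neither leaves the $\down_{W^r}$-paired fragment nor raises the nesting depth above $d$. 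The ``in particular'' claim follows by taking the finite disjunction of $\phi_{\mathbf{x}}$ over those $\mathbf{x}$ whose output coordinate is positive. I expect the main obstacle to be exactly this bridging step for $\calC$: Proposition~\ref{prop:GNNtoGML} is phrased for multi-hot \emph{label} encodings, so the relabel-and-decode device (or, equivalently, a strengthened restatement of that proposition for arbitrary finite-valued inputs) must be set up with care, and one must check that the concluding substitution preserves both the paired-fragment restriction and the nesting depth; the commutation of individualization with the radius-$r$ restriction and the propagation of the degree bound to induced subgraphs are the remaining points, which are routine.
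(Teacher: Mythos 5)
Your proposal is correct and follows essentially the same route as the paper's proof: induction on the nesting depth $d$, applying the induction hypothesis to $\calB$ over the signature extended with a marking proposition, relabeling $G$ to a graph $G^*$ with fresh propositions recording $\calB$'s finitely many possible outputs, invoking Proposition~\ref{prop:GNNtoGML} for $\calC$ on $G^*$, and finally substituting the $\down x.W^r$-prefixed formulas for those propositions. The differences are ones of care rather than substance: your explicit decoding layer $\calC^*$ and the commutation observation $(G[x\mapsto v])^r_v = G^r_v[x\mapsto v]$ spell out bridging steps that the paper's proof leaves implicit.
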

\begin{proof}
    We apply induction over $d$. The case $d=0$ reduces to the translation from $\GNN$ to $\GML$ of proposition \ref{prop:GNNtoGML}.
    Let $\calA = (\calB, \calC)$ be a \HESGNN with depth $d$ and radius $r$. $X$ is finite since $\calB$ has finitely many output embeddings by the induction hypothesis, and $\calC$ produces finitely many output features on graphs with finitely many input embeddings as shown in the proof of proposition \ref{prop:GNNtoGML}.

    Let $\emb^u = \{\emb(u') \oplus \delta_{uu'} \mid u' \in V\}$, 
    then for each $\mathbf{x} \in X$:
    \begin{align*}
        \run_\calA(G)(v) &= \mathbf{x} \text{ iff }\\
        \run_\calC(G, \{\emb_G(u) \oplus \run_\calB(G^r_u,\emb^u) | u \in V\})(v) &= \mathbf{x} \text{ iff }\\
        \run_{\calC}(G^*)(v) &= \mathbf{x}
    \end{align*}
    Where given $G=(V,E,\lab)$, $G^*=(V,E,\lab^*)$ and $\lab^*$ extends $\lab$ with propositions for all output features of $\calB$. Specifically, let $Y_\calB = \{\mathbf{y_1}, \dots \mathbf{y}_{|Y_\calB|}\}$ be this finite set of output features and introduce $q_1 \dots q_{|Y_\calB|}$ such that $q_j \in \lab^*(u)$ if and only if $\run_\calB(G^r_u, \emb^u)(u) = \mathbf{y_j}$.    

    By proposition \ref{prop:GNNtoGML} there exists a sentence $\xi$ in $\GML$ such that:
    \begin{align*}
       G^*,v \models \xi \text{ iff } \run_\calC(G^*)(v) = \mathbf{x}
    \end{align*}
    Then let $\phi_\mathbf{x} = \xi[\down x_{d}. W^r \phi_{\mathbf{y_1}}, \dots ,\down x_{d}. W^r \phi_{\mathbf{y_{|Y_\calB|}}} / q_1, \dots q_{|Y_\calB|}]$.

    Here all $\down x_d. W^r(\phi_{\mathbf{y_i}})$ are sentences in $\GMLone{\down^{d}_{W^r}}$ by the induction hypothesis, so that the same holds for $\phi_{\mathbf{x}}$.
\end{proof}

  Small adaptations can be made to the proofs of theorems \ref{thm:uniform_GML_to_HESGNN} and \ref{thm:HESGNN_to_GML_down_W} to show $\rho(\HEGNNd{d}) = \rho(\GMLone{\down^d})$, where now all operators $W^r$ are removed and no subgraph restrictions are applied. Since $\HEGNN = \cup_{d \geq 0} \HEGNNd{d}$ and $\GMLone{\down} = \cup_{d \geq 0} \GMLone{\down^d}$ it follows that $\rho(\HEGNN) = \rho(\GMLone{\down})$. That is: 

\thmNEGNNvsGMLdown*

\begin{lemma}
    \label{lem:GML_with_r_is_not_stronger}
    Let $r \geq 0$, then $\rho(\GMLone{\down^d}) \subseteq \rho(\GMLone{\down^d_{W^r}})$
\end{lemma}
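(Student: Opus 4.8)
The plan is to prove the inclusion by a syntactic translation: I will show that every \GMLone{\down^d_{W^r}}-sentence is equivalent, over all pointed graphs, to a \GMLone{\down^d}-sentence. The inclusion of equivalence relations then follows immediately, since any pair $(G,v),(G',v')$ separated by a \GMLone{\down^d_{W^r}}-sentence $\phi$ is also separated by its translation $\phi^{*}\in\GMLone{\down^d}$; hence \GMLone{\down^d}-indistinguishability implies \GMLone{\down^d_{W^r}}-indistinguishability, which is exactly $\rho(\GMLone{\down^d})\subseteq\rho(\GMLone{\down^d_{W^r}})$.

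First I would invoke the Canonical Form result (Proposition~\ref{prop:canonicalform}) to assume that in the input sentence $\phi$ the operators $\down$ and $W^r$ occur only in the paired combination $\down_{W^r}x_k$, and that the variable bound at nesting level $k$ is always $x_k$. The idea is then to eliminate every $W^r$ by \emph{relativizing} all modalities to the current active subgraph. When $k$ binders $\down_{W^r}x_1,\dots,\down_{W^r}x_k$ are in force at a node, the active graph is $H_k$, defined by $H_0=G$ and $H_i=(H_{i-1})^r_{v_i}$, where $v_i$ is the node bound to $x_i$. The crux is that membership in $H_k$ is expressible in plain \GML using the bound variables as anchors. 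To that end I define membership predicates $\mathrm{In}_k$ (plain \GML-formulas, with no $\down$) by $\mathrm{In}_0=\top$ and $\mathrm{In}_k=\mathrm{In}_{k-1}\wedge\bigvee_{m=0}^{r}\Diamond^{(k-1)}_m\,x_k$, where $\Diamond^{(k-1)}$ is the diamond relativized to the previous level, i.e.\ $\Diamond^{(k-1)}\psi:=\Diamond(\mathrm{In}_{k-1}\wedge\psi)$, iterated $m$ times (with $\Diamond^{(k-1)}_0\psi=\psi$). Here I exploit that the edge relation is symmetric: ``$w$ reaches $v_k$ by a $\le r$-step path staying inside $H_{k-1}$'' can be tested from $w$ using the forward modalities above. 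A short induction on $k$ shows that $G,w\models\mathrm{In}_k$ (under the assignment of $x_1,\dots,x_k$) iff $w\in H_k$.

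Next I would define a level-indexed translation $T_k$ by $T_k(p)=p$, $T_k(x_i)=x_i$, commuting with $\neg$ and $\wedge$, relativizing modalities via $T_k(\Diamond^{\geq j}\psi)=\Diamond^{\geq j}(\mathrm{In}_k\wedge T_k(\psi))$, and absorbing each binder via $T_k(\down_{W^r}x_{k+1}.\psi)=\down x_{k+1}.\,T_{k+1}(\psi)$, and set $\phi^{*}:=T_0(\phi)$. Since the predicates $\mathrm{In}_k$ contain no $\down$ and the translation merely re-uses the existing binders, $\phi^{*}$ has the same $\down$-nesting-depth $d$, so $\phi^{*}\in\GMLone{\down^d}$. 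The correctness claim, proved by induction on the structure of $\psi$, reads: for every assignment of $x_1,\dots,x_k$ to active nodes and every $u\in H_k$, one has $H_k,u\models\psi$ iff $G,u\models T_k(\psi)$, the latter evaluated in the full graph $G$ under the same assignment. The modal case uses the $\mathrm{In}_k$-lemma to match neighbors counted in $H_k$ with those selected by $\mathrm{In}_k$ in $G$ (the induced-subgraph convention makes these coincide), and the binder case reduces $H_{k+1}$ to $(H_k)^r_{v_{k+1}}$. Taking $k=0$, where $\mathrm{In}_0=\top$ and $H_0=G$, yields $\phi\equiv\phi^{*}$ over all pointed graphs.

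The main obstacle I anticipate is the \emph{forgettable-past} behaviour of nested $W^r$: an inner restriction is taken inside the already-restricted graph $H_{k-1}$, so the distances governing membership in $H_k$ are distances \emph{within a subgraph}, not within $G$, and these differ in general. I handle this precisely by building the relativized reachability for $\mathrm{In}_k$ on top of $\mathrm{In}_{k-1}$ rather than on raw $G$-distance, so that every intermediate node on the witnessing path is required to lie in $H_{k-1}$. Because both the radius $r$ and the nesting depth $d$ are fixed, these predicates remain finite plain-\GML formulas and contribute no $\down$-nesting, keeping the translation inside \GMLone{\down^d}.
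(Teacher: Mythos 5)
Your proof is correct and takes essentially the same approach as the paper: both eliminate each $W^r$ by relativizing the graded modalities to a definable ``distance $\leq r$ from the bound variable'' formula (your $\mathrm{In}_k$, the paper's $\xi = x_i \vee \bigvee_{1\leq j\leq r}\Diamond_j x_i$), exploiting edge symmetry and preserving $\down$-nesting depth. The only difference is presentational: the paper substitutes inside-out so that inner distance formulas get relativized automatically when outer $W^r$'s are processed, whereas you make that nested relativization explicit via the level-indexed predicates $\mathrm{In}_k$ built over $\mathrm{In}_{k-1}$ -- a more detailed rendering of the same argument.
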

\begin{proof}    
Let $\phi \in \GMLone{\down^d_{W^r}}$. We define sentence $\xi$, which only holds at vertices of distance $\leq r$ to $x_i$:
\begin{align*}
    \xi = x_i \vee \bigvee_{1 \leq j \leq r}\Diamond_{j} x_i
\end{align*}
Now 
take a minimal subformula of the form $\down x_i. W^r \psi$ in $\phi$. We substitute this for an equivalent subformula $\down x_i.\psi'$, where $\Diamond^{\geq k} \tau$ in $\psi$ is replaced by $\Diamond^{\geq k} (\xi \wedge \tau)$ to produce $\psi'$.

Applying this transformation recursively to subformulas in $\phi$ yields an equivalent $\GMLone{\down^d}$ sentence.
\end{proof}
\thmNESGNNequalsGMLdownW*

\begin{proof}
 Theorems \ref{thm:uniform_GML_to_HESGNN} and \ref{thm:HESGNN_to_GML_down_W} yield $\rho(\HESGNNdr{d}{r})=\rho(\GMLone{\down^d_{W^r}})$ for all $d,r \geq 0$.

Since $\HEGNN\tdash$$r = \cup_{d \geq 0} \HESGNNdr{d}{r}$ and $\GMLone{\down_{W^r}} = \cup_{d \geq 0}\GMLone{\down^d_{W^r}}$ point (2) follows.

Similarly, taking the union over all $r \geq 0$ yields $\rho(\HESGNN) = \rho(\GMLone{\down_W})$. Finally, to see that $\rho(\GMLone{\down_W}) = \rho(\GMLone{\down})$ note that when two pointed graphs are separated by a sentence in \GMLone{\down} they are also separated by a \GMLone{\down_{W^r}} sentence where $r$ is the size of the largest of the two graphs. Using lemma \ref{lem:GML_with_r_is_not_stronger} then $\rho(\GMLone{\down_W}) = \rho(\GMLone{\down})$.
\end{proof}
  
\section{Proofs of hierarchy results in section 4}

\thmstricthierarchyr*
The construction of lemma \ref{lem:GML_with_r_is_not_stronger} also shows $\rho(\GMLone{\down^d_{W^{r+1}}}) \subseteq \rho(\GMLone{\down^d_{W^{r}}})$ for $d \geq 1, r \geq 0$. Using the logical characterization of theorem  \ref{thm:HESGNN=GML(downarrow,W)} then $\rho(\HESGNNdr{d}{r+1}) \subseteq \rho(\HESGNNdr{d}{r})$.


Now to show $\rho(\HESGNNdr{d}{r+1}) \neq \rho(\HESGNNdr{d}{r})$, consider graphs $G_1 = C_{4r+6}$ and $G_2$ consisting of two disjoint cycles $C_{2r+3}$, where all vertices are labeled with the empty set. 

Let $v_1, v_2$ be nodes in $G_1, G_2$ and $\phi = \downarrow x_i. W^{r+1} (\Diamond_{r+1} (\neg \Diamond^{\geq 2} \top))$. Then:
\begin{align*}
    G_1, v_1 \models \phi\\
    G_2, v_2 \not\models \phi
\end{align*}

Again applying theorem \ref{thm:HESGNN=GML(downarrow,W)} $(G_1, v_1)$ and $(G_2,v_2)$ can be separated by \HESGNNdr{d}{r+1} for $d \geq 1$.

However, the marked induced subgraphs with radius $r$ around $v_1, v_2$ are isomorphic and hence cannot be separated by any $\HEGNN$. Since $(G_1,v_1), (G_2,v_2)$ are further indistinguishable by \WL:
\begin{align*}
    \forall d \geq 0: ((G_1,v_1), (G_2,v_2)) \in \rho(\HESGNNdr{d}{r}))
\end{align*}

\thmstricthierarchyd*
This follows from
\begin{align*}
    &\rho((d+2)\tdash\GNN) \subseteq \rho(\HEGNNd{d}) \subseteq \rho(\HESGNNdr{d}{r}) \hfill &\text{Theorem \ref{thm:k-WL>=k-NEGNN}}\\
    &\rho((d+2)\tdash\GNN) \not\subseteq \rho(\HESGNNdr{d+1}{r}) \hfill &\text{Theorem \ref{thm:NEGNN-incomparable-with-kGNN}}\\
\end{align*}

\section{Proofs of relations with other models in section 5}

\subsection*{5.1. Individualization-Refinement}

\begin{proposition}[(\cite{morris2019weisfeiler,xu2019powerful}]
\label{prop:injective_GNN_matches_WL}
    Let $n,D \in \mathbb{N}$.
    Then there exists a \GNN $\calA^s$
    such that for all featured graphs $(G,\emb),(G',\emb')$ of size $\leq n$ and with embeddings in $\mathbb{R}^D$, if $\WL(G,n)(v) \neq \WL(G',n)(v')$ then $\run_{\calA^s}(G)(v) \neq \run_{\calA^s}(G')(v')$. We call such a $\GNN$ \emph{sufficiently separating}.
\end{proposition}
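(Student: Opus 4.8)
The plan is to build a single GNN $\calA^s$, depending only on $n$ and $D$, whose node embeddings \emph{refine} the WL colouring after $n$ rounds, and then read off the statement. Concretely, I would prove by induction on the round number $i$ that the layer-$i$ embedding produced by $\calA^s$ determines the WL colour $\col^i(v)$ (i.e.\ two nodes with distinct layer-$i$ colours receive distinct embeddings). Instantiating $i=n$ gives the claim, since if $\WL(G,n)(v)\neq\WL(G',n)(v')$ the two nodes then get distinct embeddings under $\run_{\calA^s}$. Throughout I use that every input graph has at most $n$ nodes, so every node has degree at most $n-1$ and every neighbour multiset $\multiset{\emb^{i-1}(u)\mid (v,u)\in E}$ has cardinality at most $n-1$; this bound is uniform over all admissible inputs, which is exactly what lets one fixed GNN work for all of them.

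The key ingredient is an injective multiset aggregation over the (uncountable) space $\mathbb{R}^{D'}$ that is valid on multisets of bounded cardinality. I would exhibit a map $f\colon\mathbb{R}^{D'}\to\mathbb{R}^{M}$ such that $Z\mapsto\sum_{z\in Z}f(z)$ is injective on all multisets $Z$ of cardinality at most $n-1$. For $D'=1$ this is the power-sum encoding $f(x)=(1,x,x^2,\ldots,x^{\,n-1})$: the constant coordinate recovers the cardinality $m\le n-1$, after which Newton's identities recover the underlying size-$m$ multiset from the power sums $p_1,\dots,p_m$. For $D'>1$ I would use the multivariate moment map consisting of all monomials up to a fixed degree depending only on $n$ and $D$; an atomic measure with at most $n-1$ atoms in $\mathbb{R}^{D'}$ is determined by its moments up to such a degree (recoverable by the same symmetric-function/moment arguments, now pairing coordinatewise values via the mixed moments), so sum-pooling after $f$ stays injective. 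Crucially, the general GNN definition here places \emph{no} restriction on the combination functions, so I am free to let the $\COM_i$ compute these polynomial (multiplicative) encodings; only \textsc{Sum} is needed for aggregation.

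With this in hand I would lay out $\calA^s$ as $n$ refinement blocks, where each block performs a \textsc{Sum}-aggregation of the $f$-encoded neighbour embeddings followed by a combination with the node's own (encoded) embedding, re-encoding via $f$ for the next block. The induction is then routine. At layer $0$ the embedding is the input $\emb$, which determines $\col^0$ by the choice of initialisation (two nodes share an initial WL colour iff they share an embedding). Assuming the layer-$(i-1)$ embedding determines $\col^{i-1}$, the multiset of neighbour embeddings determines the multiset of neighbour colours, so by injectivity of the aggregation the layer-$i$ embedding determines the pair $\bigl(\col^{i-1}(v),\multiset{\col^{i-1}(u)\mid (v,u)\in E}\bigr)$, hence determines $\col^i(v)$. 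Note we never reproduce the WL colours verbatim: we only maintain that the embedding is at least as fine as the colour, which is precisely refinement and suffices.

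The main obstacle is the injective bounded-cardinality aggregation over $\mathbb{R}^{D'}$: no fixed continuous map can injectively encode arbitrary multisets drawn from an uncountable domain, and it is only the a priori bound $n-1$ on multiset size, together with the freedom to use polynomial combination functions, that makes it possible at all. The univariate case is elementary through power sums; the multivariate case is the delicate point, where I would either invoke the truncated moment problem for atomic measures or argue recovery directly from moments of bounded degree. Everything else—uniformity of the construction in the input graph and the inductive refinement argument—is straightforward.
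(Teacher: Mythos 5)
Your proof is correct, but you should know that the paper itself contains no proof of this proposition: it is stated as a known result with a citation to Morris et al.\ and Xu et al., and the appendix supplies no argument. Your construction is therefore a genuinely different, self-contained route --- and in fact a more careful one than the citation suggests. The GIN-style injectivity lemmas in the cited works are proved for \emph{countable} feature domains, whereas this proposition quantifies over all embeddings in $\mathbb{R}^D$, an uncountable domain; what rescues the statement is precisely the bound $n$ on graph size, and your bounded-cardinality moment argument is the right way to exploit it. Your univariate power-sum/Newton's-identities step is standard, and your multivariate step is sound: if two multisets of at most $m$ points in $\mathbb{R}^{D'}$ agree on all moments of degree at most $m$, then for every direction $c$ the projected multisets $\multiset{\langle c,a\rangle \mid a \in A}$ and $\multiset{\langle c,b\rangle \mid b \in B}$ have equal power sums up to degree $m$ and hence coincide, and choosing a generic $c$ that separates all points of $A \cup B$ forces $A = B$; so degree $m$ suffices and the encoding dimension depends only on $n$ and $D$, which gives the uniformity you need. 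Comparing what each approach buys: since the paper's \GNN definition places no restriction at all on $\AGG_i$ and $\COM_i$, there is an even cheaper argument available (sort the at most $n-1$ neighbour vectors lexicographically, concatenate, and inject into a single real vector by digit interleaving, with an injective pairing as $\COM$), which is presumably why the authors were content to cite the literature; your route costs more work but yields continuity, Sum-only aggregation, and polynomial combination functions, i.e.\ a strictly stronger form of the proposition closer to what is actually implementable. One bookkeeping point to tighten: at layer $0$ the embeddings are the raw inputs, not $f$-encoded, so your first block must begin with an encoding layer whose $\COM$ applies $f$ to the node's own embedding and ignores the aggregate; thereafter your blocks alternate as described. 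This does not affect correctness.
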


\begin{lemma}
\label{lem:injective_hegnn}
    Let $n,d,D \in \mathbb{N}$. Then there exists a \emph{sufficiently separating} \HEGNN $\calA^s$ with 
    nesting depth $d$ such that for all featured graphs $(G,\emb), (G',\emb')$ of size $\leq n$ and with embeddings in $\mathbb{R}^D$, if $((G,v),(G',v')) \not\in \rho(\HEGNNd{d})$ then $\run_{\calA^s}(G)(v) \neq \run_{\calA^s}(G')(v')$, we call such a \HEGNN \emph{sufficiently separating}.
\end{lemma}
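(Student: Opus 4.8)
The plan is to induct on the nesting depth $d$, using Proposition~\ref{prop:injective_GNN_matches_WL} both as the base case and as the engine for the outer \GNN in the inductive step. I read the statement over featured graphs by extending $\rho(\HEGNNd{d})$ in the natural way: two pointed featured graphs are \emph{separated} when some depth-$d$ \HEGNN produces different node embeddings on them, so that a \emph{sufficiently separating} map is precisely one that refines every depth-$d$ \HEGNN. Here I say an embedding map $f$ \emph{refines} $g$ (over inputs of size $\le n$ with embeddings in $\mathbb{R}^D$) if $f(G,u)=f(G',u')$ implies $g(G,u)=g(G',u')$; equivalently $g=h\circ f$ for a single relabeling $h$ on the range of $f$. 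The whole argument is the statement that this refinement property is preserved by the recursive construction of \HEGNNs.

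For the base case $d=0$ we have $\HEGNNd{0}=\GNN$, so by Theorem~\ref{thm:GNN=WL} separation by a depth-$0$ \HEGNN coincides with \WL-distinguishability (on size-$\le n$ graphs, where \WL stabilizes after at most $n$ rounds); Proposition~\ref{prop:injective_GNN_matches_WL} then directly supplies a sufficiently separating $\calA^s$. For the inductive step, suppose $\calB^s$ is a sufficiently separating depth-$(d-1)$ \HEGNN for inputs of size $\le n$ and dimension $D+1$ (applying the induction hypothesis with the extra coordinate used to mark the individualized node), and let $\calC^s$ be a sufficiently separating \GNN from Proposition~\ref{prop:injective_GNN_matches_WL} on the combined dimension $D+D''$, where $D''$ is the output dimension of $\calB^s$. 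Set $\calA^s=(\calB^s,\calC^s)$; I must show it refines every depth-$d$ \HEGNN $\calA'=(\calB',\calC')$.

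The argument has two phases matching the definition of $\run$. In the first phase each $\emb(u)$ is replaced by $\emb(u)\oplus\run_{\calB^s}(G,\{w\mapsto\emb(w)\oplus\delta_{wu}\})(u)$, and likewise with $\calB'$ in place of $\calB^s$. Since $\calB^s$ is sufficiently separating at depth $d-1$, whenever its first-phase outputs agree at $u$ and $u'$ the outputs of $\calB'$ agree there too; hence the stage-one embedding built from $\calB^s$ refines the one built from $\calB'$, giving a single relabeling $h$ with $\emb'_{\calB'}=h\circ\emb'_{\calB^s}$. In the second phase I propagate this through $\calC^s$: if $\calC'$ separates two nodes on the coarser features $h\circ\emb'_{\calB^s}$, then by Theorem~\ref{thm:GNN=WL} \WL separates them on those features, and because $h\circ\emb'_{\calB^s}$ is a coarsening of $\emb'_{\calB^s}$ and \WL is monotone under refinement of the initial coloring, \WL already separates them on $\emb'_{\calB^s}$; the sufficiently separating $\calC^s$ then separates them as well. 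Taking the two nodes to be $v$ and $v'$ shows that whenever $\calA'=(\calB',\calC')$ separates $(G,v)$ from $(G',v')$, so does $\calA^s$, which closes the induction.

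The hard part will be the second phase. Because the stage-one embeddings still carry the arbitrary real input features, the relabeling $h$ has an infinite domain, so I cannot appeal to finiteness of the intermediate color set as one would for canonical label encodings. The argument must instead route through \WL, using that \GNN separation is bounded by \WL separation and that \WL-refinement is monotone in the initial coloring, to transfer separation on the coarser features back to separation by $\calC^s$ on the finer ones. Secondary care is needed to align the generalized reading of $\rho(\HEGNNd{d})$ over featured graphs with the recursive unfolding of $\run$ and with the dimension bookkeeping (the $+1$ marking coordinate feeding $\calB^s$, and the concatenated output dimension $D+D''$ feeding $\calC^s$).
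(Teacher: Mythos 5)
Your proof is correct and takes essentially the same route as the paper's: induction on nesting depth, with Proposition~\ref{prop:injective_GNN_matches_WL} as the base case, and an inductive step that sets $\calA^s=(\calB^s,\calC^s)$ where $\calB^s$ is the inductively obtained sufficiently separating depth-$(d-1)$ \HEGNN (at dimension $D+1$) and $\calC^s$ is the proposition's \GNN on the concatenated dimension. Your explicit phase-two chain --- separation by $\calC'$ on the coarser features implies \WL separation there, \WL monotonicity under refinement of the initial coloring lifts this to the finer features, and the proposition then yields separation by $\calC^s$ --- is precisely the reasoning the paper compresses into ``by the proposition above.''
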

\begin{proof}
    When $d=0$ this reduces to proposition \ref{prop:injective_GNN_matches_WL}. Suppose $d>0$. Let $\calB^s$ be a sufficiently separating \HEGNN for $n,d-1,D+1$, and let $\calC^s$ be a sufficiently separating \GNN for $n,D'$, where $D'$ is the output dimension of $\calB^s$. We define $\calA^s = (\calB^s, \calC^s)$.
    
    Now suppose a \HEGNN $\calA' = (\calB', \calC')$ with nesting depth $d$ separates $(G,v)$ from $(G',v')$. Since $\calC^s$ is sufficiently separating, $(\calB', \calC^s)$ is at least as separating as $(\calB', \calC')$. Since $\calC^s$ has the separating power of \WL up to size $n$, and since $\calB^s$ has at least the separating power of $\calB'$, $(\calB^s, \calC^s)$ is at least as separating as $(\calB', \calC^s)$.
\end{proof}

\textbf{Notation} Assume an injective map $f$ from the infinite set of colors $\calC$ to an infinite set of binary node labels (propositions) $\mathcal{P}$. Given a graph $G=(V,E,\lab)$ we write $G_{\WL}$ for $G=(V,E,\lab')$, where $\lab'$ is obtained by first computing the coloring $\col = \WL(G,|G|)$ and then labeling each node $v \in V$ with $f(\col(v))$.

\begin{lemma}
\label{lem:hegnn_still_same_if_first_WL}
Let $(G,v), (G',v')$ be pointed graphs such that $|G| = |G'|$. If $((G,v), (G',v')) \in \rho(\HEGNNd{d})$ then $((G_{\WL},v), (G'_{\WL},v')) \in \rho(\HEGNNd{d})$.



\end{lemma}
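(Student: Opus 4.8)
The plan is to argue by contraposition: assuming some depth-$d$ \HEGNN classifier separates $(G_{\WL},v)$ from $(G'_{\WL},v')$, I would build a depth-$d$ \HEGNN classifier that separates the original $(G,v)$ from $(G',v')$, contradicting $((G,v),(G',v'))\in\rho(\HEGNNd{d})$. Writing $n=|G|=|G'|$, this reduces to producing, for any \HEGNN $\calA$ of nesting depth $d$, a nesting-depth-$d$ \HEGNN $\calA'$ with $\run_{\calA'}(G)(v)=\run_{\calA}(G_{\WL})(v)$ and likewise at $(G',v')$. The construction rests on two observations: the labels added in $G_{\WL}$ are themselves computable from $G$ by an ordinary \GNN, and $\run_{\calA}$ depends on its input graph only through its node set and edge set, never through the labelling.

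For the first observation I would take the sufficiently separating \GNN $\calA^s_0$ of Proposition~\ref{prop:injective_GNN_matches_WL} for size bound $n$, so that on graphs of size at most $n$ two nodes get the same $\calA^s_0$-embedding exactly when they get the same colour under $\WL(\cdot,n)$ (the ``$\Rightarrow$'' being sufficient separation, the ``$\Leftarrow$'' being that \GNNs never separate \WL-equivalent nodes). The hypothesis $|G|=|G'|$ enters here, ensuring a single number $n$ of refinement rounds is correct for both inputs. Since only finitely many colours occur on size-$\le n$ graphs, $\calA^s_0$ produces only finitely many embeddings, so I can append one combination layer $\gamma$ sending each such embedding to the one-hot vector of the corresponding proposition $f(\col(u))$ over the finite label alphabet $Q$ appearing in $G_{\WL}$ and $G'_{\WL}$. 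This yields a \GNN $\calP$ with $\run_{\calP}(G)(u)=\emb_{G_{\WL}}(u)$ for all $u$, and similarly for $G'$. As the statement restricts neither aggregation nor combination functions, the finite lookup $\gamma$ may be taken as an arbitrary function.

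For the second observation, inspection of the definitions of $\run$ for \GNNs and for \HEGNNs shows they read only $E$ and the supplied embedding, so $\run_{\calA}(G_{\WL},\emb)=\run_{\calA}(G,\emb)$ for every $\emb$. Combining this with the first observation gives $\run_{\calA}(G_{\WL})(v)=\run_{\calA}(G,\run_{\calP}(G))(v)$, so it remains to realize the prepending of a \GNN to an \HEGNN as a single \HEGNN of the same depth. I would prove, by induction on $d$, the composition lemma: for any \GNN $\calP$ and any nesting-depth-$d$ \HEGNN $\calA$ of compatible dimensions there is a nesting-depth-$d$ \HEGNN $\calA'$ with $\run_{\calA'}(G,\emb)=\run_{\calA}(G,\run_{\calP}(G,\emb))$. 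The base case merely stacks layers. In the step, with $\calA=(\calB,\calC)$, I would push $\calP$ through the per-node marking: form the \GNN $\calP'$ that applies $\calP$ on the first coordinates while carrying the marker bit $\delta_{uv}$ through unchanged, invoke the induction hypothesis on $(\calP',\calB)$ to get $\calB'$, and prepend a copy of $\calP$ (acting on the original-embedding coordinates) to $\calC$ so that $\calC$ again receives its intended input.

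Applying the composition lemma to $\calP$ and $\calA$ yields a depth-$d$ \HEGNN whose output embeddings differ at $(G,v)$ and $(G',v')$; a final linear layer thresholding a coordinate on which they disagree turns this into a depth-$d$ \HEGNN classifier separating the two, the desired contradiction. I expect the composition lemma, and specifically the bookkeeping in its inductive step that keeps the marker coordinate intact while threading $\calP$ through both $\calB$ and $\calC$ without inflating the nesting depth, to be the main obstacle; the remaining steps are routine given Proposition~\ref{prop:injective_GNN_matches_WL} and the label-independence of $\run$.
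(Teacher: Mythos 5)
Your proof is correct, but it takes a genuinely different route from the paper's. The paper proves the lemma by induction on $d$ applied to the statement itself: the base case is $\rho(\HEGNNd{0})=\rho(\GNN)=\rho(\WL)$, and in the inductive step it takes the sufficiently separating \HEGNN $\calA^s=(\calB^s,\calC^s)$ of Lemma~\ref{lem:injective_hegnn} and argues, via the induction hypothesis applied to the marked graphs, that the node embeddings $\calC^s$ receives on $G_\WL,G'_\WL$ are no finer than those it receives on $G,G'$; indistinguishability is then preserved because $\calA^s$ separates whatever any depth-$d$ \HEGNN separates. You instead argue by contraposition through a uniform simulation: any depth-$d$ \HEGNN classifier on the relabeled graphs is realized, at the same nesting depth, by a classifier on the original graphs, because (i) the \WL labels on size-$n$ inputs are computable by an ordinary \GNN (the sufficiently separating \GNN of Proposition~\ref{prop:injective_GNN_matches_WL} followed by a finite lookup layer, legitimate since combination functions are unrestricted), (ii) $\run$ never reads the labelling, only $V$, $E$ and the supplied embedding, and (iii) a \GNN can be prepended to a \HEGNN without increasing nesting depth --- your composition lemma, whose marker-carrying inductive step is indeed the only delicate point and works exactly as you describe. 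Comparing the two: your argument needs only the \GNN-level separation proposition rather than its \HEGNN-level strengthening, makes explicit where the hypothesis $|G|=|G'|$ enters (a single number of refinement rounds serves both inputs), and establishes a stronger constructive fact (same-depth simulation of each individual classifier) from which the lemma is immediate; the paper's argument stays at the level of the equivalence relations $\rho(\cdot)$ and reuses Lemma~\ref{lem:injective_hegnn}, which it needs anyway for Theorem~\ref{thm:IRvsNEGNNgraphs}. One small correction to your write-up: the biconditional you assert for $\calA^s_0$ (equal embeddings iff equal \WL colours) is not needed, and its ``$\Leftarrow$'' direction is only guaranteed when $\calA^s_0$ has at most $n$ layers; well-definedness of your lookup $\gamma$ uses only the ``same embedding $\Rightarrow$ same colour'' direction, which is exactly the contrapositive of sufficient separation, and finiteness of the embedding set already follows from isomorphism invariance over the finitely many graphs of size at most $n$.
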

\begin{proof}
    We apply induction over $d$. For $d=0$ note that $\rho(\HEGNNd{0}) = \rho(\GNN) = \rho(\WL)$. Let $(G,v),(G',v')$ be of size $n$, with labels in $P$, such that $((G,v), (G',v')) \in \rho(\HEGNNd{d})$ for $d>0$. 

    By lemma \ref{lem:injective_hegnn} there exists a sufficiently separating \HEGNN $\calA^s = (\calB^s,\calC^s)$. We show $(G_\WL,v),(G'_\WL,v') \in \rho(\cls_{\calA^s})$.
    
    Suppose for nodes $u$ in $G$ and $u'$ in $G'$, are given the same output by $\calB^s$ after individualizing $u,u'$. By the induction hypothesis, the same holds for $u$ in $G_\WL$ and $u'$ in $G'_\WL$. Since $\calC^s$ is sufficiently separating and since the input embeddings given by $\calC^s$ are not more separating for $G_\WL, G'_\WL$ than for $G, G'$, it follows that     
    $(G_\WL,v),(G'_\WL,v') \in \rho(\HEGNNd{d})$.
\end{proof}

We apply the following lemma from Zhang et al. \cite{zhang2023complete}
\begin{lemma}
\label{lem:unique_node_coloring_to_graph_coloring}
    Let $G=(V,E,\lab),\, G'=(V',E',\lab')$ be finite connected graphs with $v \in V, v' \in V'$ uniquely marked. Then:
    \begin{align*}
        \WL(G)(v) = \WL(G')(v') \text{ iff } \multiset{\WL(G)(u) \mid u \in V} = \multiset{\WL(G')(u') \mid u' \in V'}
    \end{align*}
\end{lemma}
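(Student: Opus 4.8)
The plan is to prove both directions, reading $\WL(G)$ as the stable coloring (reached within $|G|$ rounds) and exploiting two properties of the perfect hash $\HASH$: it never merges color classes across rounds, and it is injective, so from a stable color \emph{value} one can decode the pair $(\text{own color},\ \text{multiset of neighbor colors})$ that produced it. First I would record three easy observations. Since $v$ carries a marker label absent from every other node, its initial color is unique; as refinement only splits classes, the stable color $\gamma_v := \WL(G)(v)$ stays a singleton color in $G$, and likewise $\gamma_{v'} := \WL(G')(v')$ is a singleton in $G'$. Moreover every stable color ``remembers'' its initial label, because the own-color component is carried through every round; hence any node of color $\gamma_v$ must carry the marker, and $v$ (resp.\ $v'$) is the only such node in its graph.

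The backward direction is then immediate: if the two color multisets coincide, then $\gamma_v$, which occurs exactly once on the left, must occur (once) on the right at some $w\in V'$; but a node of color $\gamma_v$ carries the marker, so $w=v'$, giving $\WL(G')(v')=\gamma_v=\WL(G)(v)$.

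The substance is the forward direction, where I would show that the single value $\gamma_v$ \emph{canonically} determines the whole multiset $\multiset{\WL(G)(u)\mid u\in V}$; since the determination uses only the color value and the graph-independent decoding of $\HASH$, the hypothesis $\gamma_v=\gamma_{v'}$ then forces the two multisets to agree. Because the stable coloring is equitable, all nodes of a given color $\gamma$ share the same neighbor-color multiset, and by injectivity of $\HASH$ this multiset, in particular the adjacency multiplicities $a_{\gamma\delta}=\#\{\delta\text{-colored neighbors of a }\gamma\text{-node}\}$, is decodable from $\gamma$ alone. Starting from $\gamma_v$ and repeatedly decoding neighbor colors, I recover a set $\mathcal{R}$ of colors closed under ``taking neighbor colors''; connectivity of $G$ gives, for any node $u$, a path $v=w_0\sim\cdots\sim w_k=u$ along which each $\WL(G)(w_{i+1})$ lies in the decoded neighbor multiset of $\WL(G)(w_i)$, so by induction $\WL(G)(u)\in\mathcal{R}$. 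Thus $\mathcal{R}$ is exactly the set of colors occurring in $G$, together with all multiplicities $a_{\gamma\delta}$.

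It then remains to recover the counts $n_\delta=\#\{u:\WL(G)(u)=\delta\}$ from this data, and here I would use the class handshake identity $n_\gamma a_{\gamma\delta}=n_\delta a_{\delta\gamma}$ (both sides count ordered edges between the two classes, using symmetry of $E$), the anchor $n_{\gamma_v}=1$ (the marked color is a singleton), and connectivity of the color graph: whenever $\delta$ is reachable from an already-counted $\gamma$ we have $a_{\gamma\delta}>0$, hence $a_{\delta\gamma}>0$ by symmetry, so $n_\delta=n_\gamma a_{\gamma\delta}/a_{\delta\gamma}$ is forced. Propagating outward from $\gamma_v$ pins down every $n_\delta$, so the full multiset is a function of $\gamma_v$ only, completing the forward direction. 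The main obstacle is precisely this step: one must argue that a single node's color determines the global color multiset, and the crux is the canonical decodability of stable colors combined with the rigidity furnished by equitability, connectivity, and the count-one anchor at the marked color.
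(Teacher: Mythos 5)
Your overall architecture is sound: the backward direction is correct, and the counting machinery of the forward direction (the singleton anchor $n_{\gamma_v}=1$, the handshake identity $n_\gamma a_{\gamma\delta}=n_\delta a_{\delta\gamma}$, and propagation along the connected color-adjacency graph) is exactly the right kind of rigidity argument. Note that the paper itself gives no proof of this lemma --- it imports it from Zhang et al.~\cite{zhang2023complete} --- so your attempt has to stand on its own. It almost does, but there is a genuine gap, and it sits precisely at the step you call the crux: the claim that the adjacency multiplicities $a_{\gamma\delta}$, with $\delta$ ranging over \emph{stable} colors, are ``decodable from $\gamma$ alone''. Injectivity of \HASH lets you decode from $\col^t(u)$ only the pair consisting of $\col^{t-1}(u)$ and the multiset of the neighbors' \emph{round-$(t-1)$} colors; what your argument needs is the multiset of their \emph{round-$t$} colors. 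Passing from round-$(t-1)$ colors to round-$t$ colors is a per-graph correspondence: within one graph it is well defined by stability, but nothing shows the two graphs induce the \emph{same} correspondence. This is not cosmetic. Take $G=C_6$ and $G'=C_8$, each with one marked node, and compare at the common round $t=5$, which is beyond both individual stabilization rounds ($2$ and $3$ respectively). The two marked nodes receive equal round-$5$ colors (their depth-$5$ universal covers are identical marked paths), yet the marked nodes' neighbors already carry different round-$5$ colors in the two graphs, and the color multisets of the two graphs certainly differ ($|V|\neq|V'|$). So under your reading of $\WL(G)$ as ``the stable coloring, reached within $|G|$ rounds'', both the decodability claim and the lemma itself are false; any proof that works at round $5$ must be wrong.

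The repair is standard and preserves your skeleton: compare the colorings at a common round $t$ at which the \emph{disjoint union} $G\sqcup G'$ is stable (e.g.\ $t\geq |G|+|G'|$ always suffices; equivalently, read color equality in the limit over all rounds, which is what the paper's $\equiv_{\WL}$ and Zhang et al.\ intend). Since \WL colors are computed componentwise on a disjoint union, stability of the union gives exactly the cross-graph equitability you need: if $\col^t(u)=\col^t(u')$ for $u\in V$, $u'\in V'$, then $\col^{t+1}(u)=\col^{t+1}(u')$, and decoding these equal round-$(t+1)$ values via injectivity of \HASH yields equal multisets of round-$t$ neighbor colors, hence matching values $a_{\gamma\delta}$ in the two graphs. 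With that substitute in place, your reachability step (connectivity makes every color of $G$ reachable from $\gamma_v$ in the color-adjacency graph, and matched colors have matched adjacency counts), the anchor, and the handshake propagation go through verbatim, as does your backward direction, which is valid at any common round. Two points you use silently should also be stated: the two marks must be the \emph{same} label (otherwise neither direction of the equivalence can hold non-vacuously), and multiset equality forces $|V|=|V'|$.
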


\thmIRvsNegnngraphs*
\begin{proof}
We apply induction over $d$. At $d=0$, the claim follows since $\rho(\GNN)=\rho(\WL)$. Suppose for graphs $G,G'$ with labels in $P$ and $d>0$, $G \equiv_{\HEGNNd{(d)}} G'$, then $|G|=|G'|$. We show $G \equiv_{\WLIR\tdash\textup{(d)}} G'$.

By lemma \ref{lem:injective_hegnn} there exists a sufficiently separating \HEGNN $\calA^s = (\calB^s, \calC^s)$. Since:
\[\multiset{\run_{\calA^s}(G,v)\mid v\in V}=\multiset{\run_{\calA^s}(G',v')\mid v'\in V'}\]
there is a bijection $f$ between the two multisets, such that for all $u \in V$ and for unique proposition $q$:
\begin{align*}
 \run_{\calB^s}(G_{[q \mapsto u]})(u)  = \run_{\calB^s}(G'_{[q \mapsto f(u)]})(f(u))
\end{align*}
Let $C^u, C^{f(u)}$ be the connected components containing $u$ and $f(u)$. By  lemma \ref{lem:unique_node_coloring_to_graph_coloring}:
\begin{align*}
 \multiset{\run_{\calB^s}((C^u_{[q \mapsto u]})_\WL)(w) \mid w \in C^u}  = \multiset{\run_{\calB^s}((C^{f(u)}_{[q \mapsto f(u)]})_\WL)(w) \mid w \in C^{f(u)}}
\end{align*}
Since these connected components then also obtain the same multiset of output embeddings without a unique marking, this equality extends to the full graphs $G,G'$:
\begin{align*}
 \multiset{\run_{\calB^s}(G_{[q \mapsto u]})(w) \mid w \in V}  = \multiset{\run_{\calB^s}(G'_{[q \mapsto f(u)]})(w) \mid w \in V'}
\end{align*}
It follows that $G,G'$ are indistinguishable by $\WLIRd{(d-1)}$ after marking $u,$ and $f(u)$ and applying \WL:
\begin{align*}
\multiset{\run_{\calB^s}((G_{[q \mapsto u]})_\WL)(w) \mid w \in V} 
&= \multiset{\run_{\calB^s}((G'_{[q \mapsto f(u)]})_\WL)(w) \mid w \in V'} 
&&\text{(Lemma~\ref{lem:hegnn_still_same_if_first_WL})} \\
(G_{[q \mapsto u]})_\WL 
&\equiv_{\HEGNNd{(d-1)}} (G'_{[q \mapsto f(u)]})_\WL 
&&\text{(Lemma~\ref{lem:injective_hegnn})} \\
(G_{[q \mapsto u]})_\WL 
&\equiv_{\WLIRd{(d-1)}} (G'_{[q \mapsto f(u)]})_\WL 
&&\text{\hspace{-1.25cm}(Induction hypothesis)}
\end{align*}
Since $\WLIR(G,d)$ obtains the same depth $d-1$ subtrees as $\WLIR(G',d)$, $G \equiv_{\WLIR\tdash\textup{d}} G'$.
\end{proof}

\thmIRvsNegnnnodes*
\begin{proof}



Let $d \geq 0$. Suppose $(G,v) \equiv_{\HEGNNd{d+1}} (G',v')$. We show $G \equiv_{\WLIRd{d}} G'$.

For a sufficiently separating \HEGNN $\calA^s = (\calB^s, \calC^s)$ with nesting depth $d+1$:
\begin{align*}
 \run_{\calA^s}(G)(v)  = \run_{\calA^s}(G')(v') 
\end{align*}
Thus, for unique label $q$:
\begin{align*}
 \run_{\calB^s}(G_{[q\mapsto v]})(v)  = \run_{\calB^s}(G'_{[q\mapsto v']})(v') 
\end{align*}
Then since $G,G'$ are connected:
\begin{align*}
  \multiset{\run_{\calB^s}(G_{[q\mapsto v]})(u) \mid u \in V}  &= \multiset{\run_{\calB^s}(G'_{[q\mapsto v']})(u') \mid u' \in V'} &\text{(Lemma \ref{lem:unique_node_coloring_to_graph_coloring})}\\
  G &\equiv_{\HEGNNd{d}} G' &\text{(Lemma \ref{lem:injective_hegnn})}\\
  G &\equiv_{\WLIRd{d}} G' &\text{(Theorem \ref{thm:IRvsNEGNNgraphs})}\\
\end{align*}
\end{proof}

\begin{figure}[H]
    \centering
    \includegraphics[width=0.8\linewidth]{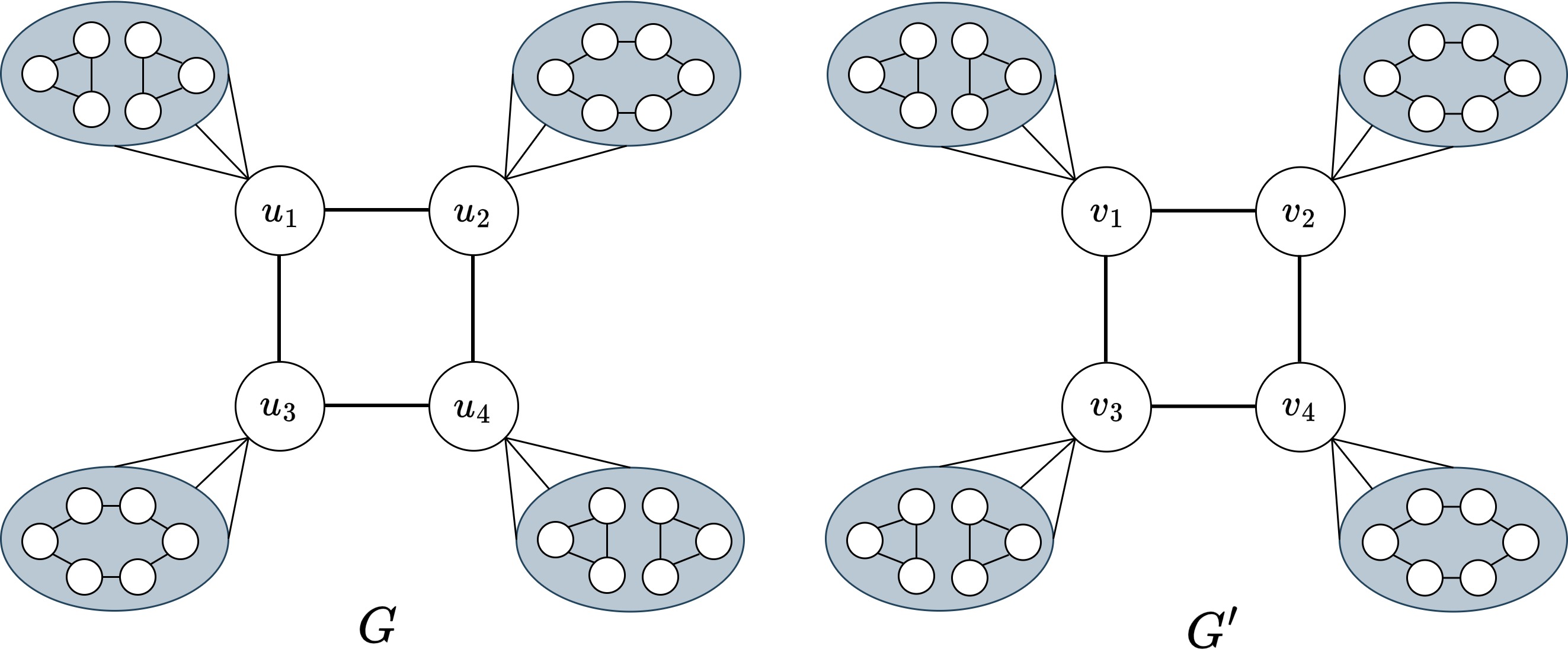}
    \caption{Two graphs that are distinguished by a depth $1$ \HEGNN, but not by $\WLIRd{1}$. All nodes in both graphs have an empty labeling.
}
    \label{fig:HEGNN_stronger_than_WLIR}
\end{figure}

\thmHEGNNStrongerthanWLIR*

\begin{proof}
    We use two graphs $G,G'$ introduced by Rattan and Seppelt \cite{rattan2023weisfeiler}, and shown in  Figure \ref{fig:HEGNN_stronger_than_WLIR}.
    $G$ has a square $u_1, u_2, u_3, u_4$, where $u_1$ is connected to all nodes in two triangles, the same holds for $u_4$ and two different triangles, and $u_2,u_3$ are connected to all nodes in two distinct $6$ cycles. $G'$ is constructed similarly with a central square $v_1, v_2, v_3, v_4$, where now $v_1$ and $v_3$ are each connected to two triangles and $v_2, v_4$ are connected to distinct $6$ cycles. 
    
    To see $G =_\WLIRd{1} G'$ note firstly that the two graphs are $\WL$ equivalent. Furthermore, there is a bijection $f$ between nodes of $G$ and $G'$, where for each node $u$ in $G$, applying \WL to $G$ after individualizing $u$ yields the same coloring as applying $\WL$ to $G'$ after individualizing $f(u)$. We let $f(u_1)=v_1, f(u_2)=v_2, f(u_3)=v_3, f(u_4)=v_4$, map every node in a triangle in $G$ to a node in a triangle in $G'$ and do the same for nodes in the $6$ cycles of $G$ and $G'$. One can easily check that this gives the desired result so that $G =_\WLIRd{1} G'$.

    We now show there is a $\GML(\down^1_{W^2})$ formula $\psi$ that is satisfied by $u_1$ but not by any node in $G'$. 
    \begin{align*}
        \phi &= \Diamond^{\geq 8}(\top) \wedge \Diamond(\down x.W^2(\neg \Diamond^{\geq 8}(\top) \wedge \Diamond(\neg \Diamond^{\geq 8} (\top) \wedge \Diamond(\neg \Diamond^{\geq 8}(\top) \wedge \Diamond x))))\\
        \psi &= \phi \wedge \neg \Diamond \phi
    \end{align*}
    Here, $\phi$ expresses having degree at least $8$, and being connected to a node on a $3$ cycle that doesn't pass a node with degree at least $8$.  
    $u_1,u_4,v_1$ and $v_3$ are the only nodes in $G,G'$ that satisfy $\phi$. Thus $G, u_1 \models \psi$, while no node in $G'$ satisfies $\psi$. By the logical characterization in theorem \ref{thm:HEGNNvsGMLdown} there exists a \HEGNN $\calA$ such that $G \neq_{\cls_\calA} G'$.
\end{proof}

\subsection*{5.2. Homomorphism count enriched GNNs}
The definitions of homomorphisms and homomorphism counts were omitted from the paper due to lack of space. They are
as follows: a \emph{homomorphism} from a pointed graph $(F,u)$ to a pointed graph $(G,v)$ is a map $h$ from the 
vertex set of $F$ to the vertex set of $G$ such that
\begin{enumerate}
    \item for each edge $(w,w')$ of $F$, $(h(w),h(w'))$ is an edge of $G$, 
    \item for each vertex $w$ of $F$,  $lab_F(w)\subseteq lab_G(h(w))$, and 
    \item $h(u)=v$. 
\end{enumerate}

Homomorphisms are defined similarly for unpointed graphs $G$, where we simply omit condition (ii).
We use 
$\hom((F,u),(G,v))$ to denote the number of  
homomorphism from $(F,u)$ to $(G,v)$. In addition, if $h$ is a partial map
from the 
vertex set of $F$ to the vertex set of $G$, then 
we denote by $\hom_h(F,G)$ the number of homomorphism that extend $h$. In particular, $\hom_{\{(u,v)\}}(F,G)=\hom((F,u),(G,v))$.

For a set of pointed graphs $\mathfrak{F} = \{(F_1,u_1), \dots , (F_m,u_m)\}$
and a pointed graph $(G,v)$, 
we denote by $\hom(\mathfrak{F},(G,v))$
the vector of homomorphism counts \[\langle \hom((F_1,u_1),(G,v)), \ldots, \hom((F_m,u_m),(G,v))\rangle\] (assuming
some ordering on the members
of $\mathfrak{F}$).

\thmCountsUniform*
\begin{proof}
    We consider the case where $\mathfrak{F}$ consists of a single 
    rooted graph $(F,u)$ with $d$ nodes.
    The proof extends naturally to the case with multiple such rooted graphs.
    
    We will prove the 
    following stronger statement:
    
    \begin{itemize}
        \item [(*)]
        For all sequences $\langle u_1, \ldots, u_k\rangle$ of distinct nodes 
        of $F$ (with $k>0$), there is a 
        $(D+k,1)-\HEGNN$ $\calA$ of nesting depth $d-k$, 
        such 
        that, for all graph $G$ 
        and maps $h:\{u_1, \ldots, u_k\}\to V_G$, 
        \[ \run_\calA(G,\emb_G^{+h})(h(u_k)) = \hom_h(F,G)\]
    where $\emb_G^{+h} = \{w:\emb_G(w)\oplus\langle \delta_{wh(u_1)}, \ldots, \delta_{wh(u_k)}\rangle\mid w\in V_G\}$.
    \end{itemize}
    Observe that the special case of (*) with $k=1$  and $u_1=u$ yields a $(|P|+1,1)$-\HEGNN $\calB$
    of nesting depth $d-1$ such that 
            \[ \run_\calB(G,\emb'_G)(v) = \hom((F,u),(G,v))\]
    where $\emb'_G = \{w:\emb_G(w)\oplus\langle \delta_{wh(u)}\rangle\mid w\in V_G\}$.
    Let $\calC$ be the trivial $(|P|+1,|P|+1)\tdash\GNN$ that implements the identity function.
    It then follows that $\calA=(\calB,\calC)$, which is a $(|P|,|P|+1)$-\HEGNN of nesting depth $d$, has the 
    desired behavior.

    It remains to prove (*).
    The proof proceeds by induction on the $d-k$. When $k=d$, the partial function $h$ is in fact a total function 
    from the node set of $F$ to that of $G$.
    It is easy to implement a \GNN that, in this case, outputs 1 if $h$ is a homomorphism and outputs 0 otherwise.
    Indeed, this can be done using only \relu-FFNN combination functions and Sum aggregation, and using at
    most $|V_F|$ many rounds of message passing.  
    We omit the details, as they are straightforward.


    Next, let $0<k<d$ and assume that (*) holds for $k+1$. 
    We will show that it then also holds
    for $k$. Since $k>0$, there are nodes
    of $F$ that do not belong to the 
    sequence $\langle u_1, \ldots, u_k\rangle$.
    It follows from this, by  the connectedness of $F$, that there is an edge of $F$ connecting some $u_i$ (with $i\leq k$) to some $u'\not\in\{u_1, \ldots, u_k\}$. As a basic fact about homomorphism counts,
    we have
    \[ \hom_{h}(F,G) = \sum_{v'\in V_G \text{ such that } (h(u_i),v')\in E_G} \hom_{h\cup\{(u',v')\}}(F,G)\]
    We now 
    apply induction hypothesis to $\langle u_1, \ldots, u_k, u'\rangle$, obtaining 
    a $(|P|+k+1,1)$-\HEGNN $\calB$. 
    Let $\calC$ be a $(|P|+k+1,|P|+k+1)$-GNN that
    performs one round of message passing using Sum aggregation and using the combination function
    \[ \COM(\langle x_1,\ldots, x_{|P|+k},x',z_1,\ldots, z_{|P|+k},z'\rangle) = \langle x_1, \ldots, x_{|P|+k},z'\rangle\]
    (i.e. summing up the values in the $|P|+k+1$-th position across all neighbors, and keeping the other values in the vector the same).
    Let $\calA=(\calB,\calC)$. It follows from the construction
    that \[\run_\calA(G,\emb_G^{+h})(h(u_i))=\hom_h(F,G)\]
    In other words, after running $\calA$, ``node $h(u_i)$ knows the answer''. All that remains
    to complete the construction, is to ``pass this information from $h(u_i)$ to $h(u_k)$. This can be done by augmenting 
    $\calC$ with $|V_F|$ more layers of message passing (because $h(u_i)$ and $h(u_k)$ are at most $|V_F|$ distance apart). We omit the details which are straightforward.
\end{proof}
\begin{lemma}\label{lem:ego-rank-lemma}
    For every pointed graph $(F,u)$ of ego-rank $n$, 
    there is a witnessing $dep$-function (i.e., with maximum node rank $n$) such that
    \begin{enumerate}
        \item $dep$ is well-founded, i.e, $v\neq dep^n(v)$ for all $v$ and $n\geq 1$.
        \item for all nodes $v$, every
        connected component of the subgraph induced by $dep^{-1}(v)$ 
        contains a neighbor of $v$. Equivalently, 
        when $dep(w)=v$, there is a path
        from $w$ to $v$ passing only though nodes $w'$ with 
        $dep(w')=v$. 
    \end{enumerate}
\end{lemma}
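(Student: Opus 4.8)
The plan is to start from an arbitrary witnessing $dep$-function and normalize it by local surgery, using an extremal argument to force termination. Among all witnessing $dep$-functions for $(F,u)$ (those attaining maximum node rank $n$), I fix one, still denoted $dep$, that additionally minimizes the potential $\Phi = \sum_{w} |deps(w)|$. I claim this extremal witness already satisfies the two properties; otherwise I exhibit a valid witness of maximum rank $\leq n$ and strictly smaller $\Phi$, contradicting minimality. Throughout, after each surgery I must re-verify the three defining conditions on $dep$: the root maps to $\bot$, every edge joins $\preceq$-comparable or co-fibered endpoints (where $w \preceq x$ means $w \in deps(x)$), and every fiber $dep^{-1}(\cdot)$ induces an acyclic subgraph. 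Crucially, each move below only shrinks $deps$-sets, so maximum rank can never increase.

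For \emph{well-foundedness}, suppose some node lies on a cycle $w_1 \to w_2 \to \cdots \to w_k \to w_1$ of the functional graph of $dep$. Then every $w_i$ has the same inflated ancestor set $deps(w_i)=\{w_1,\dots,w_k\}$ and rank $k$. I would break the cycle by redirecting one cyclic node to $\bot$ (making it a fresh root) and letting the rest of the chain stand, which can only shrink each $deps(w_i)$ and the $deps$-sets of their descendants, strictly decreasing $\Phi$. The delicate point is that moving a node between fibers must preserve the acyclicity of fibers and edge-comparability; I would exploit that all cyclic nodes share the same ancestor set, so the $\preceq$-comparability relation among them and their neighbors is unchanged by the re-pointing.

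For \emph{property 2}, I may assume $dep$ is well-founded, so rank increases by exactly $1$ along each $dep$-edge and $\preceq$ is a forest order. Suppose a connected component $K$ of the subgraph induced by $dep^{-1}(v)$ contains no neighbor of $v$. For a boundary edge $(w,x)$ with $w \in K$, $x \notin K$, comparability forces either $x \in deps(v)$ or $w \in deps(x)$, while the case $x=v$ is excluded precisely because $K$ has no neighbor of $v$ (and $v \notin K$ by well-foundedness). Hence $K$ attaches to the rest of $F$ only through strict ancestors of $v$ and through its own descendants. Since $F$ is connected and the root is reachable, some boundary edge reaches a strict ancestor; I take $a \prec v$ to be the deepest ancestor of $v$ adjacent to $K$ (well-defined, as $deps(v)$ is a chain) and redirect every node of $K$ to $a$. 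Because $deps(a)\subsetneq deps(v)$, this strictly lowers the rank of every node of $K$ and of their descendants, so $\Phi$ drops; and every ancestor-boundary edge $(w,x)$ stays comparable since any such $x$ adjacent to $K$ satisfies $x \preceq a$, hence $x \in deps(w)$ after the move.

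The main obstacle is the bookkeeping guaranteeing that each surgery simultaneously preserves all three defining conditions — in particular the acyclicity of the enlarged fiber $dep^{-1}(a)$, which may acquire new internal edges from $K$, and the comparability of descendant edges leaving $K$. I expect to control these through an invariant maintained by the extremal witness, namely that along every edge the $\preceq$-order agrees with the rank order; under this invariant any cycle created inside a fiber, or any violated comparability, would itself certify a further rank-reducing reassignment, again contradicting minimality of $\Phi$. Once validity is secured, termination is immediate, since $\Phi$ is a nonnegative integer that strictly decreases at each repair.
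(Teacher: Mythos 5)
Your overall strategy is the same as the paper's: normalize a witnessing $dep$-function by local re-pointing surgeries, with termination forced by a potential (the paper iterates the repairs; your extremal packaging via $\Phi=\sum_w|deps(w)|$ is equivalent). The gaps are in the surgeries themselves: both of the claims you use to certify that a surgery again yields a valid $dep$-function are false as stated. For well-foundedness, comparability is \emph{not} ``unchanged by the re-pointing'': cutting the cycle $C$ at one node shrinks each $deps(w_i)$ from all of $C$ to a suffix of the resulting chain, and also shrinks $deps(z)$ for every outside node $z$ whose pointer enters $C$, so edges licensed by the full set $C$ can lose their license. Concretely, take a graph $4$-cycle on $v_1,v_2,v_3,v_4$ with $dep(v_1)=v_2$, $dep(v_2)=v_3$, $dep(v_3)=v_4$, $dep(v_4)=v_1$, a node $z_1$ adjacent only to $v_3$ with $dep(z_1)=v_1$, and a node $z_2$ adjacent only to $v_1$ with $dep(z_2)=v_3$ (the root lies in another component, as it must whenever a $dep$-cycle exists). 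This $dep$ satisfies all three defining conditions, yet for every choice of cycle node re-pointed to $\bot$ one of the edges $(z_1,v_3)$, $(z_2,v_1)$ violates the comparability condition afterwards; e.g.\ cutting at $v_4$ gives $deps'(z_2)=\{v_3,v_4\}\not\ni v_1$, while $z_2\notin deps'(v_1)$ and $dep'(z_2)=v_3\neq v_2=dep'(v_1)$. The move you are missing is the paper's: simultaneously re-point every outside node whose pointer enters the cycle to the head of the new chain, which keeps its $deps$-set literally unchanged.

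For property 2, two steps fail. First, ``some boundary edge reaches a strict ancestor'' does not follow from connectedness: $K$ may attach to the rest of the graph only through its own $dep$-descendants. On the path $u - v - x - w$ rooted at $u$, with $dep(v)=u$, $dep(w)=v$, $dep(x)=w$, the component $K=\{w\}$ of $dep^{-1}(v)$ contains no neighbour of $v$ and its only boundary edge goes to the descendant $x$; no ancestor of $v$ is adjacent to $K$, so your redirect target $a$ does not exist. Second, the ``main obstacle'' you flag (edges leaving the descendants of $K$) is the real crux, and deferring it to an invariant you ``expect'' to hold is not a proof: nothing about $\Phi$-minimality is ever shown to yield that invariant, and the claim that a violation ``certifies a further rank-reducing reassignment'' is exactly what would need an argument. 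Indeed, no surgery that re-points only $K$ can work in general: attach a $K_5$ to $u$ in the path example and extend $dep$ over the clique by a chain ($dep(c_1)=u$, $dep(c_2)=c_1$, $dep(c_3)=dep(c_4)=c_2$). The resulting graph has ego-rank $3$ and this $dep$ is witnessing; property 2 still fails at $K=\{w\}$, and the edge $(v,x)$ is comparable only because $v\in deps(x)$, so re-pointing $w$ to anything other than $v$ while leaving $dep(x)=w$ destroys that edge. A valid repair must also restructure $K$'s descendants, e.g.\ $dep'(x)=v$ and $dep'(w)=x$. (This configuration is also the delicate case for the paper's own repair, which re-points $K$ to $dep(v)$; the case analysis there is very terse on exactly this point.) So the $\Phi$-decreasing surgery your extremal argument needs does not exist in the form you describe, and closing the gap requires a genuinely different repair operation, not just more careful bookkeeping.
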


\begin{proof} ~

    1. If $dep$ is not well-founded, there
    is a cycle
         \[ v_1, v_2, v_3, \ldots, v_n \]

    where $dep(v_i)=v_{i+1}$ for $i<n$ and $dep(v_n)=v_1$. Note that, in this case, $deps(v_1)=deps(v_2)=\ldots=deps(v_n)=\{v_1, \ldots, v_n\}$.

    Fix such a cycle, and 
    let $dep'$ be identical to $dep$ except that (i) $dep'(v_n)=\bot$,
    and (ii) for all $v\not\in\{v_1, \ldots, v_n\}$, if $dep(v)\in\{v_1, \ldots, v_n\}$ then $dep'(v)=v_1$.
    Note that, in this way,
    $deps'(v)=deps(v)$ for
    all $v\not\in\{v_1, \ldots, v_n\}$.
    
    We claim that $dep'$ still satisfies the conditions given in the definition of ego-rank.
    Indeed,
    \begin{itemize}
        \item $dep'(u)$ is still $\bot$
        \item Let $(w,v)\in E$ be an edge.
          Then one of the following three cases holds:
        \begin{enumerate}
            \item[(a)]
           $dep(w)=dep(v)$. Then the same holds for $dep'$, except possibly if $w\not\in\{v_1, \ldots, v_n\}$, $dep(w)\in\{v_1, \ldots, v_n\}$, and
           $v\in\{v_1, \ldots, v_n\}$.
           However, in this case, 
           we have that $dep'(w)=v_1$ and hence $v\in deps'(w) = \{v_1, \ldots, v_n\}$.
            \item[(b)]
           $w\in deps(v)$.               It follows from the construction of $dep'$ that,
           for all $v\not\in\{v_1, \ldots, v_n\}$, $deps'(v)=deps(v)$.
           Therefore, we only have to consider the case that $v\in\{v_1, \ldots, v_n\}$.
           If $w\in\{v_1, \ldots, v_n\}$,
           then we have either $w\in deps'(v)$ or $v\in deps'(w)$
           (note that $v\neq w$).
           Otherwise, by construction,
           $dep'(w)=v_1$ and hence $v\in deps'(w)$.
          \item[(c)]
          $v\in deps(w)$. This case is symmetric to the above.
        \end{enumerate}
        \item Finally, we must show that,
        for each $v\in V\cup\{\bot\}$, the subgraph
        induced by 
        $dep'^{-1}(v)$ is acyclic. 
        For each node $v\neq v_1$, we have
        that $dep'^{-1}(v)\subseteq dep^{-1}(v)$, and hence, 
        since $dep^{-1}(v)$ is acyclic, 
        so is $dep'^{-1}(v)$.
        Therefore, 
        it remains only to consider 
        $dep'^{-1}(\bot)$ and $dep'^{-1}(v_1)$.
        
        Suppose there were a cycle 
        in the subgraph induced by $dep'^{-1}(\bot)$. This cycle must contain the node $v_1$, while all
        other nodes $u$ on the cycle satisfy $dep(u)=\bot$.
        However, it is easy to see that there can be no edge
        connecting $v_1$ to such a node $u$.
        
        Finally, suppose there were a 
        cycle 
        \[ w_1, \ldots, w_k \]
        in the subgraph induced by
        $dep'^{-1}(v_1)$. If 
        $dep(w_1)=dep(w_2)=\ldots = dep(w_k) = v_i$, then 
        the subgraph induced by
        $dep^{-1}(v_i)$ would already have a cycle, which we have assumed is not the case. Therefore, 
        the cycle must include an edge
        connecting nodes 
        $w_i$ and $w_{i+1}$ where
        $dep(w_i)\neq dep(w_{i+1})$.
        Note that $w_i, w_{i+1}\not\in\{v_1, \ldots, v_n\}$ and that
        $dep(w_i),dep(w_{i+1})\in \{v_1, \ldots, v_n\}$.
        Such an edge cannot exist as it fails to satisfy the second property in the definition of ego-rank.
    \end{itemize}
    
     


    
    2. We assume $dep$ satisfies property 1.
    Let $dep(w)=v$ and suppose that property 2 fails, i.e., there is a node $w$ with $dep(w)=v$ such that no node $w'$ reachable from $w$ in the subgraph induced by $dep^{-1}(v)$ is adjacent to $v$. 
    Let $dep'$ be identical to $dep$ except
    that, for all $w'$ reachable from $w$ in the subgraph induced by $dep^{-1}(v)$, 
    we set $dep'(w') := dep(v)$. Note that, by property 1, $v\not\in deps(v)$ and hence the net effect
    of this change is that $deps'(w')=deps(w')\setminus \{v\}$. We claim that $deps'$ still satisfies 
    all requirements from the definition of ego-rank. Indeed:
    \begin{itemize}
        \item $dep'(u)$ is still $\bot$.
        \item Let $(w_1,w_2)\in E$ be an edge. Then one of the following conditions holds:
        \begin{enumerate}
            \item[(a)] $dep(w_1)=dep(w_2)$.  Then the same holds for $dep'$ (note that $w_1$ and $w_2$ belong to the same connected component of $dep^{-1}(dep(w_1))$).
            \item[(b)] $w_1\in deps(w_2)$ or vice versa. It is easy to see that, in this case, the same still holds for $deps'$.
        \end{enumerate}
        \item Finally, we must show that,
        for each $x\in V\cup\{\bot\}$, the subgraph
        induced by 
        $dep'^{-1}(x)$ is acyclic. It suffices to consider the case where $x=dep(v)$, because, for all other $x$ we have that $dep'^{-1}(x)\subseteq dep^{-1}(x)$. Therefore, let $x=dep(v)$ and suppose for the sake of a contradiction that $dep'^{-1}(x)$ contains a cycle. 
        Since $dep^{-1}(x)$ 
        and  $dep^{-1}(v)$ were both acyclic,
        this cycle must include an edge $(w_1,w_2)$ such that
        $dep(w_1)=v$ and $dep(w_2)=x$. It must then be the case that $w_2\in deps(w_1)$, and, indeed, it must be the case that $w_2=v$, a contradiction since the connected component of $w_1$ in $dep^{-1}(v)$ was not supposed to be connected to $v$.
    \end{itemize}
    By repeating this operation, we obtain 
    a dep-function satisfying property 2.
\end{proof}

\begin{proposition}
    The rooted $n\times 2$-grid (with $n\geq 1$) as depicted in Figure~\ref{fig:examples-ego-rank} has ego-rank $n-1$.
\end{proposition}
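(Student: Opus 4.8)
The plan is to prove the two inequalities $\text{ego-rank}(G,u_1)\le n-1$ and $\text{ego-rank}(G,u_1)\ge n-1$ separately, writing $G$ for the rooted $n\times 2$-grid with top row $u_1,\dots,u_n$, bottom row $v_1,\dots,v_n$, rungs $(u_j,v_j)$, and root $u_1$.

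For the upper bound I would exhibit the ``staircase'' $dep$-function suggested by Figure~\ref{fig:examples-ego-rank}, placing the anchors on a diagonal. Let $c_j=u_j$ for odd $j$ and $c_j=v_j$ for even $j$, and let $\bar c_j$ denote the other node of column $j$. Set $dep(c_1)=\bot$, $dep(c_j)=c_{j-1}$ for $2\le j\le n$, $dep(\bar c_j)=c_j$ for $j<n$, and $dep(\bar c_n)=c_{n-1}$. Then $deps(c_j)=\{c_{j-1},\dots,c_1\}$, so every node has rank at most $n-1$, with $c_n$ and $\bar c_{n-1}$ attaining it. I would then verify the three defining conditions: condition~1 is immediate; for condition~2 one checks that every grid edge either joins two nodes with a common $dep$-value (the vertical edge inside a step and the ``forward'' horizontal edge land in the same class) or joins a node to one of its $dep$-ancestors; and for condition~3 each class $dep^{-1}(c_j)=\{\bar c_j,c_{j+1}\}$ induces at most a single edge and is therefore acyclic. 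This gives $\text{ego-rank}(G,u_1)\le n-1$.

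For the lower bound I would argue by induction on $n$ that every admissible $dep$ has maximum node rank at least $n-1$. Using Lemma~\ref{lem:ego-rank-lemma} I may assume $dep$ is well-founded with each connected component of a class containing a neighbour of its parent. The central structural observation is that the rank-$0$ class $dep^{-1}(\bot)$ must be a single vertex: it is an induced forest by condition~3; condition~2 forces every connected component of $G\setminus dep^{-1}(\bot)$ to attach to $dep^{-1}(\bot)$ at a single vertex, because adjacent nodes are siblings or comparable and hence share the same rank-$0$ ancestor; and since $G$ is $2$-connected this single-attachment property collapses $dep^{-1}(\bot)$ to one vertex, which by condition~1 is $u_1$. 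Deleting $u_1$ and decrementing all ranks yields an admissible $dep$ for $G\setminus u_1$, and I would iterate this ``peel the unique minimal vertex'' step, tracking how the remaining grid structure shrinks.

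The hard part is the accounting showing that each peeling step advances past only one column of the grid, so that $n-1$ steps are genuinely forced, i.e.\ ruling out ``shortcut'' $dep$-functions in which one individualization level resolves several squares simultaneously. This is exactly where conditions~2 and~3 must be combined: I expect to formalize an \emph{advance-by-one} invariant asserting that the frontier of minimal-rank undecided nodes cannot cross more than one rung per rank level, the point being that a class forced to be acyclic and to attach to its parent only locally cannot cover the rungs of two consecutive squares without either closing a cycle inside a single class or creating an edge between incomparable, non-sibling nodes. Establishing this invariant, and thereby matching the staircase construction, is the main obstacle; by contrast the $2$-connectivity argument for uniqueness of the minimal vertex and the inductive descent to a grid of fewer columns are the routine supporting steps.
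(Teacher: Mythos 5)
Your upper bound is correct: the diagonal ``staircase'' $dep$-function you define satisfies all three conditions in the definition of ego-rank and has maximum node rank $n-1$, so it is a valid witness. (The paper's Figure~\ref{fig:examples-ego-rank} uses a slightly different witness, with classes of size two hugging the diagonal, but the two are interchangeable.)

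The lower bound, however, contains a genuine gap, and you have named it yourself: the ``advance-by-one invariant'' that you only ``expect to formalize'' is not a supporting detail but the entire mathematical content of the inequality $\text{ego-rank}(G,u_1)\geq n-1$. Worse, the peeling scheme you build around it cannot deliver that invariant. Your argument that $dep^{-1}(\bot)$ is a singleton is sound, but it rests on $2$-connectivity of the grid; after deleting $u_1$ the remaining graph is no longer $2$-connected ($v_1$ now has degree one), so nothing forces the next minimal class $dep^{-1}(u_1)$ to be a single vertex --- and in optimal witnesses it genuinely is not: in the paper's own witness, $dep^{-1}(u_1)=\{v_1,v_2\}$. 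There is also a definitional wrinkle: after peeling, the restricted function sends a whole class (not a single root) to $\bot$, so it is not a $dep$-function for a rooted graph in the required sense. Hence ``peel the unique minimal vertex'' breaks down at the second step, and the induction on $n$ never gets started.

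For comparison, the paper's proof does not peel at all. Assuming, via Lemma~\ref{lem:ego-rank-lemma}, that $dep$ is well-founded and that each connected component of $dep^{-1}(w)$ contains a neighbour of $w$, it constructs a sequence $\pi_1=u_1,\pi_2,\dots,\pi_n$ with $\pi_i\in\{u_i,v_i\}$ such that $\pi_{i-1}=dep(\pi_i)$ or $\pi_{i-1}=dep^2(\pi_i)$. The induction step is a purely local case analysis around column $i-1$: if, say, $\pi_{i-1}=u_{i-1}$ and neither $u_{i-1}\in deps(v_{i-1})$ nor $u_{i-1}\in deps(u_i)$ holds, then $dep(u_{i-1})=dep(v_{i-1})=dep(u_i)$, and well-foundedness (together with acyclicity of classes and the component-connectivity property) leaves no admissible value for $dep(v_i)$; the connectivity property then upgrades membership in $deps$ to an actual $dep$- or $dep^2$-link. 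This yields $|deps(\pi_n)|\geq n-1$ directly, with no global accounting over peeling steps. Repairing your proof amounts to proving such a chain lemma, which is exactly the step you left open.
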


\begin{proof}
    For the $n-1$ upper bound, a witnessing $dep$-function is already depicted in Figure~\ref{fig:examples-ego-rank} for the special case of $n=5$, and it can be modified in the obvious way for the general case
    (note that there are also other choices for the $dep$ function that yield the same ego rank).
    In what follows, we prove the $n-1$ lower bound.

    Let $dep:V\to V\cup\{\bot\}$ 
    be any function satisfying the
    requirements in the definition of ego-rank. We may assume that it also satisfies the properties described in Lemma~\ref{lem:ego-rank-lemma}.

\begin{figure}
\centering
  \begin{tikzcd}[remember picture, column sep={12mm,between origins}, row sep={12mm,between origins}, cells={nodes={draw, circle, minimum size=5mm, inner sep=0pt}}]
    u_1 \arrow[dash,r] \arrow[dash,d] & 
    u_2 \arrow[dash,r] \arrow[dash,d] & 
    u_3 \arrow[dash,r] \arrow[dash,d] &
    u_4 \arrow[dash,r] \arrow[dash,d] &
    u_5 \arrow[dash,d] \\
    v_1 \arrow[dash,r] & 
    v_2 \arrow[dash,r] & 
    v_3 \arrow[dash,r] & 
    v_4 \arrow[dash,r] & 
    v_5
  \end{tikzcd}%
\caption{Rooted $5 \times 2$ grid}
\label{fig:5x2grid_appendix}
\end{figure}
    Recall that $u_1$ is the root of the rooted graph. We show there is a sequence \[\langle \pi_1, \pi_2, \dots, \pi_{n}\rangle\] where $\pi_1 = u_1$, and for $i > 1$ the following holds:
    \begin{enumerate}
        \item $\pi_i \in \{u_i, v_i\}$
        \item Either (i) $dep(\pi_i) = \pi_{i-1}$ or (ii) $dep(\pi_i) = \pi'_{i-1}$ and $dep^2(\pi_i) = \pi_{i-1}$
    \end{enumerate}    
    where $\pi'_{i-1}=\begin{cases}u_{i-1} & \text{if $\pi_{i-1}=v_{i-1}$}\\v_{i-1} & \text{if $\pi_{i-1}=u_{i-1}$}\end{cases}$ 
    
    We apply induction over $i > 1$. It follows from the induction hypothesis that $deps(\pi_{i-1})$ doesn't contain any $u_j, v_j$ with $j \geq i-1$. Suppose w.l.o.g. that $\pi_{i-1} = u_{i-1}$. Note firstly that either $u_{i-1} \in deps(v_{i-1})$ or $u_{i-1} \in deps(u_i)$. For, if this would not hold then, by definition of $dep$, $dep(u_{i-1}) = dep(v_{i-1}) = dep(u_i)$, which by well-foundedness leaves no possibility for $dep(v_i)$.

    Since every connected component of $dep^{-1}(u_{i-1})$ is connected to $u_{i-1}$ it follows that $u_{i-1} = dep(u_i)$ or $u_{i-1} = dep(v_{i-1})$. In the first case we let $\pi_i = u_i$. 
    
    Suppose then that $u_{i-1} = dep(v_{i-1})$, then either $u_{i-1} = dep(v_{i})$ or $v_{i-1} \in deps(v_i)$. In the second case, by well-foundedness and the connectedness of $dep^{-1}(v_{i-1})$ to $v_{i-1}$ it follows that $v_{i-1} = dep(v_i)$. In both cases we let $\pi_i = v_i$, completing the induction. 
    
    Since $|deps(\pi_n)| \geq n-1$ the lower bound follows.
\end{proof}

\begin{lemma}
    \label{lem:max-num-homomorphisms}
     Let $(F,u)$ be a rooted graphs and  $N>0$. Then there exists a number $M$ such that, for all
     pointed graphs $(G,v)$ of degree at most $N$, $hom((F,u),(G,v))\leq M$.
\end{lemma}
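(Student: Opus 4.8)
The plan is to exploit the connectedness of the rooted graph $(F,u)$ to count homomorphisms by a greedy, vertex-by-vertex extension, using the bounded degree of $G$ to bound the number of choices available at each step.

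First I would fix an enumeration $w_1, w_2, \ldots, w_k$ of the nodes of $F$ with $w_1 = u$ such that every $w_i$ with $i > 1$ is adjacent in $F$ to at least one earlier node $w_j$ (with $j < i$). Such an ordering exists precisely because $(F,u)$ is a rooted graph, i.e.\ every node of $F$ is reachable from $u$: one can, for instance, take a breadth-first ordering (or any spanning-tree ordering) starting from $u$.

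Next I would bound the number of homomorphisms $h\colon F \to G$ with $h(u) = v$ by building them up along this enumeration. The value $h(w_1) = h(u) = v$ is forced, so there is exactly one choice at the first step. For each $i > 1$, pick a neighbor $w_j$ of $w_i$ in $F$ with $j < i$; since $h$ must send the edge $(w_j, w_i)$ to an edge of $G$, the image $h(w_i)$ is constrained to be a neighbor of the already-determined node $h(w_j)$ in $G$. As $G$ has degree at most $N$, there are at most $N$ candidates for $h(w_i)$. Multiplying the number of choices across all $k-1$ extension steps shows that there are at most $N^{k-1}$ such maps, regardless of whether they actually satisfy the remaining edge and label constraints; in particular $\hom((F,u),(G,v)) \le N^{k-1}$. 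Taking $M = N^{k-1}$, where $k$ is the number of nodes of $F$, then works uniformly for all pointed graphs $(G,v)$ of degree at most $N$.

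This argument is entirely routine and I do not anticipate any real obstacle. The only points requiring (trivial) care are the existence of the connected ordering, which rests on the reachability assumption built into the definition of a rooted graph, and the observation that since we only seek an upper bound we may safely overcount by ignoring both the non-spanning-tree edges of $F$ and the label-compatibility condition, retaining for the count only the single adjacency constraint that links each new vertex to an already-placed one.
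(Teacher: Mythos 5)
Your proof is correct. It differs from the paper's argument in the counting mechanism, though both rest on the same two ingredients (connectedness of $F$, which the paper builds into the definition of a rooted graph, and the degree bound on $G$). The paper argues globally: every homomorphic image of $F$ lies inside the radius-$r$ ball around $v$, where $r$ is the eccentricity of $u$ in $F$; that ball has at most $(N+1)^r$ nodes, so there are at most $\bigl((N+1)^r\bigr)^{n}$ homomorphisms, $n$ being $|V_F|$. You instead count incrementally along a spanning-tree ordering of $F$ rooted at $u$: the root's image is forced and every later vertex has at most $N$ candidate images, giving the bound $N^{k-1}$. Your bound is sharper (the paper's is doubly exponential in the parameters $r$ and $n$, yours is singly exponential in $k$), and your bookkeeping — that one may overcount by dropping non-tree edges and label constraints — is sound, since every genuine homomorphism corresponds to at least one admissible choice sequence. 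For the purpose of the lemma, however, only finiteness of $M$ matters (it feeds into Lemma~\ref{lem:relu-multiply}, where the bound is treated as a constant), so both arguments serve equally well.
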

\begin{proof}
    Let $r$ be the maximal distance from $u$ to any other node of $F$.
    The radius-$r$ neighborhood of 
    $v$ in $G$ contains at most $(N+1)^r$
    nodes. It follows that there can be 
    at most $M= ((N+1)^r)^{n}$ homomorphisms from $(F,u)$ to $(G,v)$,
    where $n$ is the number of nodes of $F$. 
\end{proof}

\begin{lemma}\label{lem:relu-multiply}
    \relu-FFNNs can multiply small natural numbers.
    That is, for each $N>0$ and $k>0$, there is a
    \relu-FFNN with input dimension $k$ and output dimension $1$ that, on input $(x_1, \ldots, x_k)$  with $0\leq x_i\leq N$, outputs $\Pi_i x_i$.
\end{lemma}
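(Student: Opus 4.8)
The plan is to give an explicit inductive construction on $k$. The crucial observation is that each $x_i$ ranges over the finite set $\{0,1,\ldots,N\}$, so the inputs are integers, and it is this integrality that makes exact computation possible. The base case $k=1$ is the identity, realized by a single linear layer. For the inductive step it suffices to build a gadget that multiplies a bounded nonnegative integer $a$ (the partial product $\prod_{i<k} x_i$ computed so far, which lies in $\{0,\ldots,N^{k-1}\}$) by one further factor $b = x_k \in \{0,\ldots,N\}$, and then to cascade $k-1$ such gadgets.

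The construction rests on two gadgets, each realizable with a constant number of \relu layers. First, for an integer-valued input $b \in \{0,\ldots,N\}$ one computes the step functions $\mathbf{1}[b \geq j] = \relu(b-j+1) - \relu(b-j)$, which equals $1$ when $b \geq j$ and $0$ otherwise precisely because $b$ is an integer, and hence the exact-value indicators $\mathbf{1}[b=j] = \mathbf{1}[b\geq j] - \mathbf{1}[b \geq j+1]$ for each $j \leq N$. Second, I would use a \emph{gating} identity: for $g \in \{0,1\}$ and $0 \le a \le M$ one has $a\cdot g = \relu(a + Mg - M)$, since $g=1$ gives $\relu(a)=a$ while $g=0$ gives $\relu(a-M)=0$ using $a \le M$.

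Combining the two gadgets, the product is computed as $a\cdot b = \sum_{j=0}^{N} j\cdot\bigl(a\cdot \mathbf{1}[b=j]\bigr)$, where each inner term $a\cdot\mathbf{1}[b=j]$ is realized by the gating gadget (with $M$ equal to the current upper bound on $a$) and the outer weighted sum is a single linear layer; correctness is immediate since exactly one indicator, the one with $j=b$, is nonzero. Feeding $x_k$ into the indicator layer and the running product into the gate, then summing, multiplies the running product by $x_k$. Iterating this $k-1$ times, with the gate bound $M$ updated to the running bound $N^{i}$ after $i$ factors, yields $\prod_i x_i$.

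The main obstacle is conceptual rather than computational: the elementary operation we need — multiplying a variable $a$ by a $\{0,1\}$-valued gate — is genuinely nonlinear and cannot be performed by a linear map, so the entire argument hinges on the boundedness of $a$, which is exactly what lets the single \relu in the gating identity clip the unwanted case down to $0$. Beyond this, the only care required is to check that all intermediate quantities remain exact integers within the stated bounds, so that the threshold-based indicators fire correctly; this is routine, since the inputs are integers and every operation in the construction preserves integrality.
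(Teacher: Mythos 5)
Your proof is correct, but it takes a genuinely different route from the paper. The paper's proof is a one-shot lookup table: for every tuple $(m_1,\ldots,m_k)\in\{0,\ldots,N\}^k$ it builds a \relu-implementable indicator $f_{(m_1,\ldots,m_k)}$ that fires exactly when $x_i=m_i$ for all $i$, and then outputs the linear combination $\sum_{m_1,\ldots,m_k}\bigl(\Pi_i m_i\bigr)\cdot f_{(m_1,\ldots,m_k)}(x_1,\ldots,x_k)$, so all multiplication is pre-computed into constant coefficients and no variable is ever multiplied by another variable. Your construction instead cascades $k-1$ binary-multiplication gadgets, and its distinctive ingredient is the gating identity $a\cdot g=\relu(a+Mg-M)$ for $g\in\{0,1\}$ and $0\le a\le M$, which the paper never uses: this lets you multiply a \emph{variable} bounded integer by a $\{0,1\}$ gate, rather than only scaling indicators by constants. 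Both arguments hinge on exactly the same two facts (integrality of the inputs, which makes \relu thresholding give exact indicators, and boundedness, which is what makes your gate clip correctly and what makes the paper's enumeration finite), so neither is more general; the trade-off is size versus simplicity. The paper's network is shallow but has width on the order of $(N+1)^k$, exponential in $k$, whereas yours has only $O(kN)$ units at depth $O(k)$ (at the price of weights growing like $N^{k}$), so your construction is substantially more economical; the paper's is shorter to state and verify, which is all that is needed since the lemma makes no size claim. One minor point to make explicit in a polished write-up: in a feed-forward architecture the running product $a$ must be carried across the layers that compute the indicators for $x_k$, which is immediate via $\relu(a)=a$ for $a\ge 0$, and your phrase about feeding the running product into the gate implicitly assumes this.
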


\begin{proof} 
This is well-known (and holds not only for \relu but also for other common non-linear activation functions). For the sake of completeness we sketch a proof.
    First, for a fixed $m$, we can test with a \relu-FFNN, for a given natural number $x$, whether $x\geq m$. Indeed,
    $\relu(1-\relu(m-x_i))$ is 1 if this holds and 0 otherwise. Furthermore,
    the Boolean operators of conjunction 
    and negation can be implemented by \relu-FFNNs as well (cf.~the proof of Proposition~\ref{prop:GMLtoGNN}).
    It follows that
    the function
    \[f_{(m_1, \ldots, m_k)}(x_1, \ldots, x_k)=\begin{cases}1 & \text{if $x_i=m_i$ for all $i\leq k$} \\ 0 & \text{otherwise}\end{cases}\]
    can also be implemented by a \relu-FFNN.
    Using this, we can represent the 
    product $\Pi_i x_i$ by the linear expression
    \[\sum_{0\leq m_1, \ldots, m_k\leq N} \Big(\Pi_i m_i\cdot f_{(m_1,\ldots,m_k)}(x_1, \ldots, x_k)\Big)\]
    Note that the $\Pi_i m_i$ factors are viewed as constant integer coefficient here.
\end{proof}
  
\egoranktreewidth*

\begin{proof}
    For the first part of the first claim, let $dep$ be a 
    function witnessing that $(G,v)$ has 
    ego-rank $k$. We define a tree decomposition as follows:

    \begin{itemize}
        \item The nodes of the tree decomposition are (i) all nodes $w$ of the graph $G$, and (ii) all edges $(w,v)$
        of the graph $G$ satisfying $dep(w)=dep(v)$. The 
        bag associated to each $w$ is $\{w\}\cup deps(w)$ and the bag associated to each  
        edge $(w,v)$ is $\{w,v\}\cup deps(w)$. 
        \item The edges of the tree decomposition are pairs where
        one node is a node of $G$ and 
        the other is an edge of $G$ in which the node participates.
    \end{itemize}
    Note that, in this way, every edge of $G$ is indeed contained in a bag. Furthermore, the third condition in the definition of dependency functions
    guarantees that this tree decomposition is indeed a tree. 
    
        Consider any path in the tree decompositions
    of the form 
    \[w_1 ~ (w_1,w_2) ~ w_2 ~ (w_2,w_3) ~ \ldots (w_{n-1},w_n) ~ w_n\]
    where $dep(w_i)=dep(w_j)$ for all $i<j<n$, and
    hence $w_i\neq w_j$ for 
    all $i<j<n$ (because the subgraph induced by
    $dep^{-1}(dep(w_i))$ is acyclic). Suppose, now, 
    that some graph vertex $x$ belongs to the bag of
    $w_1$ as well as to the bag of $w_n$. In this case, $x$ must belong to $deps(w_1)$ and to $deps(w_n)$ and hence
     it belongs to the bag of each node on the path. A similar argument applies for 
     paths that start or end with an edge.
     This shows that the constructed tree decomposition is indeed a valid tree-decomposition.
    Moreover, the maximal bag size 
    is $k+2$. Therefore, the tree-width of 
    $G$ is at most $k+1$.

    For the second part of the first claim, it suffices to
    choose an arbitrary enumeration $v_1, v_2, \ldots v_n$ of the 
    nodes of $G$, where $v_1=v$, and 
    set $dep(v_1)=\bot$ and $dep(v_{i+1})=v_i$.

    The second claim follows immediately from the definition of ego-rank.

    For the third claim, it suffices to 
    take $dep(v)=\bot$
    and $dep(v')=v$ for all $v'\neq v$.
\end{proof}

The next theorem is not stated in the body of the paper, but it's a special case of Theorem~\ref{them:egorank}(1) below, and it serves as a warming up towards the proof of Theorem~\ref{them:egorank}.

\begin{restatable}{theorem}{thmAcyclic}
\label{thm:acyclic}
    Let $\mathfrak{F}$ be any finite set of acyclic rooted graphs. There is a 
    $(|P|,|P|+|\mathfrak{F}|)$\tdash\GNN $\calA$ such that, for all pointed
    graphs $(G,v)$,
    $\run_\calA(G)(v)=\emb_G(v)\oplus \hom(\mathfrak{F},(G,v))$. The \GNN in question uses multiplication in the combination function.
\end{restatable}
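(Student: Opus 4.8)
The plan is to reduce to a single tree and then exploit the standard product-over-subtrees recursion for homomorphism counts, which corresponds exactly to message passing with a multiplicative combination step. First I would reduce to the single-structure case: by Lemma~\ref{lem:GNN_can_be_copied} it suffices to construct, for each $(F,u)\in\mathfrak{F}$ separately, a \GNN computing $\emb_G(v)\oplus\hom((F,u),(G,v))$ and then concatenate the outputs. So fix one acyclic rooted graph $(F,u)$; since it is acyclic and rooted, it is a tree rooted at $u$, and for every node $w\in V_F$ I write $(F_w,w)$ for the subtree of $F$ rooted at $w$.

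The key identity is the factorization of tree-homomorphism counts: if $w$ has children $c_1,\dots,c_m$ in $F$, then for every node $v$ of $G$,
\begin{equation*}
\hom((F_w,w),(G,v)) = \indicator{\lab_F(w)\subseteq\lab_G(v)}\cdot\prod_{i=1}^{m}\Big(\sum_{v'\,:\,(v,v')\in E}\hom((F_{c_i},c_i),(G,v'))\Big),
\end{equation*}
because the distinct child subtrees are vertex-disjoint and hence map independently, while each child must be sent to a neighbour of $h(w)=v$.

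I would then implement this recursion by a bottom-up sweep. The \GNN maintains, alongside the preserved label encoding $\emb_G(v)$, one coordinate per node $w\in V_F$ intended to hold $\hom((F_w,w),(G,v))$; leaf coordinates are initialized to the indicator $\indicator{\lab_F(w)\subseteq\lab_G(v)}$, computable from $\emb_G(v)$ by a \relu-FFNN as in the proof of Proposition~\ref{prop:GMLtoGNN} (folded into the first layer). The number of message-passing rounds equals the height of $F$. At round $t$, using Sum aggregation, each node $v$ receives in the coordinate for a child $c_i$ exactly the inner sum $\sum_{v':(v,v')\in E}\hom((F_{c_i},c_i),(G,v'))$; the combination function then, for every $w$ of height $t$, multiplies together the aggregated child-coordinates (this is where multiplication in the combination function is used) and multiplies by the label indicator, while for all other $w$ it copies the node's own current coordinate and passes the label block through unchanged. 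After $\mathrm{height}(F)$ rounds the root coordinate holds $\hom((F,u),(G,v))$, and a final linear read-off yields $\emb_G(v)\oplus\hom((F,u),(G,v))$.

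The main obstacle is the bookkeeping: verifying by induction on $t$ that the height-$t$ update is simultaneously correct for all $F$-nodes of height $t$ while leaving every other coordinate untouched, and checking that the fixed tree structure of $F$ can be compiled into the finitely many combination functions so that each reads off precisely its children's aggregated coordinates. Since the degree of $G$ is unbounded, the child-sums can be arbitrarily large, so genuine multiplication is needed rather than the bounded-range \relu emulation of Lemma~\ref{lem:relu-multiply} — which is exactly the allowance granted in the statement.
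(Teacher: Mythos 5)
Your proof is correct and follows essentially the same route as the paper: both rest on the product-over-child-subtrees factorization of tree homomorphism counts, implemented by Sum aggregation over neighbours and a multiplicative combination step, with the label indicator handled as in Proposition~\ref{prop:GMLtoGNN}. The only (immaterial) differences are bookkeeping: the paper closes $\mathfrak{F}$ under immediate subtrees and recomputes every coordinate at every layer (the values stabilize once the layer index exceeds the subtree depth), whereas you keep one coordinate per node of a single tree, stage the updates by height, and stitch the trees of $\mathfrak{F}$ together afterwards via Lemma~\ref{lem:GNN_can_be_copied}.
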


\begin{proof}
In what follows, we will refer to 
acyclic rooted graphs $(F,u)$ also as \emph{trees}, where we think of $u$ as the root of the tree.
By an \emph{immediate subtree} of a tree $(F,u)$
we will mean a rooted graph $(F',u')$
where $u'$ is a neighbor of $u$
and where $F'$ is the induced subgraph of $F$ consisting of all nodes whose shortest path to $u$ contains $u'$.
We will write $(F,u)\Rightarrow (F',u')$ to indicate that $(F',u')$ is an immediate subtree of $(F,u)$.
By the \emph{depth} of a tree $(F,u)$ we will mean the maximum, over all nodes $u'$ of $F$, of the 
distance from $u$ to $u'$.
Note that 
$(F,u)$ has depth zero if and only if it has no immediate subtrees, and that 
the depth of an immediate subtree of $(F,u)$ is always 
strictly smaller than the depth of $(F,u)$.

We may assume without loss of generality that $\mathfrak{F}$ 
is closed under taking immediate subtrees. The general case then follows by adding one additional layer on top that projects the resulting embedding vectors to a subset of $\mathfrak{F}$.

Let $|\mathfrak{F}|=\{(F_1,u_1), \ldots, (F_k,u_k)\}$
and let $L$ be the maximal depth of a rooted graph in $\mathfrak{F}$.
Let 
\[\calA = ((\COM_i)_{i=1\ldots L},(\AGG_i)_{i=1\ldots L})\]
where
\begin{itemize}
\item $\COM_1:\mathbb{R}^{2|P|}\to \mathbb{R}^{|P|+k}$  given by $\COM_1(x_1,\ldots, x_{|P|},z_1,\ldots, z_{|P|})=(y_1, \ldots, y_{|P|+k})$
with $y_i=x_i$ for $i\leq |P|$, and with
\[y_{|P|+i}=\begin{cases} 1 & \text{if $F_i$ has depth 0 and $x_j=1$ for each $p_j\in\lab^{F_i}(u_i)$} \\ 0 & \text{otherwise}\end{cases}\] 
    \item For $i>1$,  $\COM_i:\mathbb{R}^{2(|P|+k)}\to\mathbb{R}^{|P|+k}$ given by $\COM_i(x_1, \ldots, x_{|P|+k}, z_1, \ldots, z_{|P|+k})=(y_1, \ldots, y_{|P|+k})$ with
    $y_i=x_i$ for $i\leq |P|$, and with 
    \[y_{|P|+i} = \begin{cases} \displaystyle\prod_{\text{$F_j$ with $F_i\Rightarrow F_j$}}(z_{|P|+j}) & \text{if $x_\ell=1$ for each $p_\ell\in\lab^{F_i}(u_i)$} \\ 0 & \text{otherwise}\end{cases}\] 
    \item Each $\AGG_i$ is (pointwise) sum. 
\end{itemize}
It is not difficult to see that $\COM_i$ can be implemented by a FFNN using \relu and multiplication.

It follows from the above construction, and by induction on $d$, that, for all $d\geq 0$ and 
for all $(F_j,u_j)\in\mathfrak{F}$ of depth $d$, 
$\emb_G^{i}(v)(|P|+j)=\hom((F_j,u_j),(G,v))$ for $i> d$. Furthermore,  it is immediately
clear from the construction that
$\emb_G^i(v)(j)=\emb_G(v)(j)$ for all $j\leq |P|$.
\end{proof}

\thmmainegorank*
\begin{proof}
    We prove the first statement. The proof of the second statement is identical, except that we can replace the use of multiplication by a \relu-FFNN due to the 
    fact that the numbers being multiplied are 
    bounded by a constant (cf.~Lemma~\ref{lem:max-num-homomorphisms} and Lemma~\ref{lem:relu-multiply}).
    
    We may assume that $\mathfrak{F}$ consists of a single rooted graph 
    $(F,u)$. Let $dep$ be the dependency
    function witnessing the fact that
    $(F,u)$ has ego-rank $d$. By Lemma~\ref{lem:ego-rank-lemma}, 
    we may assume that $dep$ is well-founded, and that for each node $w$, if
    $dep^{-1}(w)$ is non-empty, then 
    each connected component of 
    the subgraph induced by $dep^{-1}(w)$
    contains a neighbor (in the original graph $F$) of $w$.

    By the \emph{dependency depth} of a node $w$ of $F$, we will mean the largest number $\ell$ for which it is the case that $w=dep^\ell(w')$ for some $w'$. 
    Note that this is a finite number.

    
    For a node $w$, we will denote
    by $F_w$ the subgraph of $F$
    induced by the set of nodes
    $deps^{-1}(w)\cup\{w\}\cup deps(w)$,
    where $deps^{-1}(w)$ is the set of
    nodes $w'$ for which $w\in deps(w)$. 
    
    We will now prove the following claim:
\begin{itemize}
    \item[(*)] For each node $w$ of
    dependency depth $\ell\geq1$ with $deps(w)=\{w_1, \ldots, w_k\}$,
    there is a $(|P|+k+1,1)$-\HEGNN $\calA_w$ of nesting depth $\ell-1$
    such that for all graphs $G$ with vertices $V_G$ and maps $h: \{w\} \cup deps(w)\to V_G$,
    \[\run_\calA(G,\emb_G^{+h})(h(w))=\hom_{h}(F_w,G)\]
    where
    $\emb_G^{+h}=\{v':\emb_G(v')\oplus \langle \delta_{v'h(w_1)}, \ldots, \delta_{v'h(w_k)}, \delta_{v'h(w)}\rangle\mid v'\in V_G\}$.
\end{itemize}
Recall that $\hom_h(F_w,G)$
denotes the number of homomorphisms from $F_w$ to $G$ extending the partial function $h$. Also, note that
the embedding vectors of
$\emb_G^{+h}$, 
by construction, include features that uniquely mark the $h$-image of each $w'\in  deps(w)$ as well as $w$.

\begin{figure}
\centering
\begin{tikzpicture}[x=0.75pt,y=0.75pt,yscale=-1,xscale=1]

\draw  [fill={rgb, 255:red, 193; green, 193; blue, 193 }  ,fill opacity=1 ] (296,254) .. controls (296,242.95) and (311.67,234) .. (331,234) .. controls (350.33,234) and (366,242.95) .. (366,254) .. controls (366,265.05) and (350.33,274) .. (331,274) .. controls (311.67,274) and (296,265.05) .. (296,254) -- cycle ;
\draw  [fill={rgb, 255:red, 193; green, 193; blue, 193 }  ,fill opacity=1 ] (210.5,162) .. controls (210.5,137.7) and (260.2,118) .. (321.5,118) .. controls (382.8,118) and (432.5,137.7) .. (432.5,162) .. controls (432.5,186.3) and (382.8,206) .. (321.5,206) .. controls (260.2,206) and (210.5,186.3) .. (210.5,162) -- cycle ;
\draw  [fill={rgb, 255:red, 0; green, 0; blue, 0 }  ,fill opacity=1 ] (324,218.5) .. controls (324,215.46) and (326.57,213) .. (329.75,213) .. controls (332.93,213) and (335.5,215.46) .. (335.5,218.5) .. controls (335.5,221.54) and (332.93,224) .. (329.75,224) .. controls (326.57,224) and (324,221.54) .. (324,218.5) -- cycle ;
\draw  [fill={rgb, 255:red, 0; green, 0; blue, 0 }  ,fill opacity=1 ] (270,179.5) .. controls (270,176.46) and (272.57,174) .. (275.75,174) .. controls (278.93,174) and (281.5,176.46) .. (281.5,179.5) .. controls (281.5,182.54) and (278.93,185) .. (275.75,185) .. controls (272.57,185) and (270,182.54) .. (270,179.5) -- cycle ;
\draw  [fill={rgb, 255:red, 0; green, 0; blue, 0 }  ,fill opacity=1 ] (374,184.5) .. controls (374,181.46) and (376.57,179) .. (379.75,179) .. controls (382.93,179) and (385.5,181.46) .. (385.5,184.5) .. controls (385.5,187.54) and (382.93,190) .. (379.75,190) .. controls (376.57,190) and (374,187.54) .. (374,184.5) -- cycle ;
\draw  [fill={rgb, 255:red, 0; green, 0; blue, 0 }  ,fill opacity=1 ] (241,145.5) .. controls (241,142.46) and (243.57,140) .. (246.75,140) .. controls (249.93,140) and (252.5,142.46) .. (252.5,145.5) .. controls (252.5,148.54) and (249.93,151) .. (246.75,151) .. controls (243.57,151) and (241,148.54) .. (241,145.5) -- cycle ;
\draw  [fill={rgb, 255:red, 0; green, 0; blue, 0 }  ,fill opacity=1 ] (290.5,136.5) .. controls (290.5,133.46) and (293.07,131) .. (296.25,131) .. controls (299.43,131) and (302,133.46) .. (302,136.5) .. controls (302,139.54) and (299.43,142) .. (296.25,142) .. controls (293.07,142) and (290.5,139.54) .. (290.5,136.5) -- cycle ;
\draw  [fill={rgb, 255:red, 0; green, 0; blue, 0 }  ,fill opacity=1 ] (339.25,141) .. controls (339.25,137.96) and (341.82,135.5) .. (345,135.5) .. controls (348.18,135.5) and (350.75,137.96) .. (350.75,141) .. controls (350.75,144.04) and (348.18,146.5) .. (345,146.5) .. controls (341.82,146.5) and (339.25,144.04) .. (339.25,141) -- cycle ;
\draw    (275.75,179.5) -- (329.75,218.5) ;
\draw    (379.75,184.5) -- (329.75,218.5) ;
\draw    (246.75,145.5) -- (275.75,179.5) ;
\draw    (345,141) -- (331.25,218) ;
\draw    (296.25,136.5) -- (275.75,179.5) ;
\draw  [fill={rgb, 255:red, 193; green, 193; blue, 193 }  ,fill opacity=1 ] (148,121) .. controls (148,109.95) and (163.67,101) .. (183,101) .. controls (202.33,101) and (218,109.95) .. (218,121) .. controls (218,132.05) and (202.33,141) .. (183,141) .. controls (163.67,141) and (148,132.05) .. (148,121) -- cycle ;
\draw  [fill={rgb, 255:red, 193; green, 193; blue, 193 }  ,fill opacity=1 ] (129,180) .. controls (129,168.95) and (144.67,160) .. (164,160) .. controls (183.33,160) and (199,168.95) .. (199,180) .. controls (199,191.05) and (183.33,200) .. (164,200) .. controls (144.67,200) and (129,191.05) .. (129,180) -- cycle ;
\draw  [fill={rgb, 255:red, 193; green, 193; blue, 193 }  ,fill opacity=1 ] (219,90) .. controls (219,78.95) and (234.67,70) .. (254,70) .. controls (273.33,70) and (289,78.95) .. (289,90) .. controls (289,101.05) and (273.33,110) .. (254,110) .. controls (234.67,110) and (219,101.05) .. (219,90) -- cycle ;
\draw  [fill={rgb, 255:red, 193; green, 193; blue, 193 }  ,fill opacity=1 ] (306,85) .. controls (306,73.95) and (321.67,65) .. (341,65) .. controls (360.33,65) and (376,73.95) .. (376,85) .. controls (376,96.05) and (360.33,105) .. (341,105) .. controls (321.67,105) and (306,96.05) .. (306,85) -- cycle ;
\draw    (329.75,218.5) -- (311.25,249.5) ;
\draw    (329.75,218.5) -- (328.75,252) ;
\draw    (329.75,218.5) -- (345.75,250) ;

\draw    (346.25,139) -- (327.75,89.03) ;
\draw    (346.25,139) -- (345.25,85) ;
\draw    (346.25,139) -- (362.25,88.22) ;

\draw    (296.75,136) -- (234.76,86.03) ;
\draw    (296.75,136) -- (248.75,82) ;
\draw    (296.75,136) -- (268.56,85.22) ;

\draw    (247.32,145.85) -- (168.78,132.8) ;
\draw    (247.32,145.85) -- (178.98,122.41) ;
\draw    (247.32,145.85) -- (197.81,115.49) ;

\draw    (276.27,179.77) -- (164.5,194.57) ;
\draw    (276.27,179.77) -- (167.01,182.12) ;
\draw    (276.27,179.77) -- (183.82,169.81) ;

\draw (341.5,212) node [anchor=north west][inner sep=0.75pt]   [align=left] {$\displaystyle w$};
\draw (372,237.5) node [anchor=north west][inner sep=0.75pt]   [align=left] {$\displaystyle deps( w)$};
\draw (47.5,168) node [anchor=north west][inner sep=0.75pt]   [align=left] {$\displaystyle deps^{-1}( w_{1})$};
\draw (67,94.5) node [anchor=north west][inner sep=0.75pt]   [align=left] {$\displaystyle deps^{-1}( w_{2})$};
\draw (151,51.5) node [anchor=north west][inner sep=0.75pt]   [align=left] {$\displaystyle deps^{-1}( w_{3})$};
\draw (277.5,37.5) node [anchor=north west][inner sep=0.75pt]   [align=left] {$\displaystyle deps^{-1}( w_{4})$};
\draw (381.5,158.5) node [anchor=north west][inner sep=0.75pt]   [align=left] {$\displaystyle \dotsc $};
\draw (400,100) node [anchor=north west][inner sep=0.75pt]   [align=left] {$\displaystyle \dotsc $};
\draw (303.75,133) node [anchor=north west][inner sep=0.75pt]   [align=left] {$\displaystyle w_{3}$};
\draw (256,134.5) node [anchor=north west][inner sep=0.75pt]   [align=left] {$\displaystyle w_{2}$};
\draw (286.5,167.5) node [anchor=north west][inner sep=0.75pt]   [align=left] {$\displaystyle w_{1}$};
\draw (355,129) node [anchor=north west][inner sep=0.75pt]   [align=left] {$\displaystyle w_{4}$};
\draw (437.5,137) node [anchor=north west][inner sep=0.75pt]   [align=left] {$\displaystyle dep^{-1}( w)$};
\end{tikzpicture}
\caption{Decomposition of the subgraph $F_w$ induced by $deps(w)\cup\{w\}\cup deps^{-1}(w)$ when $w$ has dependency depth greater than 1. The gray circles are disjoint. In addition to the edges drawn in the picture, there may also be additional edges connecting nodes in $deps^{-1}(w)$ to nodes in $\{w\}\cup deps(w)$. We omitted such edges from the drawing in order to not clutter the picture.}
\label{fig:decomposition}
\end{figure}

We will first prove (*) by induction on 
  the dependency depth $\ell\geq1$ of $w$, and then show that it implies the main statement of our theorem.

  The subgraph $F_w$ can be decomposed as described in Figure~\ref{fig:decomposition}. 
  By Lemma~\ref{lem:ego-rank-lemma}, each connected component of the subgraph induced by $dep^{-1}(w)$ of $F$ includes at least one  neighbor of $w$. Let $(F',w)$ be the rooted tree that consists of the subgraph induced by $\{w\} \cup dep^{-1}(w)$, 
  after removing, for each connected component of $dep^{-1}(w)$, all but one (arbitrarily chosen) connecting edge to $w$. 
  Note that, as explained in the caption of Figure~\ref{fig:decomposition}, there may be more than one edge between $w$ and a given connected component of $dep^{-1}(w)$, but we only keep one in order to ensure that $F'$ is a tree rooted at $w$.
  We refer to the edges connecting $w$ to nodes in $dep^{-1}(w)$ that we did not add, as well as edges connecting nodes in $deps(w)$ to nodes in $dep^{-1}(w)$ 
  as ``back-edges''. 

  Now, by construction,
  for 
  every function $h:\{w\}\cup deps(w)\to V_G$, we 
  have that
  \[
  \hom_h(F_w,G) = \sum_{\text{homomorphisms $f:(F',w)\to (G,h(w))$}} ~~~~\prod_{w_i\in dep^{-1}(w)} \hom_{h\cup\{(w_i,f(w_i))\}}(F_{w_i},G)
  \]
Note that the fact that we omitted the back-edges
from $F'$ does not impact the above equation. Indeed, 
if a function $f:(F',w)\to (G,h(w))$ fails to preserve a back-edge 
it will simply not extend to a homomorphism from $F_{w_i}$ to $G$, and hence will not contribute to the above sum.

Now if $\ell=1$, $\hom_{h\cup\{w_i,f(w_i)\}}(F_{w_i},G)$ is either $1$ or $0$ since every vertex of $F_{w_i}$ is in the domain of $h\cup\{(w_i,f(w_i))$. Since all nodes in the range of this function are uniquely marked, this can be computed by a $(|P|+k+1,1)$-
\GNN $\calC_{w_i}$, i.e. a \HEGNN of nesting depth $0$. More precisely, for each $v' \in G$:
  \[
    \run_{\calC_{w_i}}(G,\emb_G^{+h\cup\{(w_i,v')\}})(v')=hom_{h\cup\{(w_i,v')\}}(F_{w_i},G)
  \]
These can be combined using 
  Lemma~\ref{lem:GNN_can_be_copied} into a single 
  \GNN $\mathcal{C}$
  computing their concatenation. We now add layers, exactly as in the proof of Theorem~\ref{thm:acyclic}
  that run through the 
  tree $(F',w)$ and aggregate the respective counts according to the above equation to compute $\hom_h(F_w,G)$.   

If $l > 1$ we can apply the induction hypothesis (*) 
  to obtain a $(|P|+k+2,1)$-\HEGNN $\calB_{w_i}$ of nesting depth at most $\ell-2$ for each
  $w_i\in dep^{-1}(w)$,
  calculating the corresponding factor in the above equation. Now for each $v' \in G$:
  \[
    \run_{\calB_{w_i}}(G,\emb_G^{+h\cup\{(w_i,v')\}})(v')=hom_{h\cup\{(w_i,v')\}}(F_{w_i},G)
  \]
  These can be combined using 
  Lemma~\ref{cor:HE-GNN_can_be_copied} into a single 
  \HEGNN $(\calB, \calC)$ of nesting depth $\ell-2$
  computing their concatenation. Now similarly to the case for $\ell=1$, we add layers to $\calC$
  that run through the 
  tree $(F',w)$ and aggregate the respective counts according to the above equation and call the result $\calC'$. Let $\mathcal{I}$ be a trivial \GNN. Then $\calA=((\calB,\calC'),\mathcal{I})$ is a $(|P|+k+1,1)$-\HEGNN of nesting depth $\ell-1$ that 
  computes $\hom_h(F_w,G)$, 
  concluding the proof by induction of (*).
  
  
 
 Finally, we must show that (*) implies the main 
  statement of our theorem. Let $(F',u)$ be the induced subgraph of $F$ with nodes in $dep^{-1}(\bot)$. Then again:
  \[
  \hom((F,u),(G,v)) = \sum_{\text{homomorphisms $f:(F',u)\to (G,v)$}}~~~~\prod_{w_i\in F'} \hom_{\{(w_i,f(w_i))\}}(F_{w_i},G)
  \]
    The argument now is very similar as the one we used in the inductive step. Since each $w_i \in F'$ has dependency depth at most $d$, by (*) and theorem \ref{thm:acyclic}  we obtain for each $w_i\in F'$, a
  $(|P|+1,1)$-\HEGNN$\mathcal{B}_{w_i}$ of nesting depth $d-1$
  computing the corresponding factor in the above equation. These can be combined using 
  Lemma~\ref{cor:HE-GNN_can_be_copied} into a single $(|P|+1,|F'|)$-\HEGNN $(\calB, \calC)$ of nesting depth $d-1$
  computing their concatenation. We again add layers to $\calC$ as in the proof of Theorem~\ref{thm:acyclic}
  that run through the 
  tree $(F',u)$ and aggregate the respective counts according to the above equation, and call this \GNN $\calC'$. Then for trivial \GNN $\mathcal{I}$, It follows 
  that $\calA = ((\calB,\calC'),\mathcal{I})$ 
  is a $(|P|,1)$-\HEGNN of nesting depth $d$ that 
  computes $\hom((F,u),(G,v))$. 
\end{proof}

\subsection*{5.3. Higher order GNNs}

\thmWLstrongerthanNEGNN*
\begin{proof}
    Let $\calA$ be a \HEGNN with depth $d$  such that $\cls_\calA(G,v) \neq \cls_\calA(G',v')$. By theorem \ref{thm:HEGNNvsGMLdown} there exists a $\GMLone{\down^d}$-sentence $\phi$ such that $\cls_\phi(G,v) \neq \cls_\phi(G',v')$.

    $\phi$ is equivalent to a sentence that uses at most $d$ variables $x_i$. Hence there exists an equivalent formula $\psi$ in the $d+2$-variable fragment of first-order logic with counting quantifiers (cf.~Table~\ref{tab:standard-translation}). By theorem \ref{thm:k-GNN=C^k} there then exists a $(d+2)$\tdash\GNN $\calB$ such that $\cls_\calB(G,v) \neq \cls_\calB(G',v')$.
\end{proof}

We apply a lemma by Morris et al. \cite{morris2020weisfeiler}, which was slightly adjusted by Qian et al. \cite{qian2022ordered}.
\begin{definition}
For a graph $G = (V,E,\lab)$, $S \subseteq V$ forms a \emph{distance 2 clique} if any two nodes in $S$ are connected by a path of length $2$. We say $S$ is \emph{colorful} if for all $v, v' \in S$, $\lab(v) = \lab(v')$ iff $v = v'$.
\end{definition}

\begin{lemma}
\label{lem:WL_indistinguishable_d2_clique_graphs}
For $d \in \mathbb{N}$ there exist graphs $G_d, H_d$ such that:
\begin{enumerate}
    \item $G_d \equiv_{(d-1)\tdash\WL} H_d$
    \item There exists a colorful distance $2$ clique of size $d+1$ in $G_d$.
    \item There does not exist a colorful distance $2$ clique of size $d+1$ in $H_d$.
\end{enumerate}
\end{lemma}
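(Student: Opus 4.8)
The plan is to translate condition~(1) into a combinatorial game and to read conditions~(2)--(3) off a local gadget that sits exactly one ``variable'' beyond the reach of that game. By Theorem~\ref{thm:k-GNN=C^k} together with the standard shift $j\tdash\WL \leftrightarrow \textrm{\sf C}^{j+1}$, we have $G_d \equiv_{(d-1)\tdash\WL} H_d$ iff $G_d$ and $H_d$ agree on all sentences of $\textrm{\sf C}^{d}$, the $d$-variable fragment of first-order logic with counting, iff Duplicator wins the bijective $d$-pebble game on $(G_d,H_d)$. So I would prove~(1) by exhibiting a winning Duplicator strategy with $d$ pebbles, and prove~(2)--(3) by noting that a colourful distance-$2$ clique of size $d+1$ is a substructure requiring $d+1$ simultaneously quantified variables to pin down, hence invisible to $\textrm{\sf C}^{d}$ yet exactly the feature separating $G_d$ from $H_d$ (the degenerate small cases, where $(d-1)\tdash\WL$ is trivial, are handled by hand).

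For the construction I would encode the distance-$2$ relation by an incidence (bipartite) graph, so as never to reason about a ``distance-$2$'' graph directly. Take $d+1$ colour classes $V_1,\dots,V_{d+1}$, each with its own label, together with a set of degree-two \emph{connector} vertices carrying a common auxiliary label, each connector being the midpoint of a subdivided edge joining two class vertices. Since class vertices are pairwise non-adjacent and connectors have degree two, two vertices lie on a common length-$2$ path iff they are class vertices sharing a connector; consequently every colourful set of size $d+1$ is a transversal of the $d+1$ classes, and a colourful distance-$2$ clique is precisely a colourful clique in the ``core'' graph on the class vertices. It then suffices to choose core graphs $\mathcal{C}_{G}$, $\mathcal{C}_{H}$ that are $(d-1)\tdash\WL$-indistinguishable while $\mathcal{C}_{G}$ carries a colourful $(d+1)$-clique and $\mathcal{C}_{H}$ carries none. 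A Cai--F\"urer--Immerman-style pair over a base structure whose cop number exceeds $d$ is the natural source: decorating it with the colour classes and placing the ``twist'' so that the transversal clique survives on one side and is broken on the other yields the required cores, after which conditions~(2) and~(3) hold by inspection of the connector pattern.

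The real work, and the step I expect to be the main obstacle, is verifying~(1): I would give Duplicator's strategy directly on the incidence graphs $G_d$ and $H_d$ (the subdivisions of the cores), avoiding any general subdivision-transfer theorem for WL. With only $d$ pebbles on the board Spoiler can never mark representatives of all $d+1$ classes at once, so Duplicator can maintain a colour- and incidence-preserving bijection obtained by sliding the CFI twist into a region untouched by the at most $d$ pebbled vertices. The delicate points are (i) that a pebble placed on a connector vertex, which only partially fixes a subdivided edge, still admits a consistent bijective response, and (ii) that the $d+1$ distinct labels hand Spoiler no shortcut, i.e.\ the maintained partial map is genuinely a partial isomorphism after every placement. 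Both reduce to the invariance of the underlying base structure --- high symmetry, equivalently large cop number --- which is the heart of the CFI argument; once this strategy is in place, conditions~(2)--(3) follow by construction and the lemma is complete.
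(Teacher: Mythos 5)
The first thing to note is that the paper does not prove this lemma at all: it is imported verbatim from Morris et al.~\cite{morris2020weisfeiler}, as adjusted by Qian et al.~\cite{qian2022ordered} (the paper says exactly this just before stating it). So there is no in-paper proof to compare against; what you have written is a blind reconstruction of the CFI-based argument in those cited works. Much of your scaffolding is sound and matches that literature: the translation chain $(d-1)\tdash\WL \leftrightarrow \textrm{\sf C}^d \leftrightarrow$ bijective $d$-pebble game is correct and consistent with the paper's indexing conventions (cf.\ Theorems~\ref{thm:k-GNN=C^k} and~\ref{thm:k-WL>=k-NEGNN}), and your connector gadget does what you claim: since every edge of the subdivided graph joins a class vertex to a connector, the graph is bipartite between the two kinds, so length-2 paths never join a class vertex to a connector, all connectors share one label, and hence any colorful distance-2 clique of size $d+1\geq 2$ is forced to be a transversal of the classes that is a clique in the core.

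That said, as a proof the proposal has two genuine gaps, both at the points you yourself flag as ``the real work.'' First, condition~(1) is never actually established: ``sliding the CFI twist into a region untouched by the at most $d$ pebbled vertices'' \emph{is} the Cai--F\"urer--Immerman theorem, and it must be re-proven for your decorated, subdivided graphs --- in particular you must verify that Duplicator's round-by-round bijections extend consistently to connector vertices (a connector pebble pins down an unordered pair of core vertices, and Duplicator's core bijection is not an isomorphism, so this extension is not automatic), and that the base graph (e.g.\ $K_{d+1}$, whose robber game is won against $d$ cops) gives exactly the $d$-pebble bound you need. None of this is carried out. Second, condition~(3) is a universal non-existence statement --- \emph{no} colorful distance-2 clique of size $d+1$ exists anywhere in $H_d$ --- and your phrase ``placing the twist so that the transversal clique\dots is broken on the other [side]'' followed by ``by inspection of the connector pattern'' does not suffice: breaking one particular transversal is not enough, and inspection cannot rule out all $\prod_i |V_i|$ transversals. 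What is needed is the CFI parity invariant: summing, over all class pairs, the indicator of agreement at the shared edge shows that in the twisted core every transversal of the $d+1$ classes fails to be a clique (an even quantity would have to equal an odd one). This counting argument is the entire content of~(3) and never appears in your proposal.
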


\begin{table}[H]
    \caption{Translation from a \GMLone{\down}-formula $\phi$ containing variables $z_1, \ldots, z_k$ to a $\mathsf{C}^{k+2}$-formula $tr_x(\phi)$ containing variables $x,y,z_1,\ldots, z_k$.}
    \label{tab:standard-translation}
    \[\begin{array}{llllll}
        tr_x(p_i) &=& P_i(x) & tr_y(p_i) &=& P_i(x) \\
        tr_x(z_i) &=& x=z_i & tr_y(z_i) &=& y=z_i \\
        tr_x(\phi\land\psi) &=& tr_x(\phi)\land tr_x(\psi) & tr_y(\phi\land\psi) &=& tr_y(\phi)\land tr_y(\psi) \\
        tr_x(\neg\phi) &=& \neg tr_x(\phi) & tr_y(\neg\phi) &=& \neg tr_y(\phi) \\
        tr_x(\Diamond^{\geq n}\phi) &=& \exists^{\geq n} y (Rxy\land tr_y(\phi)) & tr_y(\Diamond^{\geq n}\phi) &=& \exists^{\geq n} x (Ryx \land tr_x(\phi)) \\
        tr_x(\down z_i . \phi) &=& \exists z_i(z_i=x\land tr_x(\phi) & tr_y(\down z_i . \phi) &=& \exists z_i(z_i=y\land tr_y(\phi))
    \end{array}\]
\end{table}

\thmNEGNNstrongerthanWL*

\begin{proof}
    We show there exist formula $\phi \in \GML(\down^d_{W^3})$ and node $v$ in $G_{d+1}$ such that $G_{d+1}, v \models \phi$ but $H_{d+1}, w \not\models \phi$ for all $w$ in $H_{d+1}$.

    Let $v$ be in a distance 2 colorful clique $S$ of size $d+2$, and let $\alpha_1 \dots \alpha_{d+2}$ be conjunctions of literals matching the distinct labelings of nodes in $S$. Thus $v,v'$ only satisfy the same $\alpha_i$ if $\lab(v) = \lab(v')$. Further let $G_{d+1},v \models \alpha_1$.

    We let:
    \begin{equation*}
        \phi = \downarrow x_1.W^3(\Diamond\Diamond \downarrow x_2.W^3(\Diamond\Diamond \downarrow x_3.W^3( \dots, \downarrow x_d.W^3(\xi \wedge \psi)\dots)))
    \end{equation*}

    $\xi$ ensures that all $x_i$ have distinct values matching labeling as specified by $\alpha_1, \dots \alpha_d$, and form a colorful distance $2$ clique of size $d$.
    \begin{equation*}
        \xi = \bigwedge_{1 \leq i \leq d}(@_{x_i}(\alpha_i \wedge \bigwedge_{1 \leq j \leq d, j \neq i}(
        \Diamond\Diamond x_j)))
    \end{equation*}
    $\psi$ ensures that there are two more connected vertices with labelings matching $\alpha_{d+1}, \alpha_{d+2}$ that have edges to all the $x_i$:
    \begin{equation*}
        \psi = \Diamond\Diamond(\alpha_{d+1} \wedge \bigwedge_{1 \leq i \leq d}(\Diamond\Diamond x_i) \wedge \Diamond\Diamond(\alpha_{d+2} \wedge \bigwedge_{1 \leq i \leq d}(\Diamond\Diamond x_i))
    \end{equation*}

    Now $G_{d+1},v \models \phi$ and any node in $H_{d+1}$ satisfying $\phi$ would be in a distance 2 colorful clique. Hence for all $v'$ in $H_{d+1}$:
    \begin{equation*}
    G_{d+1}, v \not\equiv_{\GML(\down^d_{W^2})} H_{d+1}, v'
    \end{equation*}
    We apply the logical characterization of \HESGNN (Theorem \ref{thm:HESGNN=GML(downarrow,W)}), the \WL characterization of higher order \GNNs and the fact that $G_{d+1} \equiv_{(d-1)\tdash\WL} H_{d+1}$ to obtain the theorem:
    \begin{equation*}
        \rho((d+1)\tdash\GNN) = \rho(d\tdash\WL) \not\subseteq \rho(\GML(\mathord{\down}^d_{W^3})) = \rho(\HESGNNdr{d}{3})
    \end{equation*}
\end{proof}

\end{document}